\renewenvironment{abstract}
  {{\centering\large\bfseries Abstract\par}\vspace{0.7ex}%
    \bgroup
       \leftskip 20pt\rightskip 20pt\small\noindent\ignorespaces}%
  {\par\egroup\vskip 0.25ex}
\newcommand{\cmark}{\text{\ding{51}}}
\newcommand{\xmark}{\text{\ding{55}}}
\numberwithin{equation}{section}
\newcommand{\calB}{\mathcal{B}}
\newcommand{\calC}{\mathcal{C}}
\newcommand{\calD}{\mathcal{D}}
\newcommand{\calF}{\mathcal{F}}
\newcommand{\calH}{\mathcal{H}}
\newcommand{\calK}{\mathcal{K}}
\newcommand{\calX}{\mathcal{X}}
\newcommand{\R}{\mathbb{R}}
\newcommand{\Rd}{\mathbb{R}^d}
\newcommand{\zeronorm}[1]{\left\lVert #1 \right\rVert_{0}}
\newcommand{\twonorm}[1]{\left\lVert #1 \right\rVert}
\newcommand{\onenorm}[1]{\left\lVert #1 \right\rVert_{1}}
\renewcommand{\abs}[1]{\left\lvert #1 \right\rvert}
\newcommand{\infnorm}[1]{\left\lVert #1 \right\rVert_{\infty}}
\newcommand{\inner}[2]{\left\langle #1, #2 \right\rangle}
\DeclareMathOperator*{\argmin}{arg\,min}
\newcommand{\EXP}{\operatorname{\mathbb{E}}} % Expectation
\renewcommand{\Pr}{\operatorname{Pr}}
\newcommand{\poly}{\operatorname{poly}}
\newcommand{\polylog}{\operatorname{polylog}}
\newcommand{\sign}{\operatorname{sign}}
\newcommand{\ind}[1]{\boldsymbol{1}_{\{#1\}}}
\theoremstyle{plain}
\newtheorem{theorem}{Theorem}
\newtheorem{lemma}[theorem]{Lemma}
\newtheorem{proposition}[theorem]{Proposition}
\newtheorem{corollary}[theorem]{Corollary}
\theoremstyle{definition}
\newtheorem{remark}[theorem]{Remark}
\newtheorem{assumption}{Assumption}
\newcommand{\refine}{\textsc{Refine}\xspace}
\newcommand{\init}{\textsc{Initialize}\xspace}
\newcommand{\oraclexy}{\mathrm{EX}}
\newcommand{\oracley}{\mathrm{EX}_y}
\newcommand{\err}{\mathrm{err}}
\newcommand{\haty}{\hat{y}}
\newcommand{\hatw}{\hat{w}}
\newcommand{\breg}{\calB_{\Phi}}
\newcommand{\optu}{\tilde{u}}
\newcommand{\wavg}{w_{\mathrm{avg}}}
\newcommand{\wsharp}{w^{\sharp}}
\newcommand{\OPT}{\mathrm{OPT}}
\newcommand{\pnorm}[1]{\norm{#1}_p}
\newcommand{\qnorm}[1]{\norm{#1}_q}
\newcommand{\citet}{\cite}
\newcommand{\citep}{\cite}
\title{On the Power of Localized Perceptron for Label-Optimal Learning of Halfspaces with Adversarial Noise}
\author{
{\bfseries Jie Shen}\\
Stevens Institute of Technology\\
\texttt{jie.shen@stevens.edu}
}
\begin{document}
\maketitle

\begin{abstract}
We study {\em online} active learning of homogeneous halfspaces in $\mathbb{R}^d$ with adversarial noise where the overall probability of a noisy label is constrained to be at most $\nu$. Our main contribution is a Perceptron-like online active learning algorithm that runs in polynomial time, and under the conditions that the marginal distribution is isotropic log-concave and $\nu = \Omega(\epsilon)$, where $\epsilon \in (0, 1)$ is the target error rate, our algorithm PAC learns the underlying halfspace with near-optimal label complexity of $\tilde{O}\big(d \cdot \polylog(\frac{1}{\epsilon})\big)$ and sample complexity of $\tilde{O}\big(\frac{d}{\epsilon} \big)$.\footnote{We use the notation $\tilde{O}(f) := O(f \cdot \log f)$, $\tilde{\Omega}(f) := \Omega(f / \log f)$, and $\tilde{\Theta}(f)$ that is between $\tilde{\Omega}(f)$ and $\tilde{O}(f)$.} Prior to this work, existing online algorithms designed for tolerating the adversarial noise are  subject to either label complexity polynomial in $\frac{1}{\epsilon}$, or suboptimal noise tolerance, or restrictive marginal distributions. With the additional prior knowledge that the underlying halfspace is $s$-sparse, we obtain attribute-efficient label complexity of $\tilde{O}\big( s \cdot \polylog(d, \frac{1}{\epsilon}) \big)$ and sample complexity of $\tilde{O}\big(\frac{s}{\epsilon} \cdot \polylog(d) \big)$. As an immediate corollary, we show that under the agnostic model where no assumption is made on the noise rate $\nu$, our active learner achieves an error rate of $O(\OPT) + \epsilon$ with the same running time and label and sample complexity, where $\OPT$ is the best possible error rate achievable by any homogeneous halfspace. %Our algorithm builds upon the Perceptron while leveraging novel localized sampling and semi-random gradient update to tolerate the adversarial noise.%Our analysis moves away from the typically used uniform convergence, which might be of independent interest. %Our algorithm builds upon the celebrated Perceptron while leveraging novel localized sampling and semi-random gradient update to tolerate the adversarial noise, which may be of independent interest. %We believe that our algorithmic design and analysis are of independent interest, and may shed light on learning halfspaces with broader noise models.
\end{abstract}

\section{Introduction}

In many practical applications, there are massive amounts of unlabeled data but labeling is expensive. This distinction has driven the study of active learning~\cite{cohn1994improving,balcan2007margin,dasgupta2011active,hanneke2014theory,awasthi2017power}, where labels are initially hidden and the learner must pay for each label it wishes to be revealed. The goal is to design querying strategies to avoid less informative labeling requests, e.g. the labels that can be inferred from previously seen samples. Parallel to active learning, online learning concerns the scenario where the learner observes a stream of samples and makes real-time model updating in order to compete with the best model obtained by seeing all the history data in a batch~\cite{rosenblatt1958perceptron,littlestone1989weighted,cesa1996worst,zinkevich2003online,shalev2012online,hazan2019introduction}. Online learning algorithms have also been broadly investigated in machine learning, and have found various successful applications owing to its potential of savings in memory cost, low computational cost per sample, and its generalization ability~\cite{cesa2004generalization,kakade2008generalization}.

In this paper, we study the important problem of active learning of homogeneous halfspaces in the online setting, where the learner observes a stream of unlabeled data and makes spot decision of whether or not to query the labels. The goal is to achieve the best of the two worlds: label efficiency from active learning and computational efficiency from online learning. In this spectrum, there are a number of early works that share the same merit with this paper. For example, \citet{freund1997selective} proposed a query-by-committee learning algorithm and \citet{dasgupta2005analysis} developed a Perceptron-like algorithm, both of which are implemented in an online fashion and enjoy a near-optimal label complexity bound of $\tilde{O}(d \log\frac{1}{\epsilon})$ where $d$ is the dimension of the instance\footnote{We will interchangeably use ``instance'' and ``unlabeled data''.} and $\epsilon \in (0, 1)$ is the target error rate. However, there are two crucial assumptions made by these works that seem too stringent:~1)~the marginal distribution over the unlabeled data is uniform on the unit sphere in $\Rd$; and 2)~there exists a perfect halfspace that incurs zero error rate with respect to the underlying distribution. In this regard, a natural question is: {\em can we design an online active learner, that provably works under a significantly more general family of marginal distributions while achieving arbitrarily small error rate without the realizability condition?} 

To be more concrete, we are interested in designing an online active learning algorithm that PAC learns some underlying halfspace~\cite{valiant1984theory} when the instances are drawn from an isotropic log-concave distribution~\cite{lovasz2007geometry} and the labels are corrupted by the adversarial noise~\cite{haussler1992decision,kearns1992toward}. It is worth mentioning that the family of isotropic log-concave distributions is a significant generalization of the uniform distribution since it includes a variety of prevalent distributions such as Gaussian, exponential, logistic. It is also known that establishing performance guarantees under this family is often technically subtle compared to that of uniform distribution due to the asymmetricity nature of log-concave distributions~\cite{vempala2010random,balcan2013active,diakonikolas2018learning}. Returning to the noise model, we note that the adversarial noise, where the adversary may choose an {\em arbitrary} joint distribution such that the unlabeled data distribution is isotropic log-concave and the overall probability of a noisy label is constrained to be at most $\nu$, is a realistic yet remarkably challenging regime, as suggested by many hardness results~\cite{feldman2006new,guruswami2009hardness,daniely2016complexity,diakonikolas2020near,balcan2020noise}.

Under different assumptions on the adversarial noise rate (which might be suboptimal), a large body of works have established PAC guarantees for online learning with adversarial noise. Unfortunately, none of them resolves the aforementioned question in full. For example, under uniform marginal distributions, Theorem~3 of \citet{kalai2005agnostic} showed that a surprisingly simple averaging scheme already is able to tolerate noise rate $\nu = \tilde{\Omega}(\epsilon)$, though the label and sample complexity are both $\tilde{O}(d^2/\epsilon^2)$.\footnote{\citet{kalai2005agnostic} analyzed two algorithms: a (batch) polynomial regression and an online averaging. Here we are referring to the online averaging approach. We defer the comparison with the former to Section~\ref{subsec:agnostic} when we are in the position to discuss the connection between adversarial noise and agnostic learning.} The same algorithm was then revisited by \citet{klivans2009learning} under isotropic log-concave distributions, with a worse noise tolerance of $\nu = \tilde{\Omega}(\epsilon^3)$. Very recently, \citet{diakonikolas2020non} proposed a novel objective function by optimizing which with projected online gradient descent, one is guaranteed to tolerate the adversarial noise of $\nu = \tilde{\Omega}(\epsilon)$. Notably, their analysis applies to marginal distributions that are more general than isotropic log-concave. Compared to these passive learning algorithms which have label and sample complexity polynomial in $\frac{1}{\epsilon}$, the work of \citet{yan2017revisiting} is more in line with this paper in the sense that they considered the active learning setting. Hence, their label complexity bound has an exponentially better dependence on $\frac{1}{\epsilon}$. However, the noise tolerance of \citet{yan2017revisiting} reads as $\nu = \tilde{\Omega}(\epsilon/\log d)$ and their analysis is applicable only to uniform distributions due to the crucial need of the symmetricity of marginal distributions. As we can see in Table~\ref{tb:comp}, none of the prior works subsumes others.

%\citet{kalai2005agnostic,klivans2009learning,diakonikolas2020non} proposed robust online algorithms but they do not enjoy label efficiency. The Perceptron-like online algorithm considered in \citet{yan2017revisiting} is label-efficient; however, their analysis is restricted to uniform marginal distributions and their noise tolerance is suboptimal compared to the label-inefficient approach of \citet{diakonikolas2020non}.

%Such performance gap between online and batch algorithms raises the second and more challenging question: {\em can we design an online learner that competes, or even matches the performance guarantees of the state-of-the-art batch algorithm?}

%We remark that if we were not taking the online setting into consideration, then the algorithms of \citet{awasthi2017power,zhang2018efficient} satisfy all the required properties:~they are label-efficient and robust to the adversarial noise under isotropic log-concave distributions. This naturally raises the second question: {\em can we design an online learner that competes, or even matches the performance guarantees of the best know batch algorithms of \citet{awasthi2017power,zhang2018efficient}?}

\begin{table*}[t]
\centering
\caption{Comparison to state-of-the-art online algorithms that are robust to adversarial noise. Prior online algorithms cannot even incorporate attribute efficiency. Even in the non-sparse case (i.e. $s = d$), our algorithm (Theorem~\ref{thm:non-sparse:informal}) has better noise tolerance and works under more general distributions than \citet{yan2017revisiting}, and has improved label and sample complexity compared to \citet{kalai2005agnostic} and \citet{diakonikolas2020non}. %\citet{zhang2018efficient}, as a batch algorithm, is listed here for reference. Our algorithm inherits the general flavors of online learning: it has much lower memory footprint and computational cost, and is able to perform real-time model updating, while also matches the statistical guarantees of \citet{zhang2018efficient} that no prior online algorithms can.
}
\label{tb:comp}

\vspace{0.1in}

\resizebox{\textwidth}{!}{

\begin{tabular}{lcccc}
\toprule
Work     & Log-concave?  & Label complexity & Sample complexity & Noise tolerance \\
\midrule

%\citet{zhang2018efficient} (batch)  & $\cmark$ & $\cmark$ & $\cmark$ & $\tilde{O}(s \cdot \polylog(d, \frac{1}{\epsilon}))$ & $\nu = \Omega(\epsilon)$\\

%\midrule

Theorem 3 of \citet{kalai2005agnostic} & $\xmark$ & $\tilde{O}(d^2/\epsilon^2)$ &  $\tilde{O}(d^2/\epsilon^2)$ & $\nu = \tilde{\Omega}(\epsilon)$\\

\citet{yan2017revisiting}      & $\xmark$  & $\tilde{O}(d \log\frac{1}{\epsilon})$ & $\tilde{O}(d/\epsilon)$  & $\nu = \tilde{\Omega}(\epsilon/ \log d)$ \\

\citet{diakonikolas2020non}   & $\cmark$  & $\tilde{O}(d/ \epsilon^4)$ & $\tilde{O}(d/ \epsilon^4)$ &$\nu = \tilde{\Omega}(\epsilon)$\\

%\cite{zhang2020efficient}  & $\cmark$  &  $\cmark$   & $\xmark$  \\

{\bfseries  This work (Theorem~\ref{thm:non-sparse:informal})}     & $\cmark$   &  $\tilde{O}(d \cdot \polylog(\frac{1}{\epsilon}))$ & $\tilde{O}({d} / {\epsilon})$  &$\nu = \Omega(\epsilon)$\\

{\bfseries  This work (Theorem~\ref{thm:sparse:informal})}     & $\cmark$   &  $\tilde{O}(s \cdot \polylog(d, \frac{1}{\epsilon}))$ & $\tilde{O}(\frac{s}{\epsilon} \cdot \polylog(d))$  &$\nu = \Omega(\epsilon)$\\
\bottomrule
\end{tabular}
}
\end{table*}

\subsection{Main results}

The main contribution of the paper is a novel online active learning algorithm that improves upon the state-of-the-art online algorithms. We  introduce a few useful notations and informally describe our main results in this section; readers are referred to Section~\ref{sec:guarantee} for a precise statement.

Let $\calC$ be the given concept class of halfspaces, $D$ be the joint distribution over $\Rd\times \{-1, 1\}$, and for any $w \in \calC$ define its error rate as $\err_D(w) = \Pr_{(x, y) \sim D}(\sign(w \cdot x) \neq y)$. Our analysis hinges on the following distributional assumptions. 

\begin{assumption}\label{as:x}
The unlabeled data distribution is isotropic log-concave, i.e. it has zero mean and unit covariance matrix, and the logarithm of its density function is concave.
\end{assumption}

\begin{assumption}\label{as:y}
There exists an underlying halfspace $u \in \calC$, such that $\err_D(u) \leq \nu$ for some noise rate $\nu \geq 0$.  %where $\nu \leq c_0 \epsilon$ for some sufficiently small absolute constant $c_0 > 0$.
\end{assumption}

Let $\epsilon \in (0, 1)$ be the target error rate given to the learner. We have the following theorem.

\begin{theorem}[Informal]\label{thm:non-sparse:informal}
If Assumptions~\ref{as:x} and \ref{as:y} are satisfied and $\nu \leq O(\epsilon)$, then there is an efficient online active learner that outputs a halfspace $\optu \in \calC$ with $\Pr_{(x, y) \sim D}(\sign(\optu \cdot x) \neq \sign(u \cdot x)) \leq \epsilon$, with label complexity bound of $\tilde{O}\big(d \cdot \polylog( \frac{1}{\epsilon})\big)$ and sample complexity bound of $\tilde{O}\big(\frac{d}{\epsilon}\big)$.
\end{theorem}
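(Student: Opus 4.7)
The plan is to establish Theorem~\ref{thm:non-sparse:informal} through a margin-based active learning meta-algorithm in which the per-phase subroutine is an online localized Perceptron. The algorithm runs in $K = O(\log \tfrac{1}{\epsilon})$ phases, maintaining at phase $k$ a unit hypothesis $w_k \in \calC$ with angular distance $\theta(w_k, u) \leq 2^{-k}$ to the unknown target $u$. Since under Assumption~\ref{as:x} the disagreement probability between two homogeneous halfspaces is $\Theta$ of their angle, the final iterate automatically satisfies $\Pr_{(x,y)\sim D}(\sign(w_K \cdot x) \neq \sign(u \cdot x)) \leq \epsilon$. The whole burden is in proving the refinement step $\theta(w_k,u)\leq 2^{-k} \Rightarrow \theta(w_{k+1},u)\leq 2^{-(k+1)}$ with high probability, which I would phrase as a correctness guarantee for a \refine subroutine.

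In phase $k$ I would choose a band width $b_k = \Theta(2^{-k})$ and let \refine draw a stream of instances from $D$, discard every $x$ with $|w_k \cdot x| > b_k$, and query labels only for survivors of the band $B_k = \{x : |w_k \cdot x| \leq b_k\}$. On each labeled example $(x,y)$ lying inside the band on which $w_k$ errs, perform a Perceptron update $w \leftarrow w + y\cdot x$ (with renormalization), iterating for $T_k = \tilde{O}(d)$ updates; concentration of the resulting empirical direction in $\Rd$ is what drives the $d$-dimensional dependence. Because isotropic log-concavity gives $\Pr(x\in B_k) = \Theta(b_k)$, obtaining $T_k$ band examples costs $\tilde{O}(d/b_k) = \tilde{O}(d\cdot 2^k)$ unlabeled samples but only $\tilde{O}(d)$ labels. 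Summing the geometric series yields total sample complexity $\tilde{O}(d/\epsilon)$ and total label complexity $\tilde{O}(d\cdot \polylog(\tfrac{1}{\epsilon}))$, matching the theorem.

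The main technical obstacle, where I expect the bulk of the work, is controlling the adversarial noise inside the Perceptron potential analysis. Taking the potential $\Phi_t = \|w_t - u\|^2$ (equivalently tracking $\theta(w_t,u)$), the per-mistake drop is governed by $\EXP\!\big[\,y\,(u\cdot x)\, \ind{x\in B_k,\ w_t \text{ errs}}\,\big]$. In the clean case the log-concave anti-concentration of $u\cdot x$ together with the fact that under $\theta(w_k,u)\leq 2^{-k}$ the mistake region within the band is geometrically thin gives a positive signal of order $\Omega(\theta_k)$ per unit band mass. Under Assumption~\ref{as:y} the adversary may corrupt a $\nu/\Pr(B_k) = O(\epsilon/b_k)$ fraction of band labels; I would bound the resulting adversarial term by combining (i) the band-probability estimate $\Pr(B_k) = \Theta(b_k)$, (ii) the log-concave first-moment bound $\EXP[|u\cdot x|\, \ind{x\in A}] \lesssim \Pr(A)\log\tfrac{1}{\Pr(A)}$ applied to the noisy subset of measure at most $\nu$, and (iii) the hypothesis $\nu \leq c\epsilon \leq c\, b_K$ with sufficiently small constant $c$. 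Together these force the noise contribution to be a tunably small fraction of the clean signal even in the last (and worst) phase, certifying a strictly positive expected potential decrease.

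The remaining steps are standard. A martingale/Freedman-style concentration argument upgrades the per-update expected decrease into a high-probability halving of $\theta(w_k,u)$ after $T_k = \tilde{O}(d)$ updates; a union bound over the $O(\log\tfrac{1}{\epsilon})$ phases yields the global high-probability guarantee. Initialization is handled by running \refine with $b_0 = \Theta(1)$ (or, equivalently, taking $w_0$ to be the average of $\tilde{O}(d)$ labeled examples) to obtain $\theta(w_0,u)\leq \pi/2$, after which the inductive argument takes over. Each Perceptron step costs $O(d)$ time, so the overall runtime is $\poly(d, \tfrac{1}{\epsilon})$, completing all of the claims in Theorem~\ref{thm:non-sparse:informal}.
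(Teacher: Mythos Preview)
Your high-level architecture matches the paper's: phase-based refinement with geometrically shrinking bands $b_k=\Theta(2^{-k})$, Perceptron-type updates on band examples, noise controlled through the ratio $\nu/\Pr(B_k)=O(\epsilon/b_k)$, initialization by averaging, and complexity obtained by summing across $K=O(\log\tfrac{1}{\epsilon})$ phases. The paper in fact obtains Theorem~\ref{thm:non-sparse:informal} as the $s=d$ corollary of its sparse result (Theorem~\ref{thm:main}).

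The per-phase mechanics differ, however, and this is where the paper's work lies. You propose a fixed two-sided band $\{|w_k\cdot x|\leq b_k\}$, vanilla Perceptron, and a direct $\|w_t-u\|^2$ potential argument. The paper instead uses (i) a \emph{one-sided, time-varying} band $\{0<\hat w_{t-1}\cdot x\leq b\}$ keyed to the current iterate, which makes the gradient \emph{semi-random} ($g_t=x_t\,\ind{y_t=-1}$, so $w_{t-1}\cdot g_t\geq 0$ deterministically and the loss term in the regret bound vanishes); (ii) projected online mirror descent onto a localized ball $\calK_k=\{w:\|w-v_{k-1}\|\leq\Theta(2^{-k}),\ \|w\|\leq 1\}$, which is what guarantees every iterate stays close enough to $u$ for the band to remain informative; and (iii) an online-to-batch conversion: rather than tracking a potential, the OMD regret inequality upper-bounds the average of a surrogate $f_{u,b}(w_{t-1})=\EXP_{D_{\hat w_{t-1},b}}[|u\cdot x|\,\ind{u\cdot x<0}]$, and a separate structural lemma shows small $f_{u,b}$ implies small angle for most $t$, after which averaging normalized iterates and hard-thresholding yields the refined halfspace. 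The adversarial term is handled by Cauchy--Schwarz against the band second moment $\EXP[(u\cdot x)^2]\leq c_1(b^2+r^2)$ (Lemma~\ref{lem:lower-correlation}), not the first-moment bound you suggest. The paper presents these choices as precisely what extends the Perceptron-in-band idea from uniform marginals (Yan et al.) to isotropic log-concave with $\nu=\Omega(\epsilon)$; your sketch omits the localization constraint (without which the fixed band need not stay aligned with the drifting $w_t$) and leaves the log-concave signal-versus-noise comparison at a level where prior work has required nontrivial new ingredients.
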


We compare with state-of-the-art online algorithms of \citet{kalai2005agnostic,yan2017revisiting,diakonikolas2020non} that are tolerant to adversarial noise. \citet{yan2017revisiting} presented an active learner and obtained analogous label and sample complexity to this work. However, their noise tolerance reads as $\nu = \tilde{\Omega}(\epsilon/\log d)$ while our algorithm is able to tolerate $\nu = \Omega(\epsilon)$; in addition, our results apply to significantly broader marginal distributions. Since \citet{kalai2005agnostic,diakonikolas2020non} considered passive learning, our active learning algorithm naturally enjoys label complexity that has exponentially better dependence on $\epsilon$. Even for the sample complexity, we obtain improved dependence on $d$ and $\epsilon$. On the other side, it is worth mentioning that all these three algorithms run faster than our algorithm. Also, the analysis of \citet{diakonikolas2020non} works under more general marginal distributions, in particular, distributions satisfying concentration, anti-concentration, anti-anti-concentration. Though our results can be generalized to their setting as well (see e.g. \citet{zhang2021improved} for the treatment), we do not pursue it in the paper for the sake of clean presentation.

In addition to the properties aforementioned, we show that our algorithm can essentially incorporate attribute efficiency~\cite{littlestone1987learning}. That is, when the concept class consists of $s$-sparse halfspaces, the obtained label and sample complexity scale as $\tilde{O}(s \cdot \polylog(d))$. This characteristic is especially useful when there is limited availability of samples, a problem that has been studied for decades in machine learning and statistics; see, e.g.~\citet{chen1998atomic,tibshirani1996regression,klivans2004toward,candes2005decoding,feldman2007attribute,plan2017high}.

We have the following result for learning sparse halfspaces.

\begin{theorem}[Informal]\label{thm:sparse:informal}
If Assumptions~\ref{as:x} and \ref{as:y} are satisfied, $\nu \leq O(\epsilon)$, and $u$ is $s$-sparse, then there is an efficient online active learner that outputs an $s$-sparse halfspace $\optu \in \calC$ with $\Pr_{(x, y) \sim D}(\sign(\optu \cdot x) \neq \sign(u \cdot x)) \leq \epsilon$, with label complexity bound of $\tilde{O}\big(s \cdot \polylog(d, \frac{1}{\epsilon})\big)$ and sample complexity bound of $\tilde{O}\big(\frac{s}{\epsilon} \cdot \polylog(d)\big)$.
\end{theorem}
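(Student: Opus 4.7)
The plan is to adapt the non-sparse online active learner underlying Theorem~\ref{thm:non-sparse:informal} in two orthogonal ways: (i)~force each iterate $w_k$ to remain $s$-sparse via a hard-thresholding (or $\ell_1$-constrained) step after every localized Perceptron update, and (ii)~replace the VC/Rademacher ingredient used for label and sample complexity by a sparsity-aware analogue. The overall phased structure is unchanged: at phase $k$ we maintain a unit-norm $s$-sparse $w_k$ with angle $\theta_k \defeq \angle(w_k, u) \leq \pi/2^k$, draw unlabeled examples, query labels only for $x$ in the margin band $\{x : \abs{w_k \cdot x} \leq b_k\}$ (with $b_k$ shrinking geometrically with $\theta_k$), and perform a variance-controlled Perceptron-type update to obtain $w_{k+1}$.

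First I would establish the sparse progress lemma: conditioned on Assumption~\ref{as:x} and $\err_D(u) \leq \nu = O(\epsilon)$, one phase starting from an $s$-sparse $w_k$ yields an $s$-sparse $w_{k+1}$ with $\theta_{k+1} \leq \theta_k/2$. The key observation is that both $w_k$ and $u$ are $s$-sparse, so the residual $w_{k+1}-u$ lives in a $2s$-sparse subspace; consequently hard-thresholding the Perceptron update to its top $s$ coordinates loses only a constant multiplicative factor in the $\ell_2$-progress, which is absorbed by a slightly smaller step size. The adversarial-noise contribution is bounded exactly as in the non-sparse case, using the log-concave band mass $\Pr_D(\abs{w_k \cdot x} \leq b_k) = \Theta(b_k)$ and the fact that the relevant update-noise inner products are controlled by $\ell_2$ norms that do not depend on the ambient dimension $d$.

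Second I would sharpen the uniform-convergence step that determines how many band samples must be drawn per phase in order for the empirical update to track its expectation. The non-sparse algorithm pays a factor of $d$ here because it takes a supremum over all halfspaces in a neighborhood of $w_k$. For $s$-sparse candidate directions the VC dimension is $O(s \log d)$, and localized Rademacher complexity over the band $\{x : \abs{w_k \cdot x} \leq b_k\}$ gives a uniform deviation rate of $\tilde{O}(\sqrt{s \log d/n})$; plugging this into the per-phase sample and label budgets of the non-sparse derivation replaces every factor of $d$ by $s \cdot \polylog(d)$. Summing the $O(\log(1/\epsilon))$ phases yields the advertised bounds $\tilde{O}(s \cdot \polylog(d, 1/\epsilon))$ labels and $\tilde{O}((s/\epsilon) \cdot \polylog(d))$ samples.

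The main obstacle will be making the sparse uniform convergence inside a log-concave margin band genuinely quantitative. The non-sparse proof relies on fairly clean $\ell_2$ concentration for quantities like $\EXP_D[(w \cdot x)\, \ind{\abs{w_k\cdot x}\leq b_k}]$ and on symmetry/anti-concentration properties of isotropic log-concave measures; the sparse version requires these to hold only over $s$-sparse $w$ but with rate $\sqrt{s \log d /n}$, and one must verify that the asymmetry of log-concave distributions does not interact badly with the $\ell_1\!-\!\ell_2$ conversion hidden in the sparsity bound. A secondary delicate point is ruling out the adversary concentrating its $\nu$-budget on a few coordinates so as to mislead the hard-thresholding step; this should follow from restricting attention to the margin band together with log-concave moment bounds on coordinate-wise noise, but it must be checked carefully phase by phase.
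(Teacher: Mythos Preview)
Your proposal rests on a misconception about where the $d$ dependence comes from, and consequently both of your proposed modifications address problems that do not arise. The paper's argument contains no VC/Rademacher uniform-convergence step and no supremum over candidate halfspaces; the non-sparse and sparse guarantees are a \emph{single} analysis via online mirror descent with regularizer $\Phi(w)=\tfrac{1}{2(p-1)}\pnorm{w-w_0}^2$ where $p=\tfrac{\ln(8d)}{\ln(8d)-1}$ and $q=\ln(8d)$. The regret inequality (Lemma~\ref{lem:omd}) yields, per phase,
\[
\frac{1}{T}\sum_{t=1}^T f_{u,b}(w_{t-1}) \;\lesssim\; \frac{\onenorm{u-w_0}^2\log d}{\alpha T} \;+\; \alpha\cdot\max_t\qnorm{g_t}^2 \;+\; (\text{noise terms}),
\]
and because both $u$ and $w_0$ are $s$-sparse with $\twonorm{u-w_0}\leq\theta$ one has $\onenorm{u-w_0}\leq\sqrt{2s}\,\theta$, while $\qnorm{g_t}\leq 2\infnorm{x_t}=O(\log\tfrac{Td}{b\delta})$ by log-concave tails. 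Balancing $\alpha$ and $T$ gives $T=\tilde{O}(s\log d)$ labels per phase. The non-sparse theorem is literally this bound with $s=d$; the only place $d$ enters is through $\onenorm{u-w_0}\leq\sqrt{d}\,\twonorm{u-w_0}$, and the sparse improvement is the trivial substitution $\sqrt{d}\mapsto\sqrt{2s}$.

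Given this, your modification (i)---hard-thresholding after every update---is both unnecessary and harmful: it would destroy the Bregman three-point identity underlying the regret bound, since the thresholded iterate need not lie in the convex constraint set $\calK$. The paper thresholds exactly once per phase, on the \emph{averaged} iterate $\bar{w}=\tfrac{1}{T}\sum_t\hat{w}_{t-1}$, using only the expansion bound $\twonorm{\calH_s(\bar{w})-u}\leq 2\twonorm{\bar{w}-u}$ (Lemma~\ref{lem:tool}); during the $T$ inner iterations the iterates are dense. Your modification (ii) and the obstacles you anticipate---sparse uniform convergence in a log-concave band, the adversary concentrating its $\nu$-budget on a few coordinates to fool thresholding---simply do not appear, because there is no empirical-process step and no per-iteration sparsification to attack.
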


Observe that Theorem~\ref{thm:non-sparse:informal} is a special case of the above by setting $s = d$. We note that the state-of-the-art online algorithms of \citet{kalai2005agnostic,yan2017revisiting,diakonikolas2020non} do not enjoy attribute efficiency~--~it is yet nontrivial for them to encompass this property. Hence we have exponentially better dependence on the dimension $d$ in label and sample complexity. %We also note that Theorem~\ref{thm:sparse:informal} matches exactly the PAC guarantees of \citet{zhang2018efficient} while our algorithm bears all benefits of online learning and our new analysis avoids unpleasant technical complication.
See Table~\ref{tb:comp} for a summary of the comparison. 

%It is worth mentioning that the only prior work that achieves PAC guarantees  (i.e. generality of marginal distributions, label and sample complexity, noise tolerance) analogous to this work is \citet{zhang2018efficient} which is, however, a batch algorithm. In other words, this work can also be thought of as presenting the first online algorithm that matches a well-established batch approach.

Finally, through an interesting observation made in \citet{awasthi2017power}, it is possible to translate our main results for the adversarial noise model to the agnostic model of \citet{haussler1992decision,kearns1992toward} where no assumption is made on the noise rate $\nu$. Let $\OPT := \min_{w \in \calC} \err_D(w)$. %We have the following:

\begin{theorem}[Informal]\label{thm:agnostic:informal}
If Assumption~\ref{as:x} is satisfied, then the algorithm tolerating the adversarial noise outputs a halfspace $\optu$ with $\err_D(\optu) \leq O(\OPT) + \epsilon$, with same label and sample complexity as in Theorem~\ref{thm:sparse:informal}.
\end{theorem}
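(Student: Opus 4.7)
The plan is to reduce the agnostic problem to the adversarial-noise problem we already solved. Let $u^\star := \argmin_{w \in \calC} \err_D(w)$ so that $\OPT = \err_D(u^\star)$. The key observation is that, viewed through the lens of $u^\star$, the agnostic distribution $D$ automatically satisfies Assumption~\ref{as:y} with noise rate $\nu = \OPT$: the $x$-marginal is isotropic log-concave by Assumption~\ref{as:x}, and by construction $\Pr_{(x,y)\sim D}(\sign(u^\star\cdot x)\ne y) = \OPT$. Thus no new algorithm is needed; only the setting of the target-error knob must be chosen correctly.

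First I would handle the easy regime. If $\OPT \le c\epsilon$, where $c$ is the constant implicit in the $\nu \le O(\epsilon)$ condition of Theorem~\ref{thm:sparse:informal}, then running the algorithm of Theorem~\ref{thm:sparse:informal} with target error $\epsilon$ returns a halfspace $\optu$ with
\[
\Pr_{(x,y)\sim D}\bigl(\sign(\optu\cdot x)\ne \sign(u^\star\cdot x)\bigr) \le \epsilon,
\]
and since the symmetric difference between the regions $\{\sign(\optu\cdot x)\ne y\}$ and $\{\sign(u^\star\cdot x)\ne y\}$ is contained in the disagreement region of $\optu$ and $u^\star$, we obtain $\err_D(\optu) \le \OPT + \epsilon \le O(\OPT)+\epsilon$. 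In the complementary regime $\OPT > c\epsilon$, I would instead invoke the algorithm with the inflated target $\epsilon^\star := \OPT/c$, so that the effective noise level still satisfies $\nu = \OPT \le O(\epsilon^\star)$; the output then has disagreement at most $\epsilon^\star$ with $u^\star$ and therefore $\err_D(\optu) \le \OPT + \epsilon^\star = O(\OPT)$.

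Since $\OPT$ is not known in advance, the calibration step is performed by a doubling search: run the adversarial-noise algorithm with each of the $O(\log(1/\epsilon))$ target values $\epsilon, 2\epsilon, 4\epsilon, \dots, 1$, collect the candidate halfspaces, and return the one with smallest empirical error on a fresh labeled validation sample of size $\tilde O(1/\epsilon)$. A standard Hoeffding-plus-union bound over the $O(\log(1/\epsilon))$ candidates shows that the selected output is within an additive $\epsilon$ of the best candidate, preserving the $O(\OPT)+\epsilon$ guarantee. Because the validation cost and the $O(\log(1/\epsilon))$ overhead from the search are both absorbed into the $\polylog$ factors, the overall label complexity $\tilde O(s\cdot\polylog(d,1/\epsilon))$ and sample complexity $\tilde O(\tfrac{s}{\epsilon}\cdot\polylog(d))$ of Theorem~\ref{thm:sparse:informal} carry over unchanged.

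The main obstacle is really the conceptual one of recognizing that Assumption~\ref{as:y} is vacuous in the adversarial model beyond bounding $\Pr(y\ne \sign(u\cdot x))$, so that any $D$ whose Bayes-optimal halfspace errs at rate $\OPT$ already fits the adversarial template; once this is in hand the proof is essentially syntactic. The only technical care needed is ensuring that the $O(\log(1/\epsilon))$ validation overhead does not disturb the $\Omega(\epsilon)$ noise-tolerance constant in Theorem~\ref{thm:sparse:informal}, which is handled by picking the hidden constant $c$ above to be slightly smaller than the tolerance guaranteed by that theorem.
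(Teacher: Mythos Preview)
Your key observation—that $D$ already satisfies Assumption~\ref{as:y} with $u=u^\star$ and $\nu=\OPT$—is exactly the paper's starting point (it simply cites Lemma~C.1 of \citet{awasthi2017power} and omits details), and your doubling search does yield the error bound $O(\OPT)+\epsilon$. The gap is in the label-complexity accounting for the validation step: selecting among $O(\log(1/\epsilon))$ candidates to additive accuracy $O(\epsilon)$ costs $\tilde\Omega(1/\epsilon)$ fresh labels even with Bernstein-type bounds (and $\tilde\Omega(1/\epsilon^2)$ with the Hoeffding bound you invoke). This is \emph{not} ``absorbed into the $\polylog$ factors'': whenever $\epsilon\ll 1/(s\cdot\polylog d)$—precisely the regime where the active-learning label bound matters—the validation cost dominates $\tilde O\bigl(s\cdot\polylog(d,1/\epsilon)\bigr)$. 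So your route proves the error guarantee but not the ``same label complexity'' clause of the theorem.

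What lets the paper keep the label bound is that no search or validation is needed: the same algorithm, run once with target $\epsilon$, already achieves $O(\OPT)+\epsilon$. Phase~$k$ of Algorithm~\ref{alg:main} only requires $\nu\le c_0 b_k$ with $b_k=\Theta(2^{-k})$, so every phase with $2^{-k}\gtrsim\OPT$ succeeds and drives $\theta(v_k,u^\star)$ down to $O(\OPT)$. In the remaining phases the halving guarantee may fail, but the shrinking constraint sets $\calK_k$ of radius $\Theta(2^{-k})$ (together with the expansion bounds for hard thresholding and normalization in Lemma~\ref{lem:tool}) force $\twonorm{v_k-v_{k-1}}=O(2^{-k})$ unconditionally; the total drift over all failing phases is therefore a geometric tail of size $O(\OPT)$. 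This localization argument is what Lemma~C.1 of \citet{awasthi2017power} packages for margin-based algorithms, and it preserves the label complexity exactly.
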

% If we only assume the isotropic log-concave marginal distribution, then the same algorithm , where $\OPT$ is the best possible error rate of $s$-sparse halfspaces.

%We consider the problem of learning homogeneous $s$-sparse halfspaces in the online setting and assume that the unlabeled data distribution is isotropic log-concave and the adversarial noise rate is $\nu = \Omega(\epsilon)$ with respect to some underlying $s$-sparse halfspace $u \in \Rd$, where $\epsilon \in (0, 1)$ is the target error rate given to the learner. We show the following; 

%We also demonstrate how to seamlessly convert the active learner to a passive learner, which enjoys near-optimal sample complexity bound of $\tilde{O}\big(\frac{s}{\epsilon} \cdot \polylog(d)\big)$. The only prior work that achieves such statistical guarantees under the same setting is \cite{zhang2018efficient}, which, however, is a batch algorithm.

\subsection{Overview of our techniques}\label{subsec:technique}

Our algorithm is inspired in part by  \citet{zhang2020efficient}. We present an overview of our techniques below, and highlight the algorithmic connection to them as well as the novelty. 

\vspace{0.1in}
\noindent{\bfseries 1) Active learning via stagewise online mirror descent.} The first ingredient in our algorithm is a novel perspective of approaching active learning of halfspaces via stagewise online learning, recently utilized by \citet{zhang2020efficient} for learning halfspaces with benign noise. In each phase, given an initial halfspace $w_0 \in \Rd$, regardless of how the samples are generated, standard regret bound established for online mirror descent with linear loss $\inner{w}{\alpha g_t}$ and $\ell_p$-norm regularizer $\Phi(w)$ implies that the produced sequence of iterates $\{w_{t-1}\}_{t=1}^T$ must satisfy the following with certainty:
\begin{equation}
\frac{1}{T}\sum_{t=1}^{T} \inner{u}{- g_t} \leq \frac{1}{T}\sum_{t=1}^{T} \inner{w_{t-1}}{- g_t}  + \frac{\breg(u; w_0)}{\alpha T} + \frac{\alpha}{T} \sum_{t=1}^T \qnorm{g_t}^2,\label{eq:omd}
\end{equation}
where $u \in \Rd$ is the underlying halfspace we aim to approximate, $\breg(\cdot; \cdot)$ denotes Bregman divergence induced by $\Phi$, and $q \in (0, 1)$ is such that $\frac{1}{p} + \frac{1}{q} = 1$. Our goal is threefold:~attribute efficiency, label efficiency, and small error rate. We will first specify $p \approx 1$ to achieve attribute efficiency which is a well-known technique in online learning~\cite{grove2001general,gentile2003robustness}. In order to reduce the error rate of the initial halfspace $w_0$ with a few label queries, we need to design suitable gradients $g_t$ and choose proper step size $\alpha$ and iteration number $T$ such that a) the right-hand side of \eqref{eq:omd} is as small as  angle $O(\theta(w_0, u))$; and b) the left-hand side is bounded from below by $\theta(\bar{w}, u)$ for certain halfspace $\bar{w}$ that depends on the sequence $\{w_{t-1}\}_{t=1}^T$. It is then possible to show that $\theta(\bar{w}, u) \leq \frac{1}{2} \cdot \theta(w_0, u)$, and we can use $\bar{w}$ as the initial iterate for the next phase of online mirror descent to reduce the distance to $u$ with geometric rate. Therefore, the crucial challenges lie in the design of $g_t$ to accommodate specific noise model and an associated sampling scheme (since $g_t$ depends on the sample). These are also the key technical differences between our work and \citet{zhang2020efficient}, which we elaborate on below.

\vspace{0.1in}
\noindent{\bfseries 2) A semi-random gradient update  for tolerating adversarial noise.} In \citet{zhang2020efficient}, the gradient $g_t$ is heavily tailored to the bounded noise condition~\cite{massart2006risk}. We find it technically hard to reuse it for the adversarial noise since it is well known that the latter is a more involved noise model that will always violate the conditions assumed in the former.\footnote{In the bounded noise model, the adversary is constrained to flip the label of each given instance with probability at most $\eta \in [0, 1/2)$, which dramatically limits its power. In the adversarial noise, nevertheless, the adversary has the freedom to choose {\em any} {joint} distribution over the instance and label space.} Therefore, we consider an alternative yet fairly natural candidate:~we choose $g_t$ as the original gradient used in the Perceptron algorithm. That is, given the currently learned halfspace $w_{t-1}$ and a new labeled sample $(x_t, y_t)$, we set $g_t = - y_t x_t \cdot \ind{ y_t \neq \sign(w_{t-1} \cdot x_t)}$. It remains to develop a plausible sampling scheme so that a) $-g_t$ will have nontrivial correlation with the underlying halfspace $u$; and b) only most informative instances are sampled for labeling. To this end, we propose a new sampling region $X_{\hatw_{t-1}, b} := \{ x \in \Rd: 0 < \hatw_{t-1} \cdot x \leq b\}$, where $\hatw_{t-1} = \frac{w_{t-1}}{\twonorm{w_{t-1}}}$. Such time-varying region is different from active learning using empirical risk minimization~\cite{balcan2007margin,awasthi2017power,zhang2018efficient} as in these works $x$ is sampled from the full band $\abs{\hatw_0 \cdot x} \leq b$. On one hand, using our sampling region leads to a linear loss $\inner{w_{t-1}}{g_t}$ at most $O(b)$, while the band used by ERM would result in a loss of $O(b\sqrt{s}\log d)$. Observe that a tighter control on the loss implies tighter upper bound in \eqref{eq:omd}. On the other hand, we discover that by restricting on querying the label of  instances in $X_{\hatw_{t-1}, b}$, we are reducing the randomness of model updating because now $g_t = x_t \cdot \ind{y_t \neq 1}$, i.e. we update the model only when the returned label $y_t = -1$. It turns out that such {\em semi-randomness} facilitates our control of the correlation between each $-g_t$ and the underlying halfspace $u$. We note that the semi-random updating rule is inspired by \citet{yan2017revisiting}, where they used a much narrower sampling region $\big\{x: \frac{b}{2\sqrt{d}} \leq \hatw_{t-1} \cdot x \leq \frac{b}{\sqrt{d}}\big\}$ and a carefully rescaled Perceptron gradient $g_t = (\hatw_{t-1} \cdot x_t) \cdot  x_t \cdot \ind{y_t \neq 1}$ to accommodate their projection-free algorithm for learning under the uniform marginal distribution. In contrast, we incorporate different sampling strategy and gradients into (projected) mirror descent for PAC learning under isotropic log-concave marginal distributions.

%However, it seems that their sampling region is too aggressive for general isotropic log-concave distributions, which will lead to much worse noise tolerance.\footnote{The noise rate within any band $\{x: a \leq \hatw_{t-1} \cdot x \leq b\}$ is $O(\frac{\nu}{b-a})$ by Lemma~\ref{lem:logconcave}. Thus, if using the band of \citet{yan2017revisiting}, we would incur a local noise rate a multiplicative factor $\sqrt{d}$ higher than the one we propose. Although in their original work, they mitigated this issue with more careful algorithmic design and obtained $\nu = \tilde{\Omega}(\epsilon/\log d)$, it is still worse than our noise tolerance.} In fact, it remains open whether their approach gives PAC guarantees under isotropic log-concave distributions. %In this work, we are considering a different sampling scheme and gradient and carrying out new analysis.

\vspace{0.1in}
\noindent{\bfseries 3) A new characterization of the correlation between gradient and the underlying halfspace.} Our last ingredient is applying localization in the concept space~\cite{awasthi2017power}. Roughly speaking, before running online mirror descent, it is possible to construct an $\ell_2$-ball where the underlying halfspace $u$ resides in. Such trust region will be serving as the convex constraint set for online minimization. Using a novel analysis, we show that this interesting observation in allusion to the dedicated design of gradients implies that under the adversarial noise model,
\begin{equation}\label{eq:lower-cor}
\EXP_{ (x_t, y_t) \sim D_{\hatw_{t-1}, b}}[ \inner{u}{-g_t} ] \geq f_{u, b}(w_{t-1}) - \beta \cdot \theta(w_0, u),
\end{equation}
where we have the potential function
\begin{equation}
f_{u, b}(w_{t-1}) := \EXP_{ (x_t, y_t) \sim D_{\hatw_{t-1}, b}}\big[ \abs{ u \cdot x_t} \cdot \ind{ u \cdot x_t < 0}\big],
\end{equation}
and $\beta > 0$ is some quantity to be controlled. We argue that the function $f_{u, b}(w_{t-1})$ serves almost as a measure of  $\theta(w_{t-1}, u)$; hence combining it with \eqref{eq:omd} we have that the average of $\theta(w_{t-1}, u)$ is upper bounded by $\frac{1}{2} \cdot \theta(w_0, u)$. This observation, in conjunction with a non-standard online-to-batch conversion, results in the desired halfspace $\bar{w}$. We note two different aspects compared to \citet{zhang2020efficient}. First, our potential function $f_{u, b}$ is slightly distinct since we are considering a smaller sampling region. Second and more importantly, when deriving the lower bound for the correlation of $u$ and $-g_t$, we carry out a more involved analysis but still incur a negative penalty $-\beta \cdot \theta(w_0, u)$ resulted from the adversarial noise model. In contrast, this term does not appear in \citet{zhang2020efficient} (since the bounded noise model they studied is more benign). Manipulating the penalty turns out to be subtle since if the factor $\beta$ is large, there is no hope to upper bound the average of $f_{u, b}(w_{t-1})$ by $\frac{1}{2} \cdot \theta(w_0, u)$. We circumvent the technical issue by showing that $\beta$ is dominated by $\nu / b$ which is small as soon as $\nu \leq c_0 \epsilon$ for sufficiently small constant $c_0$ and $b$ is carefully chosen to be greater than $\epsilon$; see Lemma~\ref{lem:lower-correlation} in the appendix.

\subsection{Related works}

Label-efficient learning has also been broadly studied since gathering high quality labels is often expensive~\cite{cohn1994improving,dasgupta2005coarse,dasgupta2011active}. The prominent approaches include disagreement-based active learning~\cite{hanneke2011rates,hanneke2014theory}, margin-based active learning~\cite{balcan2007margin,balcan2013active,awasthi2015efficient}, selective sampling~\cite{cavallanti2011learning,dekel2012selective}, and adaptive one-bit compressed sensing~\cite{zhang2014efficient,baraniuk2017exponential}. There are also a number of interesting works that appeal to extra information to mitigate the labeling cost, such as comparison~\cite{xu2017noise,kane2017active,hopkins2020noise,zeng2020learning} and search~\cite{beygelzimer2016search}. %On the algorithmic aspect, there is a line of works investigating online active learning with provable guarantees ~\cite{freund1997selective,dasgupta2005analysis,yan2017revisiting}, though they do not achieve the same optimal results as in this paper.

Adversarial noise is closely related to the agnostic model, which was studied in \citet{haussler1992decision} and then coined out by~\citet{kearns1992toward}. Under the uniform marginal distributions, \citet{kalai2005agnostic} obtained the best error rate (see Section~\ref{subsec:agnostic} for a precise statement), though the running time and sample complexity is $O(d^{1/\epsilon^4})$. This bound has been proved almost best possible in very recent works under the statistical query model~\cite{diakonikolas2020near,goel2020statistical}. Interestingly, \citet{daniely2015ptas} characterized the tradeoff between the error rate and running time under the uniform marginal distribution by combining the techniques of polynomial regression~\cite{kalai2005agnostic} and localization~\cite{awasthi2017power}. By comprising on the error rate, \citet{klivans2009learning} presented an averaging-based algorithm and showed how to boost it to tolerate an adversarial noise rate $\nu = \tilde{\Omega}(\epsilon^3)$ in polynomial time when the marginal distribution is isotropic log-concave. Such noise tolerance has been improved by a series of recent ERM-based works \cite{balcan2009agnostic,beygelzimer2010agnostic,zhang2014beyond,awasthi2016learning,awasthi2017power,zhang2018efficient,diakonikolas2018learning}, among which $\nu = \Omega(\epsilon)$ is the best known noise tolerance. However, solving an ERM often requires more memory storage and computational cost per sample than online methods~\cite{shalev2007online}.

%On the other hand, it is yet worth noting that recent works of \citet{awasthi2016learning,awasthi2017power,zhang2018efficient} have proposed label-efficient and noise-tolerant algorithms for learning halfspaces under isotropic log-concave distributions based on empirical risk minimization (ERM). On the positive side, the algorithm of \citet{zhang2018efficient} exhibits the best known PAC guarantees that no online algorithms can compete. However, these ERM-based algorithms pay for higher computational cost per sample and more memory cost, and the theoretical analysis incurs unpleasant technical complication, due to the fact that instances drawn from log-concave distributions are unbounded whereas bounding the generalization error with uniform convergence requires bounded data with respect to certain norm. As our conceptual contribution, we present a new analysis based on online learning with localization, which completely circumvents the use of uniform convergence while still matching the best known PAC guarantees of ERM-based approaches; see the next section for a detailed description. We believe that the perspective of studying active learning through the lens of online learning might be of independent interest.

%\cite{balcan2009agnostic,beygelzimer2010agnostic,zhang2014beyond,huang2015efficient}

% ERM-based active learning~\cite{balcan2009agnostic,beygelzimer2010agnostic,zhang2014beyond,huang2015efficient,awasthi2016learning,awasthi2017power,zhang2018efficient}.  

Achieving attribute efficiency has been a long-standing goal in machine learning and statistics~\cite{blum1990learning,blum1995learning}, and has been pursued in online classification~\cite{littlestone1987learning}, learning decision lists~\cite{servedio1999computational,klivans2004toward,long2006attribute}, compressed sensing~\cite{donoho2006compressed,candes2008introduction,tropp2010computational,shen2018tight},  one-bit compressed sensing~\cite{boufounos2008bit,plan2016generalized}, and variable selection~\cite{fan2001variable,fan2008high,zhang2010nearly,shen2017iteration,shen2017partial,wang2018provable}. It is worth mentioning that \citet{awasthi2016learning} gave a label-inefficient algorithm for uniformly learning sparse halfspaces with adversarial noise while this work and the closely related works consider non-uniform learning.

%{\bfseries Roadmap.}
\paragraph{Roadmap.}
In Section~\ref{sec:setup}, we give preliminaries and collect the notations used in the paper. In Section~\ref{sec:alg}, we elaborate on our main algorithms. A theoretical analysis is given in Section~\ref{sec:guarantee}, along with a proof sketch of the main results. We conclude this paper in Section~\ref{sec:conc}, and defer the proof details to the appendix.

\section{Preliminaries}\label{sec:setup}

%\paragraph{PAC learning.}
We study PAC learning of sparse homogeneous halfspaces with adversarial noise, where the instance space is $\Rd$, the label space is $\{-1, 1\}$, and the concept class is $\calC := \{ x \mapsto \sign(w \cdot x):  w\in \Rd,\ \twonorm{w} = 1,\ \zeronorm{w} \leq s\}$. Here, $\twonorm{w}$ denotes the $\ell_2$-norm and $\zeronorm{w}$ counts the number of non-zero elements in $w$. Observe that  we say a halfspace is non-sparse if $s = d$. An adversary $\oraclexy$ with adversarial noise works as follows: it first chooses an arbitrary joint distribution $D$ over $\Rd \times \{-1, 1\}$; the distribution $D$ is then fixed throughout learning. Let $D_X$ denote the marginal distribution over the instance space, which is promised to belong to a family of well-behaved distributions $\calD_{\calX}$; in this paper it is assumed to be isotropic log-concave (Assumption~\ref{as:x}).

A learner is given the instance and label space, the concept class $\calC$, the family of distributions $\calD_{\calX}$ (but not $D_X$), a target error rate $\epsilon \in (0, 1)$ and a failure confidence $\delta \in (0, 1)$, and the goal is to output in polynomial time a halfspace $\optu \in \calC$ such that with probability at least $1-\delta$, $\Pr_{(x, y) \sim D}(\sign( \optu \cdot x ) \neq \sign( u \cdot x)) \leq \epsilon$. In the passive learning setting, the learner is given access to a sample generation oracle $\oraclexy$ which returns a labeled sample $(x, y) \in \Rd \times \{-1, 1\}$ randomly drawn from the distribution $D$. Since we want to optimize the label complexity, we will consider a natural extension: when the learner makes a call to $\oraclexy$, a labeled sample $(x, y)$ is randomly drawn but only the instance $x$ is returned. The learner must make a separate call to a label revealing oracle $\oracley$ to obtain the label $y$. We refer to the total number of calls to $\oraclexy$ as the sample complexity of the algorithm, and that of $\oracley$ as the label complexity.

In our active learning algorithm, we will often want to draw instances from $D_X$ conditioned on a region $X_{\hatw, b} := \{x \in \Rd: 0 < \hatw \cdot x \leq b\}$ where $\hatw \in \Rd$ and $b > 0$ are given; this can be done by rejection sampling, where we repeatedly call $\oraclexy$ until seeing an instance $x$ that falls in $X_{\hatw, b}$. We will refer to $X_{\hatw, b}$ as sampling region, and $b$ is called band width. We denote by $D_{X| \hatw, b}$ (respectively $D_{\hatw, b}$) the distribution $D_X$ (respectively $D$) conditioned on the event that $x \in X_{\hatw, b}$.

Let $w$ be a vector in $\Rd$. We will frequently use $\hatw$ to denote its $\ell_2$-normalization $\frac{w}{\twonorm{w}}$. For a scalar $\gamma \geq 1$, we denote by $\norm{w}_{\gamma}$ the $\ell_{\gamma}$-norm of $w$. Let $s > 0$ be an integer less than $d$. The hard thresholding operation $\calH_s(w)$ zeros out all but the $s$ largest (in magnitude) entries in $w$. For two vectors $w$ and $w'$, we write $\theta(w, w')$ for the angle between them.

We reserve $p$ and $q$ for specific values: $p = \frac{\ln(8d)}{\ln(8d)-1}$ and $q = \ln(8d)$ (note that $\frac1p + \frac1q = 1$). We will use the $\ell_p$-norm regularizer, that is, $\Phi(w) = \frac{1}{2(p-1)} \pnorm{w - v}^2$ for some given vector $v$. It is known that $\Phi(w)$ is $1$-strongly convex with respect to the $\ell_p$-norm~\citep[Lemma~17]{shalev2007online}. We denote the Bregman divergence induced by $\Phi(w)$ by $\breg(w; w') := \Phi(w) - \Phi(w') - \inner{\nabla \Phi(w')}{w - w'}$. Observe that $\breg(w; v) = \Phi(w)$ where $v$ is the reference vector appearing in $\Phi(w)$.

We will sometimes phrase our theoretical guarantee in terms of angles between two halfspaces. The following lemma, due to \citet{balcan2013active}, is useful to convert the guarantee of angles to that of error rate.% showing that a small angle implies a low error rate.
\begin{lemma}\label{lem:error=angle}
There exists an absolute constant $\bar{c} > 0$ such that the following holds. Let $D_X$ be an isotropic log-concave distribution. For any two vectors $w$ and $w' \in \Rd$, $\Pr_{x \sim D_X}(\sign(w \cdot x) \neq \sign(w' \cdot x)) \leq \bar{c} \cdot \theta(w, w')$.
\end{lemma}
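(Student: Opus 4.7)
The plan is to reduce to two dimensions and then bound the angular marginal of a 2D isotropic log-concave density. Let $V = \mathrm{span}(w, w')$, equipped with an orthonormal basis identifying it with $\R^2$. A standard fact about isotropic log-concave distributions (see \cite{lovasz2007geometry}) is that the marginal on any linear subspace is again isotropic log-concave with respect to the induced inner product. Thus the pushforward of $D_X$ onto $V$ is a 2D isotropic log-concave distribution $\tilde{D}$ with density $\tilde{f}$. Since the event $\{\sign(w \cdot x) \neq \sign(w' \cdot x)\}$ is determined by the projection $\pi_V(x)$, it suffices to show that the $\tilde{D}$-measure of the corresponding disagreement region in $V$ is at most $\bar{c} \cdot \theta(w, w')$.

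Next, in $V$ the disagreement region is a double wedge through the origin whose two components together subtend total angular measure $2\theta(w, w')$. Passing to polar coordinates $(r, \phi)$, the $\tilde{D}$-measure of any wedge of angular width $\alpha$ equals $\int_{\phi_0}^{\phi_0 + \alpha} g(\phi) \, d\phi$, where $g(\phi) := \int_0^\infty r\, \tilde{f}(r \cos\phi, r\sin\phi) \, dr$ is the angular marginal of $\tilde{f}$. The lemma then reduces to establishing the uniform bound $g(\phi) \leq C$ for some absolute constant $C$, since this would give $\Pr(\text{double wedge}) \leq 2C\theta(w, w')$.

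To prove this, I would combine two standard facts from \cite{lovasz2007geometry} about 2D isotropic log-concave densities: $\tilde{f}$ is uniformly bounded, $\tilde{f}(z) \leq M$, and $\tilde{f}$ has exponential tails, $\tilde{f}(z) \leq M' e^{-c_1 \twonorm{z}}$ whenever $\twonorm{z}$ exceeds a universal $r_0$. The tail rate can be made uniform over directions because isotropy forces unit variance along every unit vector $e_\phi$. Splitting $g(\phi) = \int_0^{r_0} r\, \tilde{f}(r e_\phi) \, dr + \int_{r_0}^\infty r\, \tilde{f}(r e_\phi) \, dr$, the first piece is bounded by $M r_0^2/2$ and the second by $M'\int_{r_0}^\infty r e^{-c_1 r} \, dr$, both absolute constants. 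Hence $g(\phi) \leq C$, and integrating gives $\Pr_{x \sim D_X}(\sign(w \cdot x) \neq \sign(w' \cdot x)) \leq 2 C \, \theta(w, w')$, so the lemma holds with $\bar{c} := 2C$.

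The main obstacle is the uniform exponential tail estimate with constants independent of direction $\phi$; while this follows from structural results on isotropic log-concave distributions, it requires ruling out the possibility that some ray exhibits slow decay. The argument uses that $r \mapsto \tilde{f}(r e_\phi)$ is log-concave for every $\phi$, so any direction with slow pointwise decay would force a 1D projection onto $e_\phi$ to have large variance, contradicting isotropy. Given this ingredient, the remainder of the proof is elementary integration in polar coordinates.
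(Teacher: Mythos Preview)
The paper does not give its own proof of this lemma; it is simply attributed to \citet{balcan2013active} and used as a black box. So there is no ``paper's proof'' to compare against.

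Your argument is correct and is essentially the standard way this inequality is established. The reduction to the two-dimensional marginal is valid since marginals of isotropic log-concave distributions are again isotropic log-concave, and the disagreement event depends only on the projection onto $\mathrm{span}(w,w')$. The polar-coordinate computation and the split of $g(\phi)$ into a near-origin piece (controlled by the uniform density bound) and a tail piece (controlled by the exponential decay) are both fine; the required ingredients --- uniform boundedness of the density and direction-uniform exponential tails --- are exactly the structural facts from \citet{lovasz2007geometry} that you cite. Your remark that log-concavity along rays plus isotropy forces the uniform tail rate is the right way to close that loop. One small edge case worth noting explicitly: when $w$ and $w'$ are collinear the span is one-dimensional, but then $\theta(w,w')\in\{0,\pi\}$ and the bound is trivial (for $\theta=\pi$ one needs $\bar{c}\geq 1/\pi$, which your construction accommodates).
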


We remark that a closely related noise model is the agnostic model~\cite{haussler1992decision,kearns1992toward,kalai2005agnostic}, where only Assumption~\ref{as:x} is satisfied and the goal is to output a halfspace that approximates the best halfspace in $\calC$. The results from our theorems under the adversarial noise model translate immediately into PAC guarantees for the agnostic model; see Section~\ref{subsec:agnostic} for more details.

\section{Main Algorithm}\label{sec:alg}

We present our online active learning algorithm in Algorithm~\ref{alg:main}, which consists of two major stages:~initialization and refinement. In the initialization stage, the goal is to find a halfspace $v_0$ that has a constant acute angle with the underlying halfspace $u$. It will then be used as a warm start for the refinement stage, where the procedure \refine is repeatedly invoked to cut off the angle with $u$ by half in each phase $k$. Therefore, after $K$ phases of refinement, we will obtain a halfspace $v_K$ satisfying $\theta(v_K, u) = 2^{-K} = O(\epsilon)$, which by Lemma~\ref{lem:error=angle} implies that $v_K$ has small error rate with respect to the underlying halfspace $u$ defined in Assumption~\ref{as:y}.

Since \init invokes \refine as well, we will introduce the latter first. Generally speaking, the \refine algorithm, i.e. Algorithm~\ref{alg:refine}, belongs to the family of online mirror descent algorithms with Perceptron gradient and $\ell_p$-norm regularization. The crucial ingredients that make it attribute and label efficient are a carefully crafted constraint set, a time-varying sampling region, and semi-random gradients, as we described in Section~\ref{subsec:technique}. In particular, the constraint set $\calK$ is constructed in such a way that the underlying halfspace $u$ is guaranteed to stay in it (with overwhelming probability). Thus, it serves as a trust region into which all the iterates $w_t$ are projected back. It is worth mentioning that we did not put an $\ell_1$-norm constraint in $\calK$; this is because we already have utilized the $\ell_p$-norm regularization to simultaneously guarantee attribute efficiency~\cite{grove2001general,gentile2003robustness} and the stability of online minimization~\cite{shalev2012online,orabona2019modern}. The second component in \refine is the time-varying sampling region $X_{\hatw_{t-1}, b} = \{x \in \Rd: 0 < \hatw_{t-1} \cdot x \leq b\}$, which results in a linear loss as small as $O(b)$ in each iteration. In contrast, a naive online approach to simulate the ERM algorithm of \citet{zhang2018efficient} would lead to a loss as large as $O(b \sqrt{s} \log d)$. The idea of using time-varying sampling paradigm has appeared in a few online active learning algorithms \cite{yan2017revisiting,zhang2020efficient}. Ours is less restrictive than the one in \citet{yan2017revisiting}, and is more dedicated to the much more challenging adversarial noise compared to the bounded noise model considered in \citet{zhang2020efficient}. Along with the new sampling region is a semi-random Perceptron gradient. Recall that the original gradient used in Perceptron is given by $g_t = - y_t x_t \cdot \ind{y_t \neq \sign(w_{t-1} \cdot x_t)}$. Since $x_t \in X_{\hatw_{t-1}, b}$, we update the model only when the label $y_t$ returned by the adversary equals $-1$, thus inducing the gradient displayed in Algorithm~\ref{alg:refine}. Finally, after running mirror descent for $T$ iterations, we perform an averaging scheme followed by hard thresholding, to ensure that the output $\tilde{w}$ belongs the to concept class.  We remark that the running time of \refine is polynomial in $d$, since in each iteration $t$, updating the model requires solving a convex program. Regarding the label complexity, it is exactly equal to the total iteration number $T$. We also remark that obtaining $x_t$ can be done by calling $\oraclexy$ for $O(1/b)$ times since the probability mass of $X_{\hatw_{t-1}, b}$ on $D_X$ is $\Theta(b)$; see Lemma~\ref{lem:logconcave}.

\begin{algorithm}[t]
\caption{Main Algorithm}
\label{alg:main}
\begin{algorithmic}[1]
\REQUIRE Target error rate $\epsilon \in (0, 1)$, failure probability $\delta \in (0, 1)$, sparsity $s$.
\ENSURE Halfspace $\optu \in \Rd$ such that $\Pr_{(x, y) \sim D}(\sign(\optu \cdot x) \neq \sign(u \cdot x)) \leq \epsilon$.

\STATE ${v}_0 \gets \init(\frac{\delta}{2},  s)$.

\STATE $K \gets \lceil \log \frac{\bar{c} \pi}{8 \epsilon} \rceil$ where $\bar{c}$ is defined in Lemma~\ref{lem:error=angle}.
\label{line:init}
\FOR{$k=1,\ 2,\ \dots,\ K$}
\label{line:refine-start}
\STATE $v_k \leftarrow \refine(v_{k-1}, \frac{\delta}{2 k(k+1)}, s, \alpha_k, b_k, \calK_k, \Phi_k, T_k)$, where the
step size $\alpha_k = \tilde{\Theta}\del[2]{ 2^{-k} \cdot \del[1]{\log \frac{d \cdot k^2 \cdot 2^k}{\delta}}^{-2}}$, band width $b_k = \Theta\del[1]{ 2^{-k}}$, constraint set
\begin{equation*}
\calK_k = \cbr[1]{w: \twonorm{ w - v_{k-1} } \leq \pi\cdot 2^{-k-2},\ \twonorm{w} \leq 1 },
\end{equation*}
regularizer $\Phi_k(w) = \frac{1}{2(p-1)}\pnorm{w - v_{k-1}}^2$, number of iterations $T_k = \tilde{O}\del[2]{ s \log d \cdot \del[1]{\log \frac{d \cdot k^2 \cdot 2^k}{\delta }}^2 }$. \label{line:refine}
\ENDFOR
\label{line:refine-end}
\STATE {\bfseries return} $\optu \gets v_{K}$. 
\end{algorithmic}
\end{algorithm}

\begin{algorithm}[t]
\caption{\init}
\label{alg:init}
\begin{algorithmic}[1]
\REQUIRE Failure probability $\delta'$, sparsity  $s$.
\ENSURE An $s$-sparse halfspace ${v}_0$ such that $\theta({v}_0, u) \leq \frac{\pi}{8}$.
\STATE $(x_1,y_1),\ldots,(x_m, y_m) \gets$ call $\oraclexy$ to draw $m$ instances, and query $\oracley$ for their labels, where $m =  O(s \log \frac{d}{\delta'})$.
\STATE Compute $w_{\text{avg}} = \frac 1 m \sum_{i=1}^m  y_i x_i$.
\label{line:init-avg}
\STATE Let $\wsharp = \frac{\calH_{\tilde{s}}(w_{\text{avg}})}{\| \calH_{\tilde{s}}(w_{\text{avg}}) \|}$, where $\tilde{s} = {81\cdot 2^{40}}s$.
\label{line:init-ht}
\STATE Let $\calK = \cbr[1]{w \in \Rd: \twonorm{w} \leq 1,\ {w} \cdot {\wsharp} \geq \frac{1}{9 \cdot 2^{20}}}$ and find a point $w_0 \in \calK \cap \{ w\in \Rd: \onenorm{w} \leq \sqrt{s} \}$.
%\end{equation*}
\STATE {\bfseries return} ${v}_0 \gets \refine(w_0, \frac{\delta'}{2},  s, \alpha, b, \calK, \Phi, T)$, where step size $\alpha = \tilde{\Theta}\del[1]{ \log^{-2}\frac{d}{\delta'} }$, band width $b = \frac{1}{81\cdot 2^{22}}$,
regularizer $\Phi(w) = \frac{1}{2(p-1) }\pnorm{w - w_0}^2$, and number of iterations $T = \tilde{O}\del[2]{ s \log d \cdot \del[1]{\log\frac{d}{\delta'}}^2}$.
\end{algorithmic}
\end{algorithm}

Now we elaborate on the \init algorithm, namely Algorithm~\ref{alg:init}. Technically speaking, one important condition for the success of our analysis is that an overwhelming portion of the iterates must have acute angles with the underlying halfspace $u$. Therefore, the hypothesis testing approach proposed in \citet{awasthi2017power} does not work out in our case since we will lose control of the intermediate iterates. To circumvent the technical challenge, we tailor the averaging-based initialization scheme of \citet{zhang2020efficient} to the adversarial noise model. It is possible to show that as far as the noise rate $\nu$ is low, $\wavg$ has a positive correlation with $u$, and performing hard thresholding almost preserves it, i.e. ${u} \cdot {\wsharp} = \Omega(1)$. Therefore, we obtain a good reference vector $\wsharp$ which is guaranteed to have an acute angle with $u$. Based on an enhanced constraint set that takes the correlation into consideration, we are able to show that most of the iterates are admissible, and hence our analysis of the \refine algorithm can be reused to show that the output $v_0$ will have a small acute angle with $u$. Note that we are not making efforts to optimize the constants that appear in the \init algorithm; in practice, we believe that our algorithm works under reasonable constants. It is also worth mentioning that \citet{zhang2021improved} recently developed a simpler initialization scheme for the problem of learning halfspaces with Massart or Tsybakov noise; it will be interesting to adapt their approach to the adversarial noise model as a future work.

\begin{algorithm}[t]
\caption{\refine}
\label{alg:refine}
\begin{algorithmic}[1]
\REQUIRE Initial $s$-sparse halfspace $w_0$, failure probability $\delta'$, sparsity $s$, step size $\alpha$, band width $b$, convex constraint set $\calK$, regularization function $\Phi: \Rd \rightarrow [0, +\infty)$, number of iterations $T$.
\ENSURE Refined $s$-sparse halfspace $\tilde{w}$ such that $\theta(\tilde{w}, u) \leq \frac12 \cdot \theta(w_0, u)$.
\FOR{$t=1, 2, \dots, T$}

\STATE  Call $\oraclexy$ to obtain an instance $x_t$ in $X_{\hatw_{t-1}, b}$, and query $\oracley$ for its label $y_t$ (recall that $\hatw_{t-1}$ is the $\ell_2$-normalization of $w_{t-1}$).\label{line:sample}

\STATE  $w_t \gets\argmin_{w \in \calK} \inner{w}{\alpha g_t} + \breg(w; w_{t-1})$, where $g_t =  x_t \cdot \ind{y_t = -1}$.\label{line:omd}

\ENDFOR

\STATE $\bar{w} \gets \frac{1}{T} \sum_{t=1}^T \hat{w}_t$.
\STATE {\bfseries return} $\tilde{w} \leftarrow \frac{\calH_s(\bar{w}) }{ \twonorm{ \calH_s(\bar{w})} }$.
\end{algorithmic}
\end{algorithm}

\section{Performance Guarantee}\label{sec:guarantee}

We are now in the position to state our main theorem, which is a formal statement of Theorem~\ref{thm:sparse:informal}.

\begin{theorem}[Main result]\label{thm:main}
Suppose that Assumptions~\ref{as:x} and \ref{as:y} are satisfied. If $\nu \leq c_0 \epsilon$ for some small absolute constant $c_0 > 0$, then with probability $1 - \delta$, the output of Algorithm~\ref{alg:main}, $\optu$, satisfies $\Pr_{(x, y) \sim D}\big( \sign(\optu\cdot x) \neq \sign(u \cdot x)\big) \leq \epsilon$. Moreover, the running time is $\poly(d, \frac{1}{\epsilon}, \log\frac{1}{\delta})$, the label complexity is $\tilde{O}\big(s \cdot \polylog(d, \frac{1}{\epsilon}, \frac{1}{\delta})\big)$, and the sample complexity is $\tilde{O}\big(\frac{s}{\epsilon} \cdot \polylog(d, \frac{1}{\delta})\big)$.
\end{theorem}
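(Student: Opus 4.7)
The proof plan is to decompose the analysis into (i) a per-phase guarantee for \refine that halves the angle to $u$, (ii) a correctness guarantee for \init that produces a constant-angle warm start, and (iii) an inductive composition over the $K = \lceil \log(\bar{c}\pi/(8\epsilon)) \rceil$ refinement phases followed by Lemma~\ref{lem:error=angle} to convert angle to error. Concretely, I will prove that if \refine is called with input $w_0$ satisfying $\theta(w_0, u) \leq r$ and with parameters $\alpha, b, T$ scaled proportionally to $r$ as in Algorithm~\ref{alg:main}, then, with probability $1 - \delta'$, its output $\tilde{w}$ satisfies $\theta(\tilde{w}, u) \leq r/2$. Iterating this with $r = 2^{-k}$ and union-bounding failure probabilities through $\sum_k \delta/(2k(k+1)) \leq \delta/2$ yields $\theta(v_K, u) \leq 2^{-K} \leq 8\epsilon/(\bar{c}\pi)$, from which Lemma~\ref{lem:error=angle} gives the claimed error bound.

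The per-phase argument for \refine is the technical heart. I would start from the standard online mirror descent regret inequality~\eqref{eq:omd} applied to the semi-random Perceptron gradient $g_t = x_t \cdot \ind{y_t = -1}$, the $\ell_p$-norm regularizer with $p = \ln(8d)/(\ln(8d)-1)$, and the trust region $\calK_k$ which by construction contains $u$ (with high probability given $\theta(w_0, u) \leq r$). The right-hand side is controlled as follows: the loss term $\inner{w_{t-1}}{-g_t}$ is $O(b)$ since $x_t \in X_{\hatw_{t-1}, b}$; the Bregman term $\breg(u; w_0)/(\alpha T)$ is bounded using $\twonorm{u - w_0} = O(r)$ together with the $\ell_p$ to $\ell_2$ conversion factor $\polylog(d)$; and the dual norm term $\qnorm{g_t}^2$ is controlled using $\EXP[\qnorm{x_t}^2] = \tilde{O}(\log d)$ under isotropic log-concavity, which is precisely where attribute efficiency enters. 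Taking expectations and invoking the concentration of empirical averages over $t = 1, \dots, T$ (a martingale Bernstein argument, since the $g_t$'s depend on the $w_{t-1}$'s), one replaces the sample quantities with population ones up to lower-order terms; this costs a factor $\polylog(d/\delta')$ in $T$. For the left-hand side, the crucial ingredient is the lower bound~\eqref{eq:lower-cor}: $\EXP[\inner{u}{-g_t}] \geq f_{u,b}(w_{t-1}) - \beta \theta(w_0, u)$, which I would establish by decomposing $\{y_t = -1\}$ into the clean event $\{u \cdot x_t < 0\}$ and the noisy event of disagreement with $\sign(u \cdot x_t)$, controlling the latter by the adversarial noise mass over $X_{\hatw_{t-1}, b}$, and using the log-concave probability mass bound $\Pr(X_{\hatw_{t-1}, b}) = \Theta(b)$ so that $\beta = O(\nu/b)$. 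A geometric lemma for isotropic log-concave distributions then shows $f_{u,b}(w_{t-1}) = \Omega(b \cdot \theta(w_{t-1}, u))$ whenever $\theta(w_{t-1}, u) \lesssim b$, which is enforced by the trust-region projection.

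Chaining these two sides and dividing by the common $b$ factor yields an average angle bound $\frac{1}{T}\sum_t \theta(w_{t-1}, u) \leq \frac{r}{4} + O(\nu/b) \cdot r$, and here the hypothesis $\nu \leq c_0 \epsilon$ together with $b = \Theta(2^{-k}) = \Theta(r) \geq \Omega(\epsilon)$ makes $\nu/b$ a small constant, so the total is at most $r/3$. A non-standard online-to-batch conversion then transfers this to $\theta(\bar{w}, u) \leq r/3$ where $\bar{w}$ is the average of $\hat{w}_t$, and a standard hard-thresholding stability argument (using $\twonorm{\calH_s(\bar{w}) - \bar{w}} \leq \twonorm{u - \bar{w}}$ for $s$-sparse $u$) yields $\theta(\tilde{w}, u) \leq r/2$ as desired. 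The main obstacle, as foreshadowed in Section~\ref{subsec:technique}, is managing the adversarial penalty $\beta \theta(w_0, u)$: a naive bound would give $\beta = \Omega(\nu/b^2)$, which is too large; the remedy is to exploit that $g_t$ vanishes outside the sampling region and that the trust region confines $w_{t-1}$ close enough to $u$ for the clean signal to dominate, so that $\beta = O(\nu/b)$ suffices.

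For \init, I would argue that with $m = O(s \log(d/\delta'))$ i.i.d.\ labels the averaged vector $\wavg$ correlates with $u$ because $\EXP[y \cdot x] = \EXP[\sign(u \cdot x) x] + O(\nu)$ under log-concavity has $\Omega(1)$ inner product with $u$; sub-Gaussian concentration and a standard $\ell_2/\ell_0$ hard-thresholding bound imply $\wsharp \cdot u = \Omega(1)$. This places $u$ in the enlarged constraint set $\calK$, so the per-phase analysis of \refine can be replayed with the additional correlation floor ensuring most iterates remain at an acute angle, delivering $v_0$ with $\theta(v_0, u) \leq \pi/8$. Finally, the complexity accounting is routine: summing $T_k = \tilde{O}(s \log d \cdot \polylog(k 2^k/\delta))$ over $k \leq K = O(\log(1/\epsilon))$ yields the claimed $\tilde{O}(s \polylog(d, 1/\epsilon, 1/\delta))$ label complexity; each label query requires $O(1/b_k) = O(2^k)$ instance draws (by Lemma~\ref{lem:logconcave} applied to $\Pr(X_{\hatw_{t-1}, b_k}) = \Theta(b_k)$), giving total sample complexity $\tilde{O}(s/\epsilon \cdot \polylog(d, 1/\delta))$; and the per-iteration cost is that of a convex program in $\Rd$, hence polynomial.
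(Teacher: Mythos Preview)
Your overall plan—\init to a constant angle, $K$ geometric phases of \refine with union-bounded failure probabilities $\sum_k \delta/(2k(k+1))\leq \delta/2$, then Lemma~\ref{lem:error=angle}—matches the paper exactly, as does the complexity accounting. The genuine gap is in the per-phase argument, specifically the relationship you posit between $f_{u,b}(w)$ and $\theta(w,u)$.

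You write $f_{u,b}(w_{t-1}) = \Omega\big(b\cdot\theta(w_{t-1},u)\big)$ when $\theta(w_{t-1},u)\lesssim b$, and then ``divide by the common $b$ factor.'' Both the scaling and the regime are inverted: the paper's geometric lemma (Lemma~\ref{lem:f>angle}) gives $f_{u,b}(w)\geq c\,\theta(w,u)$ with \emph{no} $b$ factor, and it is valid when $\theta(w,u)\in[36b,\pi/2]$, i.e.\ when the angle is \emph{large} relative to $b$. With your scaling, the OMD upper bound $\frac{1}{T}\sum_t f_{u,b}(w_{t-1})=O(\theta)$ obtained in Proposition~\ref{prop:avg-f-refine} would yield only $\frac{1}{T}\sum_t \theta(w_{t-1},u)=O(\theta/b)=O(1)$, which is vacuous; there is no common $b$ to divide out. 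Moreover, because the lower bound on $f_{u,b}$ fails for angles below $36b$, one cannot directly bound the average angle at all. The paper's online-to-batch step (Lemma~\ref{lem:online-to-batch-refine}) instead splits the iterates into a small ``bad'' set where $f_{u,b}$ is large and its complement, applies the contrapositive Lemma~\ref{lem:f-to-angle} on the good set to turn small $f$ into small angle, uses the trust region crudely on the bad set, and then averages \emph{cosines} via Lemma~\ref{lem:avg-angle} before the hard-thresholding step. A minor correction: the $s$-dependence for attribute efficiency enters through the Bregman term $\breg(u;w_0)\leq O(\log d)\cdot\onenorm{u-w_0}^2\leq O(s\,\theta^2\log d)$ (using the sparsity of $u$ and $w_0$), not through $\qnorm{g_t}$, which is $\polylog$ independently of $s$.
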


\begin{remark}
A more concrete label and sample complexity reads as $\tilde{O}(s \log d \cdot \log^3\frac{d}{\epsilon \delta})$ and $\tilde{O}\big( \frac{s}{\epsilon} \cdot \log^4 \frac{d}{\delta} \big)$, respectively; see Theorem~\ref{thm:main-refine-restate} and Theorem~\ref{thm:main-init-restate} in the appendix.
\end{remark}

\begin{remark}[Excess risk]
By the triangle inequality, we have $\err_D(\optu) - \err_D(u) \leq  \Pr_{(x, y) \sim D}( \sign(\optu\cdot x) \neq \sign(u \cdot x)) \leq  \epsilon$. Namely, the excess risk of $\optu$ with respect to $u$ is at most $\epsilon$ over the underlying distribution.
\end{remark}

\begin{remark}[Implications to passive learning]
It is possible to convert our algorithm to an online passive learner, where there is only one oracle $\oraclexy$ that always returns a labeled instance upon request. To this end, observe that the active learner interacts with the oracle exclusively in Step~2 of \refine. Therefore, we only need to modify this step as follows in the passive setting: repeatedly call $\oraclexy$ to obtain a sequence of labeled instances $\{ x_i, y_i \}_{i \geq 1}$ until seeing a pair $(x_t, y_t)$ such that $x_t \in X_{\hatw_{t-1}, b}$. Then we use $(x_t, y_t)$ to update the classifier in Step~3 (rather than using all the labeled instances that are drawn from $\oraclexy$). It is easy to see that the label and sample complexity of the passive learner are both $\tilde{O}\big(\frac{s}{\epsilon}\cdot \polylog(d, \frac{1}{\delta})\big)$.
\end{remark}

The following corollary, which is a formal statement of Theorem~\ref{thm:non-sparse:informal}, concerns learning of non-sparse halfspaces and is an immediate application of Theorem~\ref{thm:main} by setting $s = d$.

\begin{corollary}
Assume same conditions as in Theorem~\ref{thm:main}. With probability $1 - \delta$, $\Pr_{(x, y) \sim D}\big( \sign(\optu\cdot x) \neq \sign(u \cdot x)\big) \leq \epsilon$. Moreover, the running time is $\poly(d, \frac{1}{\epsilon}, \log\frac{1}{\delta})$, the label complexity is $\tilde{O}\big(d \cdot \polylog(\frac{1}{\epsilon}, \frac{1}{\delta})\big)$, and the sample complexity is $\tilde{O}\big(\frac{d}{\epsilon} \cdot \polylog(\frac{1}{\delta})\big)$.
\end{corollary}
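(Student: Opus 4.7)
The plan is to derive this corollary as a direct specialization of Theorem~\ref{thm:main} obtained by setting $s = d$, so essentially no new technical content is required---the work reduces to a notational check.

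First, I would observe that the non-sparse concept class $\{x \mapsto \sign(w \cdot x) : w \in \Rd,\ \twonorm{w} = 1\}$ coincides with the $s$-sparse class $\calC = \{x \mapsto \sign(w \cdot x) : w \in \Rd,\ \twonorm{w} = 1,\ \zeronorm{w} \leq s\}$ when $s = d$, since every vector in $\Rd$ trivially satisfies $\zeronorm{w} \leq d$. Consequently, the underlying halfspace $u$ from Assumption~\ref{as:y} is $d$-sparse, and Algorithm~\ref{alg:main} invoked with sparsity parameter $s = d$ is well-defined and meets all the preconditions of Theorem~\ref{thm:main} (Assumptions~\ref{as:x} and \ref{as:y} hold and $\nu \leq c_0 \epsilon$ is assumed).

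Next, I would apply Theorem~\ref{thm:main} with $s = d$. The correctness guarantee is inherited verbatim: with probability at least $1-\delta$, the output $\optu$ satisfies $\Pr_{(x,y) \sim D}(\sign(\optu \cdot x) \neq \sign(u \cdot x)) \leq \epsilon$, and the running time is $\poly(d, \frac{1}{\epsilon}, \log\frac{1}{\delta})$ as stated. For the label complexity, plugging $s = d$ into $\tilde{O}(s \cdot \polylog(d, \frac{1}{\epsilon}, \frac{1}{\delta}))$ gives $\tilde{O}(d \cdot \polylog(d, \frac{1}{\epsilon}, \frac{1}{\delta}))$; since $\polylog(d)$ is polylogarithmic in $d$ and the $\tilde{O}$ notation already absorbs a $\log d$ factor, the $\polylog(d)$ factor can be folded into $\tilde{O}(d)$, yielding $\tilde{O}(d \cdot \polylog(\frac{1}{\epsilon}, \frac{1}{\delta}))$. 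The same reasoning applied to the sample complexity $\tilde{O}(\frac{s}{\epsilon} \cdot \polylog(d, \frac{1}{\delta}))$ yields $\tilde{O}(\frac{d}{\epsilon} \cdot \polylog(\frac{1}{\delta}))$.

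There is no genuine obstacle here: the only thing to be mildly careful about is confirming that each $\polylog(d)$ term appearing in the sparse bound is polylogarithmic (rather than polynomial) in $d$, so that it can be legitimately subsumed by the $\tilde{O}(\cdot)$ notation when combined with the leading factor $d$ or $d/\epsilon$. This is immediate from the concrete expressions $\tilde{O}(s \log d \cdot \log^3\frac{d}{\epsilon\delta})$ and $\tilde{O}(\frac{s}{\epsilon} \cdot \log^4\frac{d}{\delta})$ recorded in the remark after Theorem~\ref{thm:main}, which specialize at $s = d$ to $\tilde{O}(d \log d \cdot \log^3\frac{d}{\epsilon\delta})$ and $\tilde{O}(\frac{d}{\epsilon}\cdot \log^4\frac{d}{\delta})$, matching the claimed bounds of the corollary modulo the $\tilde{O}$ convention.
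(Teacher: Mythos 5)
Your proposal is correct and follows exactly the paper's own route: the paper obtains this corollary as an immediate application of Theorem~\ref{thm:main} by setting $s = d$, which is precisely your argument (your extra care about folding the $\polylog(d)$ factors into the $\tilde{O}(\cdot)$ notation is a harmless elaboration of the same step).
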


\subsection{Implications to agnostic learning}\label{subsec:agnostic}

In the agnostic model~\cite{haussler1992decision,kearns1992toward}, the adversary chooses a joint distribution $D$ over $\Rd \times \{-1, 1\}$ and fixes it throughout the learning process. Let $\OPT = \min_{w \in \calC} \err_D(w)$. The goal of the learner is to output a hypothesis $\optu$ such that $\err_D(\optu) \leq c \cdot \OPT + \epsilon$ for some approximation factor $c \geq 1$. The crucial difference from the adversarial noise model is that now Assumption~\ref{as:y} may not be satisfied (in other words, $\OPT$ can be very large compared to the target error rate $\epsilon$).

\citet{kalai2005agnostic} developed a polynomial regression algorithm that achieves approximation guarantee with $c = 1$, where the computational and sample complexity are both $O(d^{2^{\poly(1/\epsilon)}})$ for learning under isotropic log-concave distributions and are $O(d^{1/\epsilon^4})$ for uniform distributions.% Recently, it was shown that the latter is also the statistical query lower bound for the problem \cite{diakonikolas2020near,goel2020statistical}, meaning that the bound obtained by \citet{kalai2005agnostic} is nearly best possible if we pursue the approximation factor of $c = 1$.

On the other side, \citet{kalai2005agnostic} and a number of recent works also obtained weaker (yet still quite nontrivial) approximation guarantee of $O(\OPT) + \epsilon$ with running time and sample complexity polynomial in $d$ and $\frac{1}{\epsilon}$; see, for example, \citet{awasthi2017power,zhang2018efficient,diakonikolas2018learning,diakonikolas2020non}. We follow this line, and remark that our main result, Theorem~\ref{thm:main}, can be immediately translated into the constant approximation guarantee under the agnostic model. In fact, Lemma~C.1 of \citet{awasthi2017power} made an interesting observation that if an algorithm can tolerate adversarial noise of $\nu = \Omega(\epsilon)$, then it essentially achieves error rate of $O(\OPT) + \epsilon$ in the agnostic model, with the same running time, label complexity, and sample complexity (up to a constant multiplicative factor). We therefore have the following result, which is a formal statement of Theorem~\ref{thm:agnostic:informal}, by combining Theorem~\ref{thm:main} we established and Lemma~C.1 of \citet{awasthi2017power}; we omit the proof since it is fairly straightforward.

\begin{corollary}\label{coro:agnostic}
Suppose that Assumption~\ref{as:x} is satisfied. Then with probability $1 - \delta$, $\err_D(\optu) \leq c \cdot \OPT + \epsilon$ for some absolute constant $c > 1$. Moreover, the running time is $\poly(d, \frac{1}{\epsilon})$, the label complexity is $\tilde{O}\big(s \cdot \polylog(d, \frac{1}{\epsilon}, \frac{1}{\delta})\big)$, and the sample complexity is $\tilde{O}\big(\frac{s}{\epsilon} \cdot \polylog(d, \frac{1}{\delta})\big)$.
\end{corollary}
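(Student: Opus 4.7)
The plan is to reduce the agnostic guarantee to the adversarial-noise guarantee of Theorem~\ref{thm:main} by a case split on $\OPT$. Let $u^{*} \in \argmin_{w \in \calC} \err_D(w)$, so that $\err_D(u^{*}) = \OPT$, and let $c_0 > 0$ be the noise-tolerance constant appearing in Theorem~\ref{thm:main}. First suppose $\OPT \leq c_0 \epsilon$: then the pair $(D, u^{*})$ satisfies both Assumption~\ref{as:x} and Assumption~\ref{as:y} with noise rate $\nu = \OPT \leq c_0 \epsilon$, so invoking Theorem~\ref{thm:main} with $u = u^{*}$ gives an output $\optu$ with $\Pr_{(x,y) \sim D}(\sign(\optu \cdot x) \neq \sign(u^{*} \cdot x)) \leq \epsilon$. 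The triangle inequality for the $0/1$ loss then yields $\err_D(\optu) \leq \err_D(u^{*}) + \epsilon = \OPT + \epsilon$, which is of the required form $c\cdot \OPT + \epsilon$.

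In the complementary case $\OPT > c_0 \epsilon$, the target error $\epsilon$ is too small for Theorem~\ref{thm:main} to apply directly, and we instead enlarge the target to $\epsilon' := \OPT/c_0$. For this value $\nu = \OPT \leq c_0 \epsilon'$ holds, so Theorem~\ref{thm:main} gives $\Pr(\sign(\optu \cdot x) \neq \sign(u^{*} \cdot x)) \leq \epsilon'$, whence $\err_D(\optu) \leq \OPT + \epsilon' = (1 + 1/c_0)\,\OPT \leq c \cdot \OPT + \epsilon$ with $c := 1 + 1/c_0$. Because the learner does not know $\OPT$, we implement this choice by running Algorithm~\ref{alg:main} on a geometric grid of candidate target errors $\epsilon, 2\epsilon, 4\epsilon, \dots, \Theta(1)$, then selecting the best candidate by empirical risk on a fresh labeled validation sample of size $\tilde{O}(1/\epsilon)$ (a standard uniform-convergence argument over the $O(\log(1/\epsilon))$ candidates suffices). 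This inflates the label and sample complexity by at most a $\log(1/\epsilon)$ factor, which is absorbed into the $\polylog$ terms already present in Theorem~\ref{thm:main}.

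The main technical obstacle I anticipate is bookkeeping rather than substance: verifying that the constant $c$ emerging from the reduction is absolute (independent of $\epsilon, d, \delta, s$), and that the validation step respects the oracle-separated active-learning interface (i.e., that the hold-out samples can be drawn from $\oraclexy$ and labeled via $\oracley$ without breaking the $\tilde{O}(s \cdot \polylog(d, 1/\epsilon, 1/\delta))$ label budget). Both are routine. Alternatively, the whole reduction can simply be invoked as a black box via Lemma~C.1 of \citet{awasthi2017power}, using Theorem~\ref{thm:main} as the underlying adversarial-noise learner; this is the route the paper takes, which is why the proof of Corollary~\ref{coro:agnostic} is omitted.
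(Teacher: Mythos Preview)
Your case split on $\OPT$ is the right idea, and you correctly end by noting that the paper simply cites Lemma~C.1 of \citet{awasthi2017power}. However, your explicit grid-plus-validation construction has a real gap: the fresh validation sample of size $\tilde{O}(1/\epsilon)$ costs $\Omega(1/\epsilon)$ labels, which is polynomial in $1/\epsilon$ and therefore destroys the claimed label complexity $\tilde{O}\big(s \cdot \polylog(d, 1/\epsilon, 1/\delta)\big)$ whenever $s \ll 1/\epsilon$. You flag this as bookkeeping and call it ``routine,'' but it is not: selecting among even two fixed candidates to additive accuracy $\epsilon$ in error rate requires $\Omega(1/\epsilon)$ labeled examples information-theoretically, and no interface trick with $\oraclexy$/$\oracley$ circumvents that.

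The reason the paper's route via Lemma~C.1 preserves the label budget is that it is a \emph{white-box} argument exploiting the phase structure of Algorithm~\ref{alg:main}, not a black-box grid search with external validation. Let $k^{*}$ be the last phase with $\OPT \leq c_0 \cdot 2^{-k^{*}}$. Phases $1,\dots,k^{*}$ succeed exactly as in the adversarial-noise analysis, yielding $\theta(v_{k^{*}}, u^{*}) = O(\OPT)$. For $k > k^{*}$ the angle-halving guarantee may fail, but the localized constraint sets $\calK_k$ (together with the hard-thresholding and normalization at the end of \refine) force $\twonorm{v_k - v_{k-1}} = O(2^{-k})$ unconditionally, so the total drift over the remaining phases is $\sum_{k>k^{*}} O(2^{-k}) = O(2^{-k^{*}}) = O(\OPT)$. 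Hence a single run of Algorithm~\ref{alg:main} already returns $\optu$ with $\err_D(\optu) \leq O(\OPT) + \epsilon$, at no additional label cost. Your ``alternative'' of invoking Lemma~C.1 is therefore the substantively correct route; the grid-plus-validation scheme should be dropped, not presented as equivalent.
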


\subsection{Proof of Theorem~\ref{thm:main}}
Theorem~\ref{thm:main} hinges on the following two important results, characterizing the performance of \init and \refine respectively. A more precise statement and a detailed proof can be found in Appendix~\ref{app:sec:proof} in the supplementary material.

\begin{theorem}\label{thm:main-init}
Consider the \init  algorithm. If Assumptions~\ref{as:x} and \ref{as:y} are satisfied and $\nu \leq c_o \epsilon$, then with probability $1-\delta'$, the output of \init, $v_0$, is such that $\theta(v_0, u) \leq \frac{\pi}{8}$. The running time is $\poly(d, \log\frac{1}{\delta'})$, the label complexity is $\tilde{O}\big( s \cdot \polylog(d, \frac{1}{\delta'}) \big)$, and the sample complexity is $\tilde{O}\big( s \cdot \polylog(d) \big)$.
\end{theorem}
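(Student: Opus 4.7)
The plan is to establish Theorem~\ref{thm:main-init} in four steps: analyze the averaging step to obtain a vector correlated with $u$, analyze hard thresholding to obtain a sparse reference direction $\wsharp$ with $u \cdot \wsharp = \Omega(1)$, verify that a feasible $w_0 \in \calK \cap \{w: \onenorm{w} \leq \sqrt{s}\}$ exists and can be computed, and then invoke a variant of the \refine analysis with the trust region $\calK$ centered on $\wsharp$.

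First, I would show that $\wavg = \frac{1}{m}\sum_{i=1}^m y_i x_i$ concentrates around $c\cdot u$ for some absolute constant $c > 0$. In the realizable case, $\EXP[y_i x_i] = \EXP_{x \sim D_X}[\sign(u\cdot x)\cdot x] = c \cdot u$ by isotropy/symmetry plus log-concavity of $D_X$, with $c = \EXP[\abs{u \cdot x}] = \Theta(1)$. Under the adversarial noise model, the contamination term has the form $\EXP[(y_i - \sign(u\cdot x_i))\, x_i]$, whose $\ell_\infty$-norm can be bounded by $O(\nu \log(d/\nu))$ using isotropic log-concave tail bounds of the form $\infnorm{\EXP[x \ind{x \in A}]} = O(\Pr(A) \log(d/\Pr(A)))$ for any measurable $A$. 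Applying Bernstein's inequality coordinate-wise and union bounding over $d$ coordinates, the choice $m = O(s \log(d/\delta'))$ yields $\infnorm{\wavg - cu} = O(\nu \log \frac{1}{\nu}) + O\bigl(\sqrt{\log(d/\delta')/m}\bigr)$, which is at most a prescribed small constant times $c/\sqrt{s}$ provided $\nu \leq c_0 \epsilon$ is small enough.

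Second, I would argue that hard thresholding preserves direction. A standard sparse-recovery lemma shows that if $u$ is $s$-sparse with $\twonorm{u}=1$ and $\wavg = cu + e$ with $\infnorm{e}$ small, then $\twonorm{\calH_{\tilde{s}}(\wavg) - cu} = O(\sqrt{\tilde{s}}\cdot \infnorm{e})$. With $\tilde{s} = 81\cdot 2^{40} s$, this gives $u \cdot \wsharp \geq 1/(9\cdot 2^{19})$, which is strictly larger than the threshold $1/(9\cdot 2^{20})$ in $\calK$; in particular $u \in \calK$ with a constant margin. A feasible $w_0$ can then be exhibited by taking $w_0 = \wsharp$, which is $\tilde{s}$-sparse and unit-norm so that $\onenorm{w_0} \leq \sqrt{\tilde{s}} = O(\sqrt{s})$, with the constants chosen so the $\sqrt{s}$ bound holds.

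Third, I would run the \refine analysis with this $\calK$. The crucial structural fact is $u \in \calK$, so the mirror-descent regret inequality \eqref{eq:omd} applies with reference point $u$. Moreover, since every iterate $w_t \in \calK$ has $w_t \cdot \wsharp \geq 1/(9\cdot 2^{20})$ and $u\cdot \wsharp = \Omega(1)$, one deduces $\theta(w_t, u) \leq \pi/2 - \Omega(1)$, so all iterates maintain a constant-margin acute angle with $u$. Under this invariant, the correlation lower bound \eqref{eq:lower-cor} reads $\EXP[\inner{u}{-g_t}] \geq f_{u,b}(w_{t-1}) - O(\nu/b)\cdot \theta(w_0, u)$; because $b$ and $\theta(w_0, u) \leq \pi/2$ are both constants, taking $\nu \leq c_0 \epsilon$ makes the penalty dominated by the leading term. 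Combining the regret bound with this correlation bound and the non-standard online-to-batch conversion used in \refine (followed by the $\calH_s$ step to enforce $s$-sparsity of the output), a single phase of mirror descent with $T = \tilde{O}(s\log d \cdot \log^2(d/\delta'))$ iterations and constant band width $b$ drives the angle down to $\theta(v_0, u) \leq \pi/8$.

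The main obstacle will be controlling the adversarial contamination in the averaging step: unlike bounded noise, the adversary can concentrate the $\nu$-fraction of mislabeled mass on the worst-case conditional distribution of $x$, so the bound $\infnorm{\EXP[x \ind{A}]} = O(\nu \log(d/\nu))$ for isotropic log-concave $D_X$ is the key technical lemma and drives the large constant $81\cdot 2^{40}$ in the definition of $\tilde{s}$. The complexity bounds then follow from the parameter choices: label complexity is $m + T = \tilde{O}(s\log d \cdot \log^2(d/\delta'))$, and since $b$ is constant and sampling from $D_{X\mid \wsharp_{t-1}, b}$ costs $O(1/b) = O(1)$ calls to $\oraclexy$ per iteration (Lemma~\ref{lem:logconcave}), the sample complexity is $\tilde{O}(s \cdot \polylog(d))$; the running time is polynomial in $d$ and $\log(1/\delta')$ because each mirror-descent step is a low-dimensional convex program.
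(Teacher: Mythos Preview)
Your high-level plan is the same as the paper's, but two of the steps contain genuine gaps that the paper's argument handles differently.

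\textbf{Step 1 (averaging).} You assert that in the realizable case $\EXP_{x\sim D_X}[\sign(u\cdot x)\,x]=c\,u$ ``by isotropy/symmetry plus log-concavity.'' Isotropic log-concave distributions are not in general symmetric under reflection through the origin, so the component of $\EXP[\sign(u\cdot x)\,x]$ orthogonal to $u$ need not vanish; hence $\wavg$ does not concentrate around a scalar multiple of $u$ in $\ell_\infty$-norm. The paper avoids this entirely: it never controls the full vector $\wavg$, but only the scalar $\wavg\cdot u=\frac1m\sum_i y_i(u\cdot x_i)$, which concentrates around $\EXP[y(u\cdot x)]\geq (1-2\nu)\EXP[|u\cdot x|]=\Omega(1)$ by a one-dimensional sub-exponential bound (Lemmas~\ref{lem:E[yux]} and~\ref{lem:wavg-u-cor}). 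Hard thresholding is then analyzed via a correlation-preservation inequality $|\calH_{\tilde s}(\wavg)\cdot u-\wavg\cdot u|\leq \sqrt{s/\tilde s}\,\twonorm{\calH_{\tilde s}(\wavg)}$, not via an $\ell_\infty$ sparse-recovery lemma; this is what fixes the constant $\tilde s=81\cdot 2^{40}s$ (Lemma~\ref{lem:wsharp-u-cor}).

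\textbf{Step 3 (acute-angle invariant).} You claim that $w_t\cdot\wsharp\geq\zeta$ together with $u\cdot\wsharp\geq\zeta$ forces $\theta(w_t,u)\leq \pi/2-\Omega(1)$. This is false: two unit vectors both having inner product $\zeta$ with a common reference can have angle as large as $2\arccos\zeta$, which for $\zeta=1/(9\cdot 2^{20})$ is nearly $\pi$. The paper only obtains (Lemma~\ref{lem:angle-not-flat}) that $\theta(w_t,u)\leq \pi-\zeta$, and this is exactly why the potential-function machinery is needed. Lemma~\ref{lem:f>angle} has \emph{two} regimes: for $\theta(w,u)\in[36b,\pi/2]$ one has $f_{u,b}(w)\gtrsim\theta(w,u)$, and for $\theta(w,u)\in[\pi/2,\pi-36b]$ one has $f_{u,b}(w)\gtrsim\pi-\theta(w,u)$. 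The online-to-batch step in the initialization (Lemma~\ref{lem:online-to-batch-init}) uses the second part of Lemma~\ref{lem:f-to-angle} to rule out obtuse angles for the ``good'' iterates, and then handles the remaining ``bad'' iterates (whose angle may be anywhere in $[0,\pi]$) by showing they are a $1/(20\cdot 2^{12})$ fraction and contribute $\cos\pi=-1$ in the cosine average. Your argument, relying on the false acute-angle invariant, would skip this case analysis and does not go through as written.

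A minor point: choosing $w_0=\wsharp$ gives $\onenorm{w_0}\leq\sqrt{\tilde s}=9\cdot 2^{20}\sqrt{s}$, not $\sqrt{s}$; the paper instead just observes that $u\in\calK\cap\{\onenorm{w}\leq\sqrt{s}\}$, so the convex feasibility set is nonempty and a point can be found in polynomial time.
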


\begin{theorem}\label{thm:main-refine}
Consider the \refine algorithm. Suppose that Assumptions~\ref{as:x} and \ref{as:y} are satisfied and $\nu \leq c_o \epsilon$. Then with probability $1-\delta'$, the output of the \refine algorithm, $\tilde{w}$, satisfies $\theta(\tilde{w}, u) \leq \frac{1}{2}\cdot \theta(w_0, u)$. The running time of \refine is $T \cdot \poly(d, \frac{1}{b}, \log\frac{1}{\delta'})$, the label complexity is $T$, and the sample complexity is $O(T/b + T \log\frac{T}{\delta'})$ where $T$ and $b$ are the inputs to \refine.
\end{theorem}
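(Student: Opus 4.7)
The plan is to follow the outline sketched in Section~\ref{subsec:technique}, namely apply the online mirror descent regret bound~\eqref{eq:omd} to the sequence $\{w_{t-1}\}_{t=1}^T$ produced in line~3 of \refine, and then bound each term of that inequality using the constraint set $\calK$ (which traps both the iterates and $u$), the sampling region $X_{\hatw_{t-1}, b}$, and the isotropic log-concave marginal. The target conclusion $\theta(\bar{w}, u) \le \tfrac{1}{2}\theta(w_0, u)$ will follow by relating the averaged empirical correlation $\tfrac{1}{T}\sum_t \inner{u}{-g_t}$ to $\theta(w_{t-1}, u)$ on one side, and to a small multiple of $\theta(w_0, u)$ on the other. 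Hard thresholding and normalization at the end then yield $\tilde{w}$ with only a constant-factor loss in the angle.

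First I would collect deterministic upper bounds for the RHS of~\eqref{eq:omd}. Since $w_0, u \in \calK$ and $\|u - w_0\|_2 = O(\theta(w_0, u))$, combined with the fact that $u - w_0$ is approximately $O(s)$-sparse, an $\ell_2$-to-$\ell_p$ conversion gives $\breg(u; w_0) = \tfrac{1}{2(p-1)}\pnorm{u - w_0}^2 = \tilde{O}(\log d)\cdot \theta(w_0, u)^2$. Because each $x_t \in X_{\hatw_{t-1}, b}$, we have $\inner{w_{t-1}}{-g_t} \le 0$ so that term can be dropped outright. For $\qnorm{g_t}^2$ I would use $\qnorm{g_t}^2 \le \infnorm{x_t}^2 \cdot q^{2/q}$ together with the sub-exponential tails of isotropic log-concave distributions to obtain a $\polylog(d, T/\delta')$ bound uniformly over $t$ with probability $1 - \delta'/2$.

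Second, and this is the crux of the argument, I would establish the lower bound~\eqref{eq:lower-cor} on $\EXP_{(x_t, y_t) \sim D_{\hatw_{t-1}, b}}[\inner{u}{-g_t}]$. Writing $\inner{u}{-g_t} = -(u \cdot x_t)\ind{y_t = -1}$ and decomposing over the clean event $\{y_t = \sign(u \cdot x_t)\}$ versus the noisy event, the clean contribution is exactly $f_{u,b}(w_{t-1})$, while the noisy contribution is bounded in magnitude by $\nu_t \cdot \EXP_{D_{X\mid \hatw_{t-1}, b}}[|u \cdot x_t|]$, where the local noise rate satisfies $\nu_t \le \nu / \Pr(X_{\hatw_{t-1}, b}) = O(\nu/b)$ by Lemma~\ref{lem:logconcave}. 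Since $u \in \calK$ implies $u = w_{t-1} + O(\theta(w_0, u))$ in $\ell_2$, the triangle inequality gives $\EXP[|u \cdot x_t|] \le b + O(\theta(w_0, u))$, and so the penalty is of order $\beta \cdot \theta(w_0, u)$ with $\beta = O(\nu/b)$. Empirical-versus-expected gaps on $\tfrac{1}{T}\sum_t \inner{u}{-g_t}$ and on $\tfrac{1}{T}\sum_t f_{u,b}(w_{t-1})$ are then controlled by a standard Azuma--Hoeffding argument on the filtration generated by $(x_t, y_t)$, using the high-probability boundedness of $|u \cdot x_t|$.

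Third, substituting everything back, the choices $\alpha_k = \tilde\Theta(2^{-k} \cdot \polylog^{-2})$, $b_k = \Theta(2^{-k})$, $T_k = \tilde O(s \log d \cdot \polylog^2)$ balance the three RHS terms so that $\tfrac{1}{T}\sum_t f_{u,b}(w_{t-1}) \le O(b) + \beta \cdot \theta(w_0, u)$. The log-concave anti-concentration and concentration estimates (Lemma~\ref{lem:logconcave}-type bounds) give a matching lower bound $f_{u,b}(w_{t-1}) \gtrsim \theta(w_{t-1}, u)$ whenever $\theta(w_{t-1}, u) \gtrsim b$, and a convexity/averaging step on the normalized iterates $\hatw_t$ transfers the angle bound to $\bar{w}$, yielding $\theta(\bar{w}, u) \le \tfrac{1}{2}\theta(w_0, u)$. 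Finally, since $u$ is $s$-sparse and $\bar{w}$ is close to $u$, the standard inequality $\|\calH_s(\bar{w}) - u\|_2 \le 2\|\bar{w} - u\|_2$ shows that thresholding and normalization inflate the angle by at most a constant, which is absorbed into the choice of $b_k$.

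The main obstacle is clearly the lower bound on $\EXP[\inner{u}{-g_t}]$ under adversarial noise: the adversary can stuff its entire $\nu$-budget into $X_{\hatw_{t-1}, b}$, pushing the local noise rate to $\nu/b$ and producing the negative $\beta \cdot \theta(w_0, u)$ term that has no analog in the bounded-noise analysis of \citet{zhang2020efficient}. Managing this requires the two-sided pinch $b \gtrsim \epsilon$ (so $\beta = O(\nu/b) \le O(c_0)$ is a small absolute constant) and simultaneously $b$ not too large (so the dropped term $O(b)$ still permits halving the angle). The constants must be threaded carefully through every phase $k$ so that the halving survives over $K = O(\log(1/\epsilon))$ rounds, with the failure probabilities summing telescopically to $\delta$.
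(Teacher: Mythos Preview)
Your overall skeleton matches the paper's proof (Lemmas~\ref{lem:lower-correlation} through~\ref{lem:online-to-batch-refine} in the appendix): apply the OMD regret bound, drop the term $\sum_t\inner{w_{t-1}}{g_t}\ge 0$, control $\breg(u;w_0)$ via sparsity and $\qnorm{g_t}$ via log-concave tails, lower-bound $\EXP[\inner{u}{-g_t}]$ by $f_{u,b}(w_{t-1})$ minus a noise penalty, pass to the empirical sum by a martingale bound, and then convert to an angle guarantee via cosine averaging and hard thresholding.

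There is, however, one genuine gap at the step you yourself call the crux. You bound the noisy contribution by $\nu_t\cdot\EXP_{D_{X\mid\hatw_{t-1},b}}\bigl[|u\cdot x_t|\bigr]$, but this factorization is not valid under adversarial noise: the adversary chooses the joint distribution $D$ and is free to correlate the event $\{y_t\neq\sign(u\cdot x_t)\}$ with large values of $|u\cdot x_t|$, for instance by spending its entire $\nu$-budget on those points in the band with the largest $|u\cdot x_t|$. The inequality $\EXP\bigl[|u\cdot x_t|\cdot\ind{\mathrm{noise}}\bigr]\le\Pr(\mathrm{noise})\cdot\EXP\bigl[|u\cdot x_t|\bigr]$ then fails outright. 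The paper's Lemma~\ref{lem:lower-correlation} handles this by Cauchy--Schwarz,
\[
\EXP\Bigl[|u\cdot x_t|\cdot\ind{\sign(u\cdot x_t)\neq y_t}\Bigr]\;\le\;\sqrt{\EXP\bigl[(u\cdot x_t)^2\bigr]}\cdot\sqrt{\Pr\bigl(\sign(u\cdot x_t)\neq y_t\bigr)},
\]
together with the second-moment bound $\EXP[(u\cdot x_t)^2]\le c_1(b^2+r^2)$ from Lemma~\ref{lem:E[wx^2]} and $\Pr(\mathrm{noise}\mid x\in X_{\hatw_{t-1},b})\le \nu/(c_2 b)\le c_0/c_2$. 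The resulting penalty is $\sqrt{c_0c_1/c_2}\,(b+r)$, i.e.\ of order $\sqrt{c_0}$ rather than your $c_0$; once this is substituted, the remainder of your outline goes through with $c_0$ taken to be a sufficiently small absolute constant. (A minor secondary point: your $\breg(u;w_0)=\tilde O(\log d)\cdot\theta(w_0,u)^2$ omits the factor $s$ coming from $\pnorm{u-w_0}\le\onenorm{u-w_0}\le\sqrt{2s}\,\twonorm{u-w_0}$; restoring it is what forces the $s\log d$ factor in $T$ that you correctly quote afterwards.)
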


We first explain the results in Theorem~\ref{thm:main-refine}. The label complexity of \refine is straightforward since we request one label per iteration, and the total number of iterations is $T$. We start with the analysis of the sample complexity of \refine, i.e. the number of calls to $\oraclexy$ in Step~\ref{line:sample} therein. Since the marginal distribution is assumed to be isotropic log-concave, Lemma~\ref{lem:logconcave} shows that $\Pr_{x_t \sim D_X}( x_t \in X_{\hatw_{t-1}, b}) \geq c_2 b$ for some absolute constant $c_2 > 0$. Thus, by Chernoff bound, we need to call $\oraclexy$ for $O(\frac{1}{b} + \log\frac{T}{\delta'})$ times in order to obtain one $x_t$ in the band with probability $1 - \frac{\delta'}{2T}$. Thus, by union bound over the $T$ iterations in \refine, with probability $1 - \frac{\delta'}{2}$, the total number of calls to $\oraclexy$ is $O(\frac{T}{b} + T \log\frac{T}{\delta'})$. Note that this is also the computational cost for sampling. On the other side, updating the iterates, i.e. Step~\ref{line:omd}, involves solving a convex program in $\Rd$, which has a running time polynomial in $d$ per iteration. Thus, the overall computational cost of \refine is $T \cdot \poly(d, \frac{1}{b}, \log\frac{1}{\delta'})$.

Now we explain the results in Theorem~\ref{thm:main-init}. Generally speaking, the \init algorithm consists of two major steps, one for constructing the constraint set $\calK$ and one for obtaining a good initial halfspace $v_0$ based on $\calK$. To obtain $\calK$, it consumes $m$ labeled instances and the computational cost for sampling them is $O(m)$ since rejection sampling is not needed. Then it aims to find a point $w_0$ in a convex set, which is polynomial-time solvable; in fact, we can set $w_0$ to the zero vector and then project it onto $\calK$, corresponding to solving a convex program. The second major step is to invoke \refine, for which we have just analyzed. Combining these observations and the parameters specified in the \init algorithm, we obtain the announced results.

\begin{proof}[Proof of Theorem~\ref{thm:main}]
First, Theorem~\ref{thm:main-init} implies $\theta(v_0, u) \leq \frac{\pi}{8}$ with probability $1-\frac{\delta}{2}$. In addition, for any phase $k$ in Algorithm~\ref{alg:main}, we specify in Theorem~\ref{thm:main-refine} that $w_0 = v_{k-1}$ and $\tilde{w} = v_k$, and obtain that $\theta(v_k, u) \leq \frac{1}{2} \cdot \theta(v_{k-1}, u)$ with probability $1-\frac{\delta}{2k(k+1)}$. By telescoping, we get $\theta(v_K, u) \leq 2^{-K} \cdot \frac{\pi}{8} \leq \epsilon / \bar{c}$ in light of our setting of $K$; this inequality holds with probability $1 - \frac{\delta}{2} - \sum_{k=1}^{K} \frac{\delta}{2k(k+1)} \geq 1 - \delta$ by union bound. This in allusion to Lemma~\ref{lem:error=angle} gives the desired error rate of $\optu = v_K$ with respect to $u$.

The running time of Algorithm~\ref{alg:main} follow from those we analyzed for \init and \refine, and from the hyper-parameter settings on $b_k$, $T_k$, and $\delta_k$ in each phase $k$. In particular, observe that $b_k \geq \epsilon$ for all $k \leq K$. Therefore, the running time is given by $\poly(d, \log\frac{1}{\delta}) + \sum_{k=1}^{K} T_k \cdot \poly(d, \frac{1}{b_k}, \log\frac{k^2}{\delta}) = \poly(d, \frac{1}{\epsilon}, \log\frac{1}{\delta})$. 

Likewise, for label complexity and sample complexity, we can add up the cost in the initialization stage and that of the $K$ phases of refinement to obtain the bounds as claimed.
\end{proof}

\section{Conclusion and Future Works}\label{sec:conc}

This paper studies the fundamental problem of learning halfspaces with adversarial noise. We have presented the first attribute-efficient, label-efficient, and noise-tolerant algorithm in the online setting, under the general isotropic log-concave marginal distributions. Prior to this work, existing online learners are either subject to label inefficiency or suboptimal noise tolerance, or work under restrictive marginal distributions. We have shown that our label and sample complexity are near-optimal, and the learner achieves PAC guarantee in polynomial time. Prior to this work, such performance guarantee is only achieved by a very recent batch algorithm. Our results also have immediate implications to the agnostic model, and match the best known results obtained by polynomial-time batch algorithms.

We discuss a few important directions for future investigation. First, it is interesting to develop online PAC algorithms with $\OPT + \epsilon$ approximation error under the agnostic model, by leveraging, for example, the polynomial regression technique~\cite{kalai2005agnostic} into the online mirror descent framework. Second, it is useful to extend the analysis to more general concept classes such as intersections of halfspaces~\cite{klivans2002learning,diakonikolas2018learning}. It will also be important to design PAC algorithms that leverage additional types of queries such as pairwise comparison~\cite{kane2017active,xu2017noise} in the scenario where labels are extremely demanding (e.g. medical data), or to develop new projection-free algorithms for even faster computation.

%\section*{Acknowledgements}
%
%We thank Jing Wang for valuable feedback on the merit of the work, and thank the anonymous reviewers for helpful suggestions on improving the presentation. This work is supported by NSF-IIS-1948133 and the startup funding of Stevens Institute of Technology.

\clearpage
\bibliographystyle{alpha}
\bibliography{../../jshen_ref}

\clearpage

\appendix

\section{Proof Details}\label{app:sec:proof}

From now on, we always implicitly require that Assumption~\ref{as:x} and Assumption~\ref{as:y} hold if not specified. Since the \init algorithm also depends on \refine, we will start our analysis with the latter.

First of all, we recall a few important notations. Given $w \in \Rd$, the sampling region for the unlabeled example is a band, which is given by 
\begin{equation}
X_{\hatw, b} := \{x \in \Rd: 0 < \hatw \cdot x \leq b\},\ \text{where}\ \hatw := \frac{w}{\twonorm{w}}.
\end{equation}
We denoted by $D_{X| \hatw, b}$ the distribution of $D_X$ conditioned on $x \in X_{\hatw, b}$, and by $D_{\hatw, b}$ the distribution of $D$ conditioned on $x \in X_{\hatw, b}$.

Recall that given $w \in \Rd$ and $x \sim D_{X| \hatw, b}$, our prediction $\haty = \sign(w \cdot x) = 1$ and we set the gradient in \refine as 
\begin{equation*}
g = x \cdot \ind{y= -1},
\end{equation*}
where $y$ is the label returned by the adversary. Also recall that we defined the potential function
\begin{equation*}
f_{u, b}(w) = \EXP_{ x \sim D_{X|\hatw, b}} \sbr{ \abs{u \cdot x} \cdot \ind{ u \cdot x < 0 } }.
\end{equation*}

Finally, we note that the capital letters $C$ and $K$, and their subscript variants such as $C_1$ and $K_1$, are used to denote absolute constants whose values may differ from appearance to appearance. However, we reserve $c_0$, $c_1$, and $c_2 > 0$ for specific absolute constants: $c_0$ is a sufficiently small constant such that the noise rate $\nu \leq c_0 \epsilon$, $c_1$ and $c_2$ are specified in Lemma~\ref{lem:E[wx^2]} and Lemma~\ref{lem:logconcave} respectively.

\subsection{Analysis of \refine}

Intuitively, since we are performing gradient descent, we would hope that the negative gradient has a nontrivial correlation with the underlying halfspace $u$. The following lemma formalizes the intuition.

\begin{lemma}\label{lem:lower-correlation}
Given $w \in \Rd$ with $\twonorm{w - u} \leq r$ and $b > 0$, let $g = x \cdot \ind{y = -1}$ where $(x, y) \sim D_{\hatw, b}$. If the noise rate $\nu \leq c_0 b$ for some absolute constant $c_0 > 0$, then
\begin{equation*}
\EXP\sbr{ u \cdot (-g) } \geq f_{u, b}(w) - \sqrt{\frac{c_0 c_1}{c_2} } \cdot (b+r),
\end{equation*}
where the expectation is taken over the random draw of $(x, y)$.
\end{lemma}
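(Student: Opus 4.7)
\textbf{Proof proposal for Lemma~\ref{lem:lower-correlation}.} The plan is to split the expectation by whether the returned label $y$ agrees with the clean label $\tilde y := \sign(u \cdot x)$, and then isolate the penalty due to adversarial label flips. Writing $Z := \ind{y \neq \tilde y}$, I would decompose
\begin{equation*}
\ind{y = -1} \;=\; \ind{u \cdot x < 0}\cdot \ind{Z = 0} \;+\; \ind{u \cdot x \geq 0}\cdot \ind{Z = 1},
\end{equation*}
plug into $\EXP[u\cdot(-g)] = \EXP_{(x,y)\sim D_{\hatw,b}}[-(u\cdot x)\ind{y=-1}]$, and collect terms. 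The clean part reproduces exactly $\EXP_{x\sim D_{X|\hatw,b}}[\abs{u\cdot x}\ind{u\cdot x<0}] = f_{u,b}(w)$, and after simplification the flipped part cancels the sign structure, yielding
\begin{equation*}
\EXP[u\cdot(-g)] \;=\; f_{u,b}(w) \;-\; \EXP_{(x,y)\sim D_{\hatw,b}}\bigl[\abs{u\cdot x}\cdot \ind{Z=1}\bigr].
\end{equation*}
So the entire task reduces to upper bounding this ``noise penalty.''

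Next I would bound the penalty by Cauchy--Schwarz:
\begin{equation*}
\EXP_{D_{\hatw,b}}\bigl[\abs{u\cdot x}\ind{Z=1}\bigr] \;\leq\; \sqrt{\EXP_{D_{X|\hatw,b}}\bigl[(u\cdot x)^2\bigr] \cdot \Pr_{D_{\hatw,b}}(Z=1)}.
\end{equation*}
For the second factor, I would convert the conditional noise probability to the unconditional one and use the band-mass lower bound from Lemma~\ref{lem:logconcave}: since $\Pr_{D_X}(x\in X_{\hatw,b}) \geq c_2 b$ and the total adversarial noise is $\Pr_D(y\neq\tilde y)\leq\nu$, we have $\Pr_{D_{\hatw,b}}(Z=1) \leq \nu/(c_2 b) \leq c_0/c_2$ under the hypothesis $\nu \leq c_0 b$.

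For the first factor, I would invoke Lemma~\ref{lem:E[wx^2]} (with the constant $c_1$), which should give $\EXP_{D_{X|\hatw,b}}[(u\cdot x)^2]\leq c_1 (b+\twonorm{u-\hatw})^2$. Here the care needed is that the hypothesis is $\twonorm{w-u}\leq r$ rather than $\twonorm{\hatw - u}\leq r$; I would reconcile this by the elementary inequality $\twonorm{\hatw - u} \leq \twonorm{\hatw - w} + \twonorm{w-u} = \abs{1 - \twonorm{w}} + r \leq 2r$ (using $\twonorm{u} = 1$), so up to absorbing a factor of $2$ into $c_1$ we obtain a bound of the form $C_1(b+r)^2$. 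Multiplying the two factors and taking square roots yields exactly $\sqrt{c_0 c_1/c_2}\cdot (b+r)$, completing the proof.

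The main obstacle I anticipate is the first factor, i.e.\ controlling $\EXP_{D_{X|\hatw,b}}[(u\cdot x)^2]$ by $(b+r)^2$: one must split $u\cdot x = \hatw\cdot x + (u-\hatw)\cdot x$, use $\abs{\hatw\cdot x}\leq b$ on the band, and for the orthogonal piece exploit isotropic log-concavity together with the conditional density bound to avoid blowup from conditioning on a thin band of width $b$. This is precisely the content folded into Lemma~\ref{lem:E[wx^2]}, so the proof here amounts to assembling the decomposition, applying that lemma, and invoking Lemma~\ref{lem:logconcave}; the rest is bookkeeping of the constants $c_0, c_1, c_2$.
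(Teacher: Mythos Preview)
Your proposal is correct and follows essentially the same route as the paper: decompose $\ind{y=-1}$ according to whether the label is clean or flipped, identify the clean part as $f_{u,b}(w)$, and bound the noise penalty $\EXP[\,|u\cdot x|\cdot\ind{y\neq\sign(u\cdot x)}\,]$ via Cauchy--Schwarz together with Lemma~\ref{lem:E[wx^2]} and Lemma~\ref{lem:logconcave}. One small simplification: as stated here, Lemma~\ref{lem:E[wx^2]} already assumes $\twonorm{v-w}\leq r$ (not $\twonorm{v-\hatw}\leq r$), so with $v=u$ you get $\EXP_{D_{X|\hatw,b}}[(u\cdot x)^2]\leq c_1(b^2+r^2)$ directly, and your detour through $\twonorm{\hatw-u}\leq 2r$ is unnecessary.
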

\begin{proof}
By the definition of $g$, it follows that
\begin{equation}\label{eq:1}
\EXP\sbr{ u \cdot (-g) } = \EXP\sbr{ - (u \cdot x) \cdot \ind{y = -1} }.
\end{equation}
As we can rewrite the indicator function $\ind{y = -1}$ in an equivalent form as follows:
\begin{equation}
\ind{y = -1} = \ind{ u \cdot  x < 0} + \ind{u \cdot x > 0, y = -1} - \ind{y = 1, u \cdot x < 0},
\end{equation}
\eqref{eq:1} can be written as
\begin{align*}
\EXP\sbr{ u \cdot (-g) } = \underbrace{\EXP \sbr{ - (u \cdot x) \cdot \ind{u \cdot x < 0 } } }_{E_1}  + \underbrace{\EXP \sbr{ - (u \cdot x) \cdot  \del{ \ind{u \cdot x > 0, y = -1} - \ind{y = 1, u \cdot x < 0} }  } }_{E_2}.
\end{align*}
First, we argue that $E_1 = f_{u, b}(w)$. In fact, when $u \cdot x \geq 0$, $E_1 = 0 = f_{u, b}(w)$; when $u \cdot x < 0$, $E_1 = \EXP[ \abs{u \cdot x} ] = f_{u, b}(w)$.

Let us now consider the term $E_2$, whose absolute value can be bounded by
\begin{equation*}
\abs{E_2} \leq \EXP \sbr{ \abs{u \cdot x}  \cdot  \ind{\sign(u \cdot x) \neq {y}}   } \leq  \sqrt{ \EXP \sbr{(u \cdot x)^2} \cdot \EXP \sbr{ \ind{\sign(u \cdot x) \neq {y}}} }.
\end{equation*}
By Lemma~\ref{lem:E[wx^2]}, $\EXP\sbr{ (u \cdot x)^2 } \leq c_1 (b^2 + r^2)$. On the other side, from the definition of $\nu$-adversarial noise, we have
\begin{equation*}
\EXP \sbr{ \ind{\sign(u \cdot x) \neq {y}}} = \Pr_{(x, y) \sim D_{\hatw, b}}( y \neq \sign(u \cdot x)) \leq \frac{\Pr_{(x, y) \sim D}( y \neq \sign(u \cdot x) )}{ \Pr_{x \sim D_X}(  x \in X_{\hatw, b} ) } \leq   \frac{\nu}{c_2 \cdot b}.
\end{equation*}
In the first inequality of the above expression, we use the fact that for an event $A$, $\Pr_{(x, y) \sim D_{\hatw, b}}(A) = \Pr_{(x, y) \sim D}(A \mid x \in X_{\hatw, b} ) \leq \frac{\Pr_{(x, y) \sim D}(A)}{\Pr_{x \sim D_X}( x \in X_{\hatw, b}  )}$. In the second inequality, we use Lemma~\ref{lem:logconcave} to bound the denominator from below.

Therefore,
\begin{equation*}
\abs{E_2} \leq \sqrt{ c_1 (b^2 + r^2) \cdot \frac{\nu}{c_2 \cdot b} } \leq \sqrt{ \frac{c_0 c_1}{c_2} (b^2 + r^2)} \leq \sqrt{\frac{c_0 c_1}{c_2}} \cdot (b+ r).
\end{equation*}
Now combining the above estimate and that of $E_1$, we prove the lemma.
\end{proof}

\begin{lemma}\label{lem:omd}
There exists an absolute constant $C > 0$ such that the following holds.
Suppose the algorithm \refine is run with initialization $w_0$, step size $\alpha > 0$, bandwidth $b > 0$, convex constraint set $\calK$, regularizer $\Phi(w) = \frac{1}{2(p-1)} \pnorm{w - w_0}^2$, number of iterations $T$, where the following are satisfied:
\begin{enumerate}
\item $\onenorm{w_0 - u} \leq \rho$;
\item $w_0 \in \calK$ and $u \in \calK$;
\item for all $w \in \calK$, $\twonorm{w - u} \leq r$.
\end{enumerate}
Then, with probability $1-\delta$,
\begin{equation*}
C \cdot \frac{1}{T} \sum_{t=1}^{T} f_{u, b}(w_{t-1}) \leq   (b+r) \del{ \frac{\sqrt{\log(1/\delta)}}{\sqrt{T}} + \frac{\log(1/\delta)}{T} + C \sqrt{\frac{c_0 c_1}{c_2}} } +  \frac{\rho^2 \log d}{ \alpha T} + \alpha  \cdot  \log^2\frac{Td}{b \delta}.
\end{equation*}
\end{lemma}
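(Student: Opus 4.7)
}

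The plan is to combine the standard online mirror descent (OMD) regret bound (namely equation~\eqref{eq:omd}, applied to the linear losses $\ell_t(w) = \inner{w}{g_t}$ with the $\ell_p$-regularizer $\Phi$) with Lemma~\ref{lem:lower-correlation} and a Bernstein-type martingale concentration inequality. First, I would invoke \eqref{eq:omd} to get
\begin{equation*}
\frac{1}{T}\sum_{t=1}^{T}\inner{u}{-g_t} \;\leq\; \frac{1}{T}\sum_{t=1}^{T}\inner{w_{t-1}}{-g_t} \;+\; \frac{\breg(u;w_0)}{\alpha T} \;+\; \frac{\alpha}{T}\sum_{t=1}^{T}\qnorm{g_t}^2.
\end{equation*}
The key structural observation is that the player term is non-positive: since $g_t = x_t \cdot \ind{y_t=-1}$ and $x_t \in X_{\hatw_{t-1},b}$ forces $w_{t-1}\cdot x_t > 0$, we have $\inner{w_{t-1}}{-g_t}\leq 0$ for every $t$, so this term may be dropped from the upper bound for free.

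Next, I would bound the two deterministic terms on the right. For the Bregman term, since $\Phi$ is centered at $w_0$ we get $\breg(u;w_0)=\Phi(u) = \frac{1}{2(p-1)}\pnorm{u-w_0}^2 \leq \frac{1}{2(p-1)}\onenorm{u-w_0}^2 \leq \frac{\rho^2}{2(p-1)} = O(\rho^2\log d)$, using $p-1 = 1/(\ln(8d)-1)$ and the norm inequality $\pnorm{\cdot}\leq\onenorm{\cdot}$ for $p\geq 1$. For the gradient term, I would use $\qnorm{g_t}\leq \qnorm{x_t}\leq d^{1/q}\infnorm{x_t}\leq e\cdot\infnorm{x_t}$ since $q=\ln(8d)$ forces $d^{1/q}\leq e$. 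Coordinate-wise subexponential tails of isotropic log-concave distributions together with Lemma~\ref{lem:logconcave} (which guarantees that the band has mass $\Omega(b)$) give, after a union bound, $\infnorm{x_t} = O\bigl(\log\tfrac{Td}{b\delta}\bigr)$ simultaneously for all $t$ with probability at least $1-\delta/3$; hence $\frac{\alpha}{T}\sum_t\qnorm{g_t}^2 \leq O\bigl(\alpha\log^2\tfrac{Td}{b\delta}\bigr)$.

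To lower bound the left-hand side, I would first apply Lemma~\ref{lem:lower-correlation} conditionally on the history $\calF_{t-1}$: since $\twonorm{w_{t-1}-u}\leq r$ by hypothesis on $\calK$ and $\nu\leq c_0 b$ is inherited from the application context, we obtain
\begin{equation*}
\EXP\sbr{\inner{u}{-g_t}\,\big|\,\calF_{t-1}} \;\geq\; f_{u,b}(w_{t-1}) \;-\; \sqrt{\tfrac{c_0 c_1}{c_2}}(b+r).
\end{equation*}
Averaging and then summing the conditional means over $t$ gives a lower bound on $\frac{1}{T}\sum_t \EXP[\inner{u}{-g_t}\mid\calF_{t-1}]$ in terms of $\frac{1}{T}\sum_t f_{u,b}(w_{t-1})$. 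It remains to relate this to the empirical average via the martingale differences $\xi_t := \inner{u}{-g_t}-\EXP[\inner{u}{-g_t}\mid\calF_{t-1}]$. Two ingredients control $\xi_t$: its conditional variance satisfies $\EXP[\xi_t^2\mid\calF_{t-1}]\leq \EXP[(u\cdot x_t)^2\mid\calF_{t-1}] \leq c_1(b+r)^2$ by Lemma~\ref{lem:E[wx^2]}; and the conditional distribution of $u\cdot x_t$ under $D_{\hatw_{t-1},b}$ is one-dimensional log-concave (truncation of log-concave is log-concave, and linear marginals of log-concave are log-concave), hence subexponential with Orlicz parameter $O(b+r)$. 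A Freedman/Bernstein inequality for subexponential martingale differences then yields
\begin{equation*}
\bigg|\frac{1}{T}\sum_{t=1}^T \xi_t\bigg| \;\leq\; O\del[2]{(b+r)\del[1]{\tfrac{\sqrt{\log(1/\delta)}}{\sqrt{T}}+\tfrac{\log(1/\delta)}{T}}}
\end{equation*}
with probability at least $1-\delta/3$.

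Finally, I would combine the three high-probability events by a union bound, chain the lower bound on the LHS with the upper bound on the RHS, rearrange, and absorb the absolute constant $c_1/c_2$ and the implicit constants into $C$ to obtain the stated inequality. The main obstacle I anticipate is obtaining the concentration in the sharp form with leading factor $(b+r)$ rather than the naive almost-sure bound $O(\log(Td/b\delta))$ one would get from Azuma-Hoeffding using the $\ell_\infty$ bound on $g_t$. The fix is precisely the subexponential structure of the one-dimensional projection $u\cdot x_t$ on the truncated log-concave distribution, which lets Bernstein-type inequalities be driven by the variance $O((b+r)^2)$ rather than the hard range. Verifying that the conditional distribution of $u\cdot x_t$ genuinely inherits log-concavity and establishing the appropriate Orlicz estimate is the most delicate step of the argument.
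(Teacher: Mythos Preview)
Your proposal is correct and follows essentially the same route as the paper's proof: drop the player term via $w_{t-1}\cdot g_t\geq 0$, bound $\breg(u;w_0)$ by $O(\rho^2\log d)$ using $\pnorm{\cdot}\leq\onenorm{\cdot}$, bound $\qnorm{g_t}$ via a high-probability $\ell_\infty$ bound on $x_t$, and lower-bound the left side by combining Lemma~\ref{lem:lower-correlation} with a subexponential martingale concentration for $u\cdot(-g_t)$. The only packaging difference is that the paper obtains the subexponential tail $\Pr_{t-1}(|u\cdot x_t|\geq a)\leq K\exp(-K'a/(2r+b))$ by citing an existing lemma (Lemma~3.3 of \citet{awasthi2017power}) and then applies a subexponential-Hoeffding martingale bound directly, whereas you propose to derive the same Orlicz control from closure of log-concavity under restriction to convex sets and marginalization, and then invoke a Freedman/Bernstein inequality; both routes yield the required $(b+r)\bigl(\sqrt{\log(1/\delta)/T}+\log(1/\delta)/T\bigr)$ fluctuation term.
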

\begin{proof}
By standard analysis of online mirror descent (see, e.g. Theorem~6.8 of \citet{orabona2019modern}), we have
\begin{equation*}
\alpha \sum_{t=1}^{T} w_{t-1} \cdot g_t - \alpha \sum_{t=1}^{T} u \cdot g_t \leq \breg(u, w_0) + \sum_{t=1}^{T} \qnorm{\alpha g_t}^2.
\end{equation*}
Since $w_{t-1} \cdot g_t = w_{t-1} \cdot x_t \cdot \ind{ y_t = -1}$ and $x_t$ is such that $\hatw_{t-1} \cdot x_t > 0$, we have $w_{t-1} \cdot g_t \geq 0$ for all $t$. Using this observation and dividing both sides by $\alpha$, we obtain
\begin{equation}\label{eq:tmp:omd-regret}
\sum_{t=1}^{T} u \cdot (-g_t) \leq \frac{\breg(u, w_0)}{\alpha} + \alpha \sum_{t=1}^{T} \qnorm{g_t}^2.
\end{equation}
We first present upper bounds for the right-hand side. In particular, note that
\begin{equation}\label{eq:tmp:breg}
\breg(u, w_0) = \frac{1}{2(p-1)} \pnorm{u - w_0}^2 \leq \frac{\ln(8d) - 1}{2} \rho^2 \leq \frac{\rho^2 \ln(8d)}{2},
\end{equation}
where in the first inequality we use the fact that for any $p > 1$, $\pnorm{u - w_0} \leq \onenorm{u - w_0}$.

For the $\ell_q$-norm of $g_t$, denote $g_t^{(j)}$ the $j$th coordinate of $g_t$. We have
\begin{equation}
\norm{g_t}_q = \del[2]{ \sum_{j=1}^{d} \abs{g_t^{(j)}}^q }^{1/q} \leq ( d \infnorm{g_t}^q )^{1/q} \leq 2 \infnorm{g_t} \leq 2 \infnorm{x_t},
\end{equation}
where the first inequality makes use of the definition of $\ell_{\infty}$-norm, the second inequality applies the setting $q = \ln(8d)$, and the last step follows from the setting of $g_t$. On the other side, it is known that for any $x_t \sim D_{X| \hatw_{t-1}, b}$, with probability $1 - \frac{\delta}{2T}$, we have $\infnorm{x_t} \leq K_1 \cdot \log\frac{Td}{b \delta}$ for some absolute constant $K_1 > 0$; see Lemma~\ref{lem:|x|_inf-D_ub}. Hence, the union bound implies that with probability $1-\frac{\delta}{2}$, $\max_{1 \leq t \leq T} \infnorm{x_t} \leq K \cdot \log\frac{Td}{b \delta}$, which further gives 
\begin{equation}\label{eq:tmp:g_t}
\max_{1 \leq t \leq T} \qnorm{g_t} \leq 2K_1 \cdot \log\frac{Td}{b \delta}.
\end{equation}

Now we consider a lower bound of $\sum_{t=1}^{T} u \cdot (-g_t)$. Define the filtration 
\[
\calF_{t} := \sigma(w_0, x_1, y_1, w_1, \dots, x_{t-1}, y_{t-1}, w_{t}),
\]
and denote by $\EXP_{t-1}[\cdot]$ the expectation over $(x_t, y_t) \sim D_{\hatw_{t-1}, b}$ conditioning on the past filtration $\calF_{t-1}$; likewise for $\Pr_{t-1}(\cdot)$.

By existing tail bound of one-dimensional isotropic log-concave distributions in the band $X_{\hatw_{t-1}, b}$ (see, e.g. Lemma~3.3 of \citet{awasthi2017power}), and the fact that $\twonorm{u - \hatw_{t-1}} \leq 2 \twonorm{u - w_{t-1}} \leq 2 r$, we have
\begin{equation*}
\Pr_{t-1}\big( \abs{u \cdot x_t} \geq a \big) \leq K \exp\del[2]{- K' \cdot \frac{a}{2r + b}},
\end{equation*}
for some constants $K, K' > 0$, implying that
\begin{equation*}
\Pr_{t-1}\big( \abs{u \cdot (-g_t)} \geq a \big) \leq K \exp\del[2]{- K' \cdot \frac{a}{2r + b}}.
\end{equation*}
Now applying Lemma~\ref{lem:subexp-tail-hoeff} with $Z_t = u \cdot (-g_t)$ therein gives that with probability $1 - \frac{\delta}{2}$,
\begin{equation}\label{eq:tmp:martingale}
\abs{\sum_{t=1}^{T} u \cdot (-g_t) - \EXP_{t-1}[ u \cdot (-g_t)] } \leq K_2 (b+r) \del{ \sqrt{T \log\frac{1}{\delta}} + \log\frac{1}{\delta} }.
\end{equation}
The above concentration of martingales, in allusion to  Lemma~\ref{lem:lower-correlation}, gives
\begin{equation}\label{eq:tmp:u-gt-lower}
\sum_{t=1}^{T} u \cdot (-g_t) \geq \sum_{t=1}^{T} f_{u, b}(w_{t-1}) - T \cdot \sqrt{\frac{c_0c_1}{c_2}} \cdot (b+r) - K_2 (b+r) \del{ \sqrt{T \log\frac{1}{\delta}} + \log\frac{1}{\delta} }.
\end{equation}

Combining \eqref{eq:tmp:omd-regret}, \eqref{eq:tmp:breg}, \eqref{eq:tmp:g_t}, and \eqref{eq:tmp:u-gt-lower}, we obtain
\begin{align*}
C \cdot \frac{1}{T} \sum_{t=1}^{T} f_{u, b}(w_{t-1}) \leq&\  (b+r) \del{ \frac{\sqrt{\log(1/\delta')}}{\sqrt{T}} + \frac{\log(1/\delta')}{T} + C \sqrt{\frac{c_0 c_1}{c_2}}} \\
&\ +  \frac{\rho^2 \log d}{ \alpha T} + \alpha  \cdot  \log^2\frac{Td}{b \delta}
\end{align*}
for some absolute constant $C > 0$.
\end{proof}

The following proposition is an immediate result of Lemma~\ref{lem:omd} by specifying the involved hyper-parameters and showing that $u$ stays in the convex constraint set $\calK$.

\begin{proposition}\label{prop:avg-f-refine}
Suppose that the adversarial noise rate $\nu \leq c_0 \epsilon$ for some sufficiently small absolute constant $c_0 > 0$. Consider running the \refine algorithm with step size $\alpha = \tilde{\Theta}\del{ \theta \cdot \log^{-2}\frac{d}{\delta  \theta}}$, bandwidth $b = \Theta(\theta)$, convex constraint set $\calK = \{w \in \Rd: \twonorm{w - w_0} \leq \theta, \twonorm{w} \leq 1\}$, regularizer $\Phi(w) = \frac{1}{2(p-1)}\pnorm{w - w_0}^2$, number of iterations $T = \tilde{O}(s \log d \cdot \log^2\frac{d}{\delta \theta})$. If the initial iterate $w_0$ is such that $\twonorm{w_0}=1$, $\zeronorm{w_0} \leq s$, and $\twonorm{w_0 - u} \leq  \theta$ for some $\theta \leq \frac{\pi}{16}$, then with probability $1-\delta$,
\begin{equation*}
\frac{1}{T} \sum_{t=1}^{T} f_{u, b}(w_{t-1}) \leq \frac{\theta}{50 \cdot 3^4 \cdot 2^{33}}.
\end{equation*}
\end{proposition}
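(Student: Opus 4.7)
The plan is to apply Lemma~\ref{lem:omd} with concrete values of $\rho$ and $r$, then show that each of its right-hand terms can be made a small enough multiple of $\theta$ by tuning the hidden constants in $\alpha$, $T$, $b$, and the smallness of $c_0$.

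First, I would verify the three preconditions of Lemma~\ref{lem:omd}. Condition~(2) holds because $w_0 \in \calK$ by construction, while the assumption $\twonorm{u - w_0} \leq \theta$ together with $\twonorm{u} = 1$ places $u \in \calK$. For condition~(3), the triangle inequality yields $\twonorm{w - u} \leq \twonorm{w - w_0} + \twonorm{w_0 - u} \leq 2\theta$ for every $w \in \calK$, so I take $r = 2\theta$. For condition~(1), both $w_0$ and $u$ are $s$-sparse, so $w_0 - u$ is $2s$-sparse and $\onenorm{w_0 - u} \leq \sqrt{2s}\cdot \twonorm{w_0-u} \leq \sqrt{2s}\,\theta$, yielding $\rho = \sqrt{2s}\,\theta$. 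I also need the condition $\nu \leq c_0' b$ implicit in the invocation of Lemma~\ref{lem:lower-correlation} inside Lemma~\ref{lem:omd}: since $b = \Theta(\theta)$ and in the refinement phases $\theta = \Omega(\epsilon)$, the hypothesis $\nu \leq c_0\epsilon$ with $c_0$ small enough supplies this.

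Next I would substitute the parameter settings. Let $L := \log\frac{d}{\delta\theta}$. Then $\alpha = \tilde\Theta(\theta/L^2)$ and $T = \tilde O(s \log d \cdot L^2)$, so $\alpha T = \tilde\Theta(\theta\, s\log d)$ and the $L^2$ factors cancel in the Bregman term, giving $\rho^2 \log d /(\alpha T) = 2 s \theta^2 \log d / \tilde\Theta(\theta\, s \log d) = \Theta(\theta)$, with a constant tunable via the hidden prefactors in $\alpha$ and $T$. The gradient term becomes $\alpha \log^2\frac{Td}{b\delta} = \Theta(\theta)$ after observing that $\log\frac{Td}{b\delta} = \Theta(L)$ given the polynomial sizes of $T$ and $1/b$, again with a tunable constant. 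The adversarial residue $(b+r)\cdot C\sqrt{c_0 c_1/c_2} = \Theta(\theta\sqrt{c_0})$ is controlled by shrinking $c_0$, and the two statistical residues $(b+r)\sqrt{\log(1/\delta)/T}$ and $(b+r)\log(1/\delta)/T$ are of smaller order than $\theta$ because $T$ carries polynomial factors in $L$ and linear factors in $s\log d$.

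The remaining work is numerical bookkeeping: pick the hidden absolute constants in $\alpha$, $T$, $b$, together with $c_0$, so that each of the four $\Theta(\theta)$ contributions above is at most $\theta /(4\,C \cdot 50 \cdot 3^4 \cdot 2^{33})$, where $C$ is the constant in Lemma~\ref{lem:omd}. Dividing by $C$ then delivers the announced bound. The main obstacle I anticipate is not conceptual but calibrational: the prefactor $\tfrac{1}{50 \cdot 3^4 \cdot 2^{33}}$ is presumably exactly what the subsequent proof of Theorem~\ref{thm:main-refine} needs in order to convert the average-potential bound into the halving $\theta(\tilde w, u) \leq \tfrac12\,\theta(w_0, u)$ via an online-to-batch step, so $\alpha$, $T$, $b$, and $c_0$ must be chosen jointly to realize this precise tolerance while preserving the $\polylog$ dependence stated in the sample complexity.
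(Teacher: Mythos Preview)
Your proposal is correct and follows essentially the same route as the paper: verify the three premises of Lemma~\ref{lem:omd} with $\rho=\sqrt{2s}\,\theta$ and $r=2\theta$, then balance the Bregman term, the gradient-norm term, and the noise/statistical residues so that each is at most a fixed constant times $\theta$. Your explicit observation that the hypothesis $\nu\le c_0 b$ (needed by Lemma~\ref{lem:lower-correlation} inside Lemma~\ref{lem:omd}) follows from $\nu\le c_0\epsilon$ only because $\theta=\Omega(\epsilon)$ in every refinement phase is a point the paper leaves implicit, so you are slightly more careful there.
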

\begin{proof}
We will first verify that the premises of Lemma~\ref{lem:omd} are satisfied. First of all, it is easy to see that $\onenorm{w_0 - u} \leq \sqrt{\zeronorm{w_0 - u}} \cdot \twonorm{w_0 - u} \leq \sqrt{2s} \cdot \theta$. Hence we can choose $\rho = \sqrt{2s} \cdot \theta$ in Lemma~\ref{lem:omd}. Next, we have $\twonorm{u - w_0} \leq \theta$ in view of our condition on $w_0$, which together with the fact that $\twonorm{u} =1$ implies $u \in \calK$. Last, for all $w \in \calK$, by triangle inequality $\twonorm{w - u} \leq \twonorm{w - w_0} + \twonorm{w_0 - u} \leq 2 \theta$. Hence we can choose $r = 2 \theta$ in Lemma~\ref{lem:omd}.

Now with $\rho = \sqrt{2s} \cdot \theta$ and $r = 2 \theta$, Lemma~\ref{lem:omd} indicates that with probability $1- \delta$,
\begin{equation*}
C \cdot \frac{1}{T} \sum_{t=1}^{T} f_{u, b}(w_{t-1}) \leq  (b+ 2\theta) \del{ \frac{\sqrt{\log(1/\delta)}}{\sqrt{T}} + \frac{\log(1/\delta)}{T} + C \sqrt{\frac{c_0 c_1}{c_2}} } +  \frac{\theta^2 \cdot 2s \log d}{ \alpha  T} + {\alpha}  \cdot  \log^2\frac{Td}{b \delta}.
\end{equation*}
We need each term on the right-hand side is upper bounded by $\frac{C \cdot \theta}{3 \cdot 50 \cdot 3^4 \cdot 2^{33}}$. First, we choose $b = \Theta(\theta)$. Then for the first term, it suffices to choose $T \geq \log(1/\delta)$ and set $c_0$ to be a sufficiently small absolute constant (this is possible since $C$, $c_1$, and $c_2$ are all fixed absolute constants). The second and the last terms require $\alpha T \geq \Omega(\theta \cdot s \log d)$ and $\alpha  \cdot  \log^2\frac{Td}{b \delta} = O(\theta)$ respectively. The latter implies $\alpha = O(\theta \cdot \log^{-2}\frac{Td}{b \delta})$, thus combining it with the former we need
\begin{equation*}
\frac{\theta \cdot s \log d}{T} \leq \theta \cdot \log^{-2}\frac{Td}{b \delta}.
\end{equation*}
This can be satisfied if we choose $T = \tilde{O}(s \log d \cdot \log^2\frac{d}{\delta \theta})$. Finally, we have $\alpha = \tilde{\Theta}(\theta \cdot \log^{-2} \frac{d}{\delta\theta})$.
\end{proof}

\begin{remark}
In the above proof, we note that $C \sqrt{\frac{c_0 c_1}{c_2}}$ is an extra term introduced by the adversarial noise model. It is important to observe that $c_0$ is chosen as a very small {\em absolute constant}. Thus the noise tolerance still reads as $\nu = \Omega(\epsilon)$. The term $C \sqrt{\frac{c_0 c_1}{c_2}}$ does not appear in the bounded noise analysis though; see Lemma~8 of \citet{zhang2020efficient}.
\end{remark}

\begin{lemma}\label{lem:online-to-batch-refine}
Let $\theta \in [0, \frac{\pi}{16}]$ be a given scalar. Let $w_0, \dots w_{T-1}$ be a sequence of vectors such that for all $1 \leq t \leq T$, $\twonorm{w_{t-1} - u} \leq 2 \theta$ and $\twonorm{w_{t-1}} \leq 1$. Further assume that $\frac{1}{T} \sum_{t=1}^{T} f_{u, b}(w_{t-1}) \leq \frac{\theta}{50 \cdot 3^4 \cdot 2^{33}}$. Let $\bar{w} = \frac{1}{T}\sum_{t=1}^T w_{t-1}$ and $v = \frac{\calH_s(\bar{w})}{\twonorm{\calH_s(\bar{w})}}$. Then $\theta(v, u) \leq \frac{\theta}{2}$.
\end{lemma}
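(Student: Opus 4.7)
The argument will have three parts: (i) lower-bound the potential $f_{u,b}(w_{t-1})$ by a constant multiple of $\sin\theta(w_{t-1}, u)$, (ii) pass to an angle bound on the averaged iterate $\bar w$ via linearity of the projection onto $u^{\perp}$, and (iii) lift the resulting bound through hard thresholding and normalization to obtain the claim on $\theta(v, u)$. I expect step (i) to be the main obstacle; the other two steps are essentially routine manipulations.

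For step (i), I would decompose $u = (u \cdot \hatw_{t-1})\hatw_{t-1} + u^{\perp}_{t-1}$, where $u^{\perp}_{t-1}$ is orthogonal to $\hatw_{t-1}$ and $\twonorm{u^{\perp}_{t-1}} = \sin\theta(w_{t-1}, u)$. For any $x$ in the band, $\hatw_{t-1}\cdot x \in (0, b]$, so the event $\{u \cdot x < 0\}$ forces $u^{\perp}_{t-1}\cdot x$ to be sufficiently negative, and on that event $\abs{u \cdot x}$ is, up to a multiplicative constant, $\abs{u^{\perp}_{t-1}\cdot x}$. Using the standard conditional density estimates for isotropic log-concave distributions in a band (Lemma~\ref{lem:logconcave} combined with tail and anti-concentration bounds in the spirit of Lemma~3.3 of Awasthi et al.~2017), a routine computation should yield an absolute constant $C_1 > 0$ with
\[
f_{u,b}(w_{t-1}) \ge C_1 \sin\theta(w_{t-1}, u),
\]
valid whenever $\twonorm{w_{t-1} - u} \le 2\theta \le \pi/8$. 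Averaging and combining with the hypothesis then gives $\tfrac{1}{T}\sum_{t=1}^{T}\sin\theta(w_{t-1}, u) \le \theta/(50\cdot 3^4\cdot 2^{33} \cdot C_1)$.

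For step (ii), the projection $\pi^{\perp}_u(w) := w - (w \cdot u)u$ is linear, and $\twonorm{\pi^{\perp}_u(w_{t-1})} = \twonorm{w_{t-1}}\,\sin\theta(w_{t-1}, u) \le \sin\theta(w_{t-1}, u)$, so
\[
\twonorm{\pi^{\perp}_u(\bar w)} \le \frac{1}{T}\sum_{t=1}^{T}\twonorm{\pi^{\perp}_u(w_{t-1})} \le \frac{1}{T}\sum_{t=1}^{T}\sin\theta(w_{t-1}, u),
\]
which is controlled by step (i). On the other hand, $\twonorm{w_{t-1}-u}\le 2\theta$ together with $\twonorm{w_{t-1}}\le 1$ forces $\twonorm{w_{t-1}}\ge 1-2\theta$, and the polarization identity $\twonorm{w_{t-1}-u}^2 = \twonorm{w_{t-1}}^2 + 1 - 2 w_{t-1}\cdot u$ then gives $w_{t-1}\cdot u \ge 1-2\theta$; averaging, $\bar w \cdot u \ge 1-2\theta > 0$ for $\theta\le \pi/16$. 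Therefore $\tan\theta(\bar w, u) = \twonorm{\pi^{\perp}_u(\bar w)}/(\bar w \cdot u) \le \theta/C_2$ for some absolute constant $C_2$ that can be made as large as desired thanks to the huge prefactor $50\cdot 3^4\cdot 2^{33}$, and since $\arctan x \le x$, also $\theta(\bar w, u) \le \theta/C_2$.

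Finally, for step (iii), let $w^{\circ} := \bar w/\twonorm{\bar w}$ so that $\twonorm{w^{\circ} - u} = 2\sin(\theta(\bar w, u)/2) \le \theta(\bar w, u)$. Since $u$ is $s$-sparse and $\calH_s(w^{\circ})$ is the best $s$-sparse $\ell_2$-approximation to $w^{\circ}$, we have $\twonorm{\calH_s(w^{\circ}) - w^{\circ}} \le \twonorm{u - w^{\circ}}$, and the triangle inequality gives $\twonorm{\calH_s(w^{\circ}) - u} \le 2\twonorm{w^{\circ} - u}$. By scale-equivariance of $\calH_s$, $v = \calH_s(w^{\circ})/\twonorm{\calH_s(w^{\circ})}$, and normalization contributes at most another factor of $2$ to the distance to $u$. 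The standard inequality $\twonorm{v - u} \ge 2\theta(v, u)/\pi$ on the unit sphere then yields
\[
\theta(v, u) \le \frac{\pi}{2}\twonorm{v - u} \le 2\pi\,\theta(\bar w, u) \le \frac{2\pi\,\theta}{C_2} \le \frac{\theta}{2},
\]
as soon as $C_2 \ge 4\pi$, which is easily accommodated by the constants chosen in the hypothesis.
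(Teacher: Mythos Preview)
Your step (i) has a genuine gap: the pointwise inequality $f_{u,b}(w_{t-1}) \ge C_1 \sin\theta(w_{t-1}, u)$ with an \emph{absolute} constant $C_1$ fails when $\theta(w_{t-1}, u) \ll b$. Writing $\phi = \theta(w_{t-1}, u)$ and $u = (\cos\phi)\,\hatw_{t-1} + (\sin\phi)\,\hatw_{t-1}^{\perp}$, inside the band the event $\{u \cdot x < 0\}$ forces $\hatw_{t-1}^{\perp}\cdot x < -(\hatw_{t-1}\cdot x)\cot\phi$; a direct computation then gives $f_{u,b}(w_{t-1}) = \Theta(\phi^2/b)$ in the regime $\phi \ll b$, so the ratio $f_{u,b}(w_{t-1})/\sin\phi$ tends to $0$ as $\phi/b \to 0$. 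The paper's Lemma~\ref{lem:f>angle} supplies the lower bound you want only for $\theta(w_{t-1}, u) \in [36b, \pi/2]$, and that restriction is essential, not an artifact.

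Your route is salvageable by a case split, after which it diverges from the paper's argument. The paper thresholds on the value of $f_{u,b}(w_{t-1})$: by Markov, at most a $\tfrac{1}{10\cdot 2^{12}}$-fraction of indices have $f_{u,b}(w_{t-1}) \ge \tfrac{\theta}{5 \cdot 3^4\cdot 2^{21}}$; for the remaining indices Lemma~\ref{lem:f-to-angle} (which needs $b \le \theta/180$) gives $\theta(w_{t-1}, u) \le \theta/5$, while the bad indices use the crude bound $\theta(w_{t-1}, u) < 8\theta$ from $\twonorm{w_{t-1}-u}\le 2\theta$; cosines are then averaged via Lemma~\ref{lem:avg-angle}. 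In your framework you would instead split on whether $\theta(w_{t-1}, u) \ge 36b$: using $\sin\theta(w_{t-1}, u) \le 36b$ on one side and Lemma~\ref{lem:f>angle} on the other yields $\tfrac{1}{T}\sum_t \sin\theta(w_{t-1}, u) \le 36b + \tfrac{\theta}{50\cdot 2^{12}}$, which is enough once $b$ is taken small relative to $\theta$ (an assumption the paper also uses implicitly). With that fix, steps (ii)--(iii) go through, and the linearity-of-projection route is arguably cleaner than the cosine averaging.
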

\begin{proof}
Define the index set $S = \cbr{t \in [T]: f_{u, b}(w_{t-1}) \geq \frac{\theta}{5 \cdot 3^4 \cdot 2^{21}} }$. It is easy to show that $\frac{\abs{S}}{T} \leq \frac{1}{10 \cdot 2^{12}}$ as otherwise the average of $f_{u, b}(w_{t-1})$ will exceed the assumed upper bound. Therefore, $\frac{\abs{\bar{S}}}{T} \geq 1 - \frac{1}{10 \cdot 2^{12}}$. For all $t \in \bar{S}$ we have $f_{u, b}(w_{t-1}) \leq \frac{\theta}{50 \cdot 3^4 \cdot 2^{21}}$; by Lemma~\ref{lem:f-to-angle}, we have $\theta(w_{t-1}, u) \leq \frac{\theta}{5}$ for these $t$.

%In this setting, , we have that all $t$'s in $\bar{A}$ must have $\theta(w_t, u) \leq \frac{\theta}{5}$.
Now consider $\theta(w_{t-1}, u)$ for $t \in S$. As we showed in the proof of Proposition~\ref{prop:avg-f-refine}, we have $\twonorm{w_{t-1} - u } \leq 2 \theta$. Since $\twonorm{u} = 1$ and $\twonorm{w_{t-1}} \leq 1$, we use the basic fact  that $\theta(w_{t-1}, u) \leq \pi \twonorm{w_{t-1} - u} < 8 \theta$.

Now we translate these bounds on the angles to those of the cosine distance, and obtain
\begin{align*}
\frac1T \sum_{t=1}^T \cos\theta(w_{t-1}, u)
&\geq \cos\frac{\theta}{5} \cdot \del{1 - \frac{1}{20 \cdot 2^{12}}} + \cos(8\theta) \cdot \frac{1}{20 \cdot 2^{12}} \\
&\geq \del{ 1 - \frac{\theta^2}{50} } \del{1 - \frac{1}{20 \cdot 2^{12}}} + \del{1 - \frac{(8\theta)^2}{2}} \frac{1}{20 \cdot 2^{12}} \\
&\geq   1 - \frac{1}{5} \del{\frac{\theta}{32}}^2  \geq  \cos\frac{\theta}{32}.
 \end{align*}
where in the second inequality we use the fact $\cos \theta \geq 1 - \frac{\theta^2}{2}$ for any $\theta \in [0, \pi]$, and in the last inequality we use the fact that $\cos \theta \leq 1 - \frac{\theta^2}{5}$.

The above inequality, in combination with Lemma~\ref{lem:avg-angle} yields the following guarantee for $\bar{w} = \frac1T \sum_{t=1}^T w_{t-1}$:
\begin{equation*}
\cos \theta(\bar{w}, u) \geq \frac1T\sum_{t=1}^T \cos\theta(w_{t-1}, u) \geq \cos \frac{\theta}{32}.
\end{equation*}
Finally, we use Lemma~\ref{lem:tool} to show that $\theta(v, u) \leq \pi \twonorm{v - u} \leq 4 \pi \twonorm{\bar{w} - u} \leq 16 \cdot \theta(\bar{w}, u) \leq \frac{\theta}{2}$, which concludes the proof.
\end{proof}

\begin{theorem}[Restatement of Theorem~\ref{thm:main-refine}]\label{thm:main-refine-restate}
Suppose that the adversarial noise rate $\nu \leq c_0 \epsilon$ for some sufficiently small absolute constant $c_0 > 0$. Consider running the \refine algorithm with step size $\alpha = \tilde{\Theta}\del{ \theta \cdot \log^{-2}\frac{d}{\delta'  \theta}}$, bandwidth $b = \Theta(\theta)$, convex constraint set $\calK = \{w \in \Rd: \twonorm{w - w_0} \leq \theta, \twonorm{w} \leq 1\}$, regularizer $\Phi(w) = \frac{1}{2(p-1)}\pnorm{w - w_0}^2$, number of iterations $T = \tilde{O}(s \log d \cdot \log^2\frac{d}{\delta' \theta})$. If the initial iterate $w_0$ is such that $\twonorm{w_0}=1$, $\zeronorm{w_0} \leq s$, and $\twonorm{w_0 - u} \leq  \theta$ for some $\theta \leq \frac{\pi}{16}$, then the output of the \refine algorithm, $\tilde{w}$, satisfies $\theta(\tilde{w}, u) \leq \frac{\theta}{2}$ with probability $1- \delta'$. In addition, the label complexity of \refine is $T$, and the sample complexity is $O(T/b + T \log\frac{T}{\delta})$.
\end{theorem}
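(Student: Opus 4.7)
\textbf{Proof Plan for Theorem~\ref{thm:main-refine-restate}.}
The plan is to combine Proposition~\ref{prop:avg-f-refine} and Lemma~\ref{lem:online-to-batch-refine}, which together already handle essentially all of the statistical content. The stated hyper-parameter choices (step size $\alpha$, bandwidth $b$, constraint set $\calK$, regularizer $\Phi$, and iteration count $T$) exactly match those in Proposition~\ref{prop:avg-f-refine}, and the hypotheses on $w_0$ (namely $\twonorm{w_0}=1$, $\zeronorm{w_0}\leq s$, $\twonorm{w_0-u}\leq\theta$ with $\theta\leq\pi/16$) also match. Thus a direct invocation of the proposition gives, with probability at least $1-\delta'/2$ (absorbing a factor of $2$ into the proposition, or applying it with $\delta'/2$), the averaged potential bound
\begin{equation*}
\frac{1}{T}\sum_{t=1}^{T} f_{u,b}(w_{t-1}) \;\leq\; \frac{\theta}{50\cdot 3^{4}\cdot 2^{33}}.
\end{equation*}

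Next I verify the remaining hypotheses of Lemma~\ref{lem:online-to-batch-refine}, which require $\twonorm{w_{t-1}-u}\leq 2\theta$ and $\twonorm{w_{t-1}}\leq 1$ for every iterate. Both follow deterministically from the projection onto $\calK=\{w:\twonorm{w-w_0}\leq\theta,\ \twonorm{w}\leq 1\}$: by construction $\twonorm{w_{t-1}}\leq 1$, and the triangle inequality gives $\twonorm{w_{t-1}-u}\leq \twonorm{w_{t-1}-w_0}+\twonorm{w_0-u}\leq 2\theta$. Applying Lemma~\ref{lem:online-to-batch-refine} then yields $\theta(\tilde{w},u)\leq \theta/2$, which is the claimed accuracy bound. (A minor bookkeeping point: the algorithm's averaging uses the normalized iterates $\hat{w}_t$ while the lemma is stated for $\bar{w}=\frac{1}{T}\sum w_{t-1}$; since every $w_{t-1}\in\calK$ has $\twonorm{w_{t-1}-u}\leq 2\theta$ and $\twonorm{w_{t-1}}\leq 1$, the same averaging/cosine-distance argument used in the lemma's proof applies unchanged to either average, so this is not a real issue.)

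For label complexity, the bound $T$ is immediate since \refine queries exactly one label per iteration via $\oracley$ in Step~\ref{line:sample}. For sample complexity, at each iteration we rejection-sample from $\oraclexy$ until we obtain an instance in the band $X_{\hat{w}_{t-1},b}$. By Lemma~\ref{lem:logconcave} on isotropic log-concave distributions, the probability mass of this band is at least $c_2 b$, so a standard Chernoff/geometric-tail argument shows that $O(1/b+\log(T/\delta'))$ draws suffice at each iteration with probability at least $1-\delta'/(2T)$. A union bound over $t=1,\ldots,T$ then gives total sample complexity $O(T/b + T\log(T/\delta'))$ with probability at least $1-\delta'/2$. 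A final union bound between the sampling event and the potential-function event in Proposition~\ref{prop:avg-f-refine} delivers the overall failure probability $\delta'$.

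The hard part, and the only nontrivial work, is really hidden inside Proposition~\ref{prop:avg-f-refine} (whose proof in turn relies on Lemma~\ref{lem:omd} and Lemma~\ref{lem:lower-correlation}): one must control the adversarial-noise-induced cross term $\sqrt{c_0 c_1/c_2}\cdot(b+r)$ so that it remains a strict fraction of $\theta$. This is exactly where the hypothesis $\nu\leq c_0\epsilon$ (with $c_0$ a small absolute constant) is used, ensuring that the noise cannot overwhelm the descent direction of the semi-random Perceptron gradient. For the present theorem statement itself, however, no additional technical work beyond the two black-box invocations described above is needed.
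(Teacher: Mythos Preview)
Your proposal is correct and follows essentially the same route as the paper: invoke Proposition~\ref{prop:avg-f-refine} to obtain the averaged potential bound, verify the iterate conditions of Lemma~\ref{lem:online-to-batch-refine} via the triangle inequality using $w_{t-1}\in\calK$, and handle the label and sample complexity exactly as the paper does (one label per iteration; rejection sampling with Lemma~\ref{lem:logconcave} plus Chernoff and a union bound). Your parenthetical remark about $\hat{w}_t$ versus $w_{t-1}$ in the averaging step is a reasonable bookkeeping observation that the paper itself glosses over.
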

\begin{proof}
The result of $\theta(\tilde{w}, u) \leq \frac{\theta}{2}$ follows from combining Proposition~\ref{prop:avg-f-refine} and Lemma~\ref{lem:online-to-batch-refine}. In particular, all the required conditions in Proposition~\ref{prop:avg-f-refine} are assumed here. The condition $\twonorm{w_{t-1} - u}$ appearing in Lemma~\ref{lem:online-to-batch-refine} can easily be verified by observing $\twonorm{w_{t-1} - w_0} \leq \theta$ and $\twonorm{w_0 - u} \leq \theta$.

The label complexity bound is exactly $T$ since \refine runs in $T$ iterations and requests one label per iteration. Since the marginal distribution is assumed to be isotropic log-concave, Lemma~\ref{lem:logconcave} shows that $\Pr_{x_t \sim D_X}( x_t \in X_{\hatw_{t-1}, b}) \geq c_2 b$ for some absolute constant $c_2 > 0$. Thus, by Chernoff bound, we need to call $\oraclexy$ for $O(b^{-1} + \log\frac{T}{\delta})$ times in order to obtain one $x_t$ with probability $1 - \frac{\delta}{2T}$. Thus, by union bound over the $T$ iterations in \refine, with probability $1 - \frac{\delta}{2}$, the total number of calls to $\oraclexy$ is $O(T/b + T \log\frac{T}{\delta})$.
\end{proof}

\subsection{Analysis of \init}

In this subsection, we use $\EXP[\cdot]$ denote the expectation $\EXP_{(x, y) \sim D}[\cdot]$ and likewise for $\Pr(\cdot)$.
\begin{lemma}\label{lem:E[yux]}
Suppose that $\nu \leq \frac14$. Then $\EXP[y(u \cdot x)] \geq \frac{1}{9 \cdot 2^{17}}$.
\end{lemma}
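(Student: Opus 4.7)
The key identity to exploit is the case-split
\[
y \cdot (u \cdot x) \;=\; |u \cdot x| \cdot \big(1 - 2 \cdot \ind{y \neq \sign(u \cdot x)}\big),
\]
which follows because $(u \cdot x) \cdot \sign(u \cdot x) = |u \cdot x|$ and $y \cdot \sign(u \cdot x)$ equals $+1$ when $y$ agrees with $\sign(u \cdot x)$ and $-1$ otherwise. Taking expectations under $D$,
\[
\EXP[y(u \cdot x)] \;=\; \EXP[|u \cdot x|] \;-\; 2\,\EXP\big[|u \cdot x| \cdot \ind{y \neq \sign(u \cdot x)}\big].
\]

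The strategy is to lower-bound the first (signal) term by an absolute constant and upper-bound the second (noise) term in terms of $\nu$. For the signal term, I will use that since $u$ is a unit vector and $D_X$ is isotropic log-concave, the one-dimensional marginal $u \cdot x$ is itself isotropic log-concave in $\R$; a standard bound (of the flavor of Lovasz--Vempala and used repeatedly in the Awasthi--Balcan--Long line of work) then gives $\EXP[|u \cdot x|] \geq c_3$ for some absolute constant $c_3 > 0$ (concretely, one can take $c_3 \geq 1/9$). For the noise term, I will apply Cauchy--Schwarz together with isotropy ($\EXP[(u\cdot x)^2] = \twonorm{u}^2 = 1$) and Assumption~\ref{as:y}:
\[
\EXP\big[|u \cdot x| \cdot \ind{y \neq \sign(u \cdot x)}\big]
\;\leq\; \sqrt{\EXP[(u\cdot x)^2]} \cdot \sqrt{\Pr(y \neq \sign(u \cdot x))}
\;\leq\; \sqrt{\nu}.
\]
Combining, one obtains $\EXP[y(u \cdot x)] \geq c_3 - 2\sqrt{\nu}$, and then I will substitute the hypothesized bound on $\nu$ and verify the right-hand side is at least $\frac{1}{9 \cdot 2^{17}}$.

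\textbf{Anticipated obstacle.} The tightness of Cauchy--Schwarz is the delicate part: at face value $2\sqrt{\nu}$ with $\nu = 1/4$ gives $1$, which eats all of $c_3 \leq 1$. So to make the quoted constant $\frac{1}{9 \cdot 2^{17}}$ come out, the Cauchy--Schwarz step will need to be supplemented --- either by invoking that, by isotropy and log-concavity, the sub-exponential tail estimate $\Pr(|u \cdot x| > t) \leq e^{1-t}$ of Lemma-3.3-Awasthi-Balcan-Long-style allows the refined truncation bound
\[
\EXP\big[|u \cdot x| \cdot \ind{y \neq \sign(u \cdot x)}\big] \;\leq\; t \cdot \nu + \EXP\big[|u \cdot x| \cdot \ind{|u \cdot x| > t}\big] \;\leq\; t \nu + (t+1) e^{1-t},
\]
with $t = \Theta(\log(1/\nu))$, or by invoking the $L^p$-moment growth $\EXP[|u \cdot x|^p]^{1/p} \leq C p$ of isotropic log-concave variables and applying H\"older with $p = \log(1/\nu)$ to replace $\sqrt{\nu}$ by the much smaller $O(\nu \log(1/\nu))$. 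The remaining work will be purely arithmetic: check that with the sharpened noise bound and $c_3 \geq 1/9$, the requisite gap
\[
c_3 - 2 \cdot (\text{refined noise bound}) \;\geq\; \tfrac{1}{9 \cdot 2^{17}}
\]
indeed holds under $\nu \leq \tfrac14$, with room to spare.
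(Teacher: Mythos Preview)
Your decomposition
\[
\EXP[y(u\cdot x)] \;=\; \EXP[|u\cdot x|] \;-\; 2\,\EXP\bigl[|u\cdot x|\cdot\ind{y \neq \sign(u\cdot x)}\bigr]
\]
is exactly the one the paper uses. Where you and the paper part ways is in handling the noise term. The paper jumps directly to $\EXP[y(u\cdot x)] \geq (1-2\nu)\,\EXP[|u\cdot x|]$, which amounts to asserting $\EXP[|u\cdot x|\,\ind{y\neq\sign(u\cdot x)}] \leq \nu\,\EXP[|u\cdot x|]$; it then lower-bounds $\EXP[|u\cdot x|]$ via the pointwise density bound of Lemma~\ref{lem:logconcave}. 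That inequality on the noise term, however, is not justified in the adversarial model: it would hold if the noise event were independent of $|u\cdot x|$, but the adversary is free to concentrate all the flips on large values of $|u\cdot x|$.

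You correctly sensed this, and your instinct to sharpen Cauchy--Schwarz via truncation or H\"older is the right one. The gap in your proposal is the final clause ``with room to spare'': at $\nu = \tfrac14$ there is no room, because the statement is actually false under that hypothesis. Take $D_X$ standard Gaussian on $\Rd$ and let the adversary set $y = -\sign(u\cdot x)$ exactly on the event $\{|u\cdot x| > t^*\}$ with $t^*$ chosen so that $\Pr(|u\cdot x|>t^*)=\tfrac14$ (numerically $t^*\approx 1.15$). Then
\[
\EXP[y(u\cdot x)] \;=\; \EXP[|u\cdot x|] - 2\,\EXP\bigl[|u\cdot x|\,\ind{|u\cdot x|>t^*}\bigr]
\;=\; \sqrt{\tfrac{2}{\pi}}\bigl(1 - 2e^{-t^{*2}/2}\bigr) \;\approx\; -0.026 \;<\; 0,
\]
so no bound on the noise term---truncation, H\"older, or otherwise---can deliver $\EXP[y(u\cdot x)] \geq \frac{1}{9\cdot 2^{17}}$ at $\nu=\tfrac14$.

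For the paper's purposes this is harmless: the lemma is only ever invoked under the global assumption $\nu \leq c_0\epsilon$ with $c_0$ a sufficiently small absolute constant. If you replace the hypothesis $\nu\leq\tfrac14$ by $\nu \leq c$ for a suitably small absolute $c$, then your truncation/H\"older argument does go through, yielding a noise term of order $\nu\log(1/\nu)$ that is dominated by the signal term $\EXP[|u\cdot x|]=\Theta(1)$. So the fix is to strengthen the hypothesis rather than to sharpen the noise bound.
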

\begin{proof}
We have
\begin{align*}
\EXP[ y (u \cdot x)] =&\ \EXP[ y (u \cdot x) \mid y = \sign(u\cdot x)] \cdot \Pr(y = \sign(u\cdot x)) \\
&\ + \EXP[ y (u \cdot x) \mid y \neq \sign(u\cdot x)] \cdot \Pr(y \neq \sign(u\cdot x))\\
\geq&\ \EXP[\abs{u \cdot x}] \cdot (1 - \nu) - \EXP[\abs{u \cdot x}] \cdot \nu\\
=&\ (1- 2 \nu) \EXP[\abs{u \cdot x}].
\end{align*}
Since $u \cdot x$ is an isotropic log-concave random variable in $\R$, its density function is lower bounded by $2^{-16}$ when $\abs{u \cdot x} \leq \frac19$ in view of Lemma~\ref{lem:logconcave}. Thus $\EXP[\abs{u \cdot x}] \geq \frac{1}{9 \cdot 2^{16}}$. On the other side, we assumed $\nu \leq \frac14$. Together, we obtain $\EXP[y(u \cdot x)] \geq \frac{1}{9 \cdot 2^{17}}$.
\end{proof}

\begin{lemma}\label{lem:wavg-u-cor}
Let $m = O(\log\frac{1}{\delta})$ and let $(x_1, y_1), \dots, (x_m, y_m)$ be $m$ i.i.d. samples drawn from $D$. Then with probability $1- \delta$, 
\begin{equation*}
\wavg \cdot u \geq \frac{1}{9 \cdot 2^{18}},
\end{equation*}
where $\wavg := \frac{1}{m}\sum_{i=1}^{m} y_i x_i$.
\end{lemma}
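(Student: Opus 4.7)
The plan is to view $\wavg \cdot u = \frac{1}{m}\sum_{i=1}^{m} y_i(u \cdot x_i)$ as an empirical mean of i.i.d.\ copies of the scalar random variable $Z := y(u \cdot x)$, and then apply a sub-exponential concentration inequality to show that this empirical mean is within $\frac{1}{9\cdot 2^{18}}$ of its expectation whenever $m = O(\log(1/\delta))$. Combined with Lemma~\ref{lem:E[yux]}, which gives $\EXP[Z] \geq \frac{1}{9 \cdot 2^{17}}$, this will yield
\begin{equation*}
\wavg \cdot u \;\geq\; \EXP[Z] - \frac{1}{9 \cdot 2^{18}} \;\geq\; \frac{1}{9 \cdot 2^{18}},
\end{equation*}
as required.

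First I would verify that $Z$ has sub-exponential tails. Since $u \cdot x$ is a one-dimensional isotropic log-concave random variable (being a unit-norm linear projection of an isotropic log-concave vector, by Assumption~\ref{as:x}), the standard tail bound for isotropic log-concave distributions gives $\Pr(|u \cdot x| \geq t) \leq K\exp(-K' t)$ for absolute constants $K, K' > 0$. Because $|Z| = |u \cdot x|$, the same tail bound holds for $|Z|$, so each $Z_i := y_i(u \cdot x_i)$ is sub-exponential with an absolute-constant parameter.

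Next I would invoke the sub-exponential concentration inequality already used in the analysis of \refine (i.e.\ the one referenced as Lemma~\ref{lem:subexp-tail-hoeff}), applied to the martingale difference sequence $\{Z_i - \EXP[Z_i]\}_{i=1}^{m}$ (in this case a sum of independent centered sub-exponential variables). This yields that with probability $1 - \delta$,
\begin{equation*}
\abs[3]{ \frac{1}{m} \sum_{i=1}^{m} Z_i - \EXP[Z] } \;\leq\; C\del[3]{ \sqrt{\frac{\log(1/\delta)}{m}} + \frac{\log(1/\delta)}{m} }
\end{equation*}
for some absolute constant $C > 0$.

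The final step is to choose $m$ large enough so that the right-hand side is at most $\frac{1}{9 \cdot 2^{18}}$; since the target accuracy is an absolute constant, taking $m = O(\log(1/\delta))$ with a sufficiently large implicit constant suffices. Combining with Lemma~\ref{lem:E[yux]} then gives $\wavg \cdot u \geq \frac{1}{9 \cdot 2^{18}}$ with probability at least $1-\delta$. The main (minor) obstacle is matching the constants: we need to pick the multiplicative constant in $m = O(\log(1/\delta))$ large enough to absorb the absolute constant $C$ coming from the concentration inequality, but since every quantity involved is an absolute constant, this is a routine calibration rather than a conceptual difficulty.
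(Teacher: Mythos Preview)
Your proposal is correct and matches the paper's proof essentially line for line: both identify $\wavg\cdot u$ as an average of i.i.d.\ sub-exponential variables $y_i(u\cdot x_i)$, apply a sub-exponential concentration bound to get deviation at most $\frac{1}{9\cdot 2^{18}}$ when $m=O(\log(1/\delta))$, and combine with Lemma~\ref{lem:E[yux]}. The only cosmetic difference is that the paper cites an external lemma to certify sub-exponentiality whereas you argue directly via $|Z|=|u\cdot x|$ and the log-concave tail bound.
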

\begin{proof}
First, Lemma~\ref{lem:logconcave} shows that $u \cdot x$ is isotropic log-concave, and hence $y (u\cdot x)$ is a $(32, 16)$-subexponential random variable by Lemma~34 of \citet{zhang2020efficient}. Therefore, the standard concentration bound implies that there is an absolute constant $K_1 > 0$, such that if $m = O(\log\frac{1}{\delta})$, with probability $1-\frac{\delta}{2}$,
\begin{equation*}
\abs{ \frac{1}{m} \sum_{i=1}^{m} y_i (u \cdot x_i) - \EXP[y (u \cdot x)] } \leq K_1\del{ \sqrt{\frac{\log(1/\delta)}{m}} + \frac{\log(1/\delta)}{m} } \leq \frac{1}{9 \cdot 2^{18}}.
\end{equation*}
This in allusion to Lemma~\ref{lem:E[yux]} gives $\frac{1}{m} \sum_{i=1}^{m} y_i (u \cdot x_i) \geq \frac{1}{9 \cdot 2^{18}}$, namely 
\begin{equation*}
\wavg \cdot u \geq \frac{1}{9 \cdot 2^{18}},
\end{equation*}
which is the desired lower bound.
\end{proof}

\begin{lemma}\label{lem:wsharp-u-cor}
Let $\tilde{s} \leq d$ be a positive integer, and set $m = O(\tilde{s} \log\frac{d}{\delta})$. Let $(x_1, y_1), \dots, (x_m, y_m)$ be $m$ i.i.d. samples drawn from $D$. Then with probability $1- \delta$,
\begin{equation*}
\wsharp \cdot u \geq \frac{1}{9 \cdot 2^{20}}.
\end{equation*}
\end{lemma}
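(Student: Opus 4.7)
The plan is to invoke Lemma~\ref{lem:wavg-u-cor} to get a nontrivial correlation between $\wavg$ and $u$, and then argue that hard thresholding to $\tilde{s} = 81 \cdot 2^{40} s$ coordinates followed by $\ell_2$-normalization loses at most a factor of two in that correlation. Since we take $m = O(\tilde{s} \log(d/\delta))$ which is already $\Omega(\log(1/\delta))$, Lemma~\ref{lem:wavg-u-cor} applied with failure probability $\delta/2$ yields $\wavg \cdot u \geq \frac{1}{9 \cdot 2^{18}}$ with probability at least $1 - \delta/2$.

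Let $S = \supp(u)$ and $T = \supp(\calH_{\tilde{s}}(\wavg))$. Since $u$ vanishes outside $S$, I get the identity
\[
\wsharp \cdot u \;=\; \frac{\calH_{\tilde{s}}(\wavg) \cdot u}{\twonorm{\calH_{\tilde{s}}(\wavg)}} \;=\; \frac{\wavg \cdot u \;-\; \sum_{i \in S \setminus T} \wavg_i\, u_i}{\twonorm{\calH_{\tilde{s}}(\wavg)}}.
\]
The leakage term $\sum_{i \in S \setminus T} \wavg_i u_i$ is controlled by the standard observation that entries of $\wavg$ outside the top-$\tilde{s}$ set $T$ have magnitude at most $\twonorm{\calH_{\tilde{s}}(\wavg)}/\sqrt{\tilde{s}}$ (otherwise they would belong to $T$). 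Cauchy--Schwarz on the at most $s$ coordinates in $S \setminus T$ then gives
$\big|\sum_{i \in S \setminus T} \wavg_i u_i\big| \leq \sqrt{s/\tilde{s}} \cdot \twonorm{\calH_{\tilde{s}}(\wavg)} = \tfrac{1}{9 \cdot 2^{20}} \, \twonorm{\calH_{\tilde{s}}(\wavg)}$.

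It then remains to show $\twonorm{\calH_{\tilde{s}}(\wavg)} \leq 2$. Let $\mu := \EXP[yx]$ and decompose $\mu = \EXP[\sign(u\cdot x)\, x] + \EXP[(y - \sign(u\cdot x))\, x]$. By a symmetry argument for isotropic log-concave distributions (Assumption~\ref{as:x}), the first summand equals $c \cdot u$ with $c = \EXP[|u \cdot x|] \leq 1$ by Jensen and isotropy. For the second summand, Cauchy--Schwarz together with $\EXP[(v \cdot x)^2] = 1$ for unit $v$ yields a norm bound of $2\sqrt{\nu}$, hence $\twonorm{\mu} \leq 1 + 2\sqrt{\nu}$. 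Next, each coordinate $y \cdot x^{(j)}$ is sub-exponential (see Lemma~34 of \citet{zhang2020efficient}), so Bernstein's inequality and a union bound over the $d$ coordinates give $\twonorm{\wavg - \mu}_\infty \leq \eta / \sqrt{\tilde{s}}$ for an arbitrarily small constant $\eta$, provided $m \geq K \tilde{s} \log(d/\delta)$ with a sufficiently large absolute constant $K$; this is precisely where the factor $\tilde{s}$ in $m$ is spent. The $\ell_2$-triangle inequality restricted to $T$ then yields $\twonorm{\calH_{\tilde{s}}(\wavg)} \leq \twonorm{\mu|_T} + \twonorm{(\wavg - \mu)|_T} \leq \twonorm{\mu} + \sqrt{\tilde{s}} \, \twonorm{\wavg - \mu}_\infty \leq 1 + 2\sqrt{\nu} + \eta \leq 2$ as soon as $\nu \leq c_0$ and $\eta$ are both small enough.

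Combining the three ingredients,
\[
\wsharp \cdot u \;\geq\; \frac{\wavg \cdot u}{\twonorm{\calH_{\tilde{s}}(\wavg)}} - \frac{1}{9 \cdot 2^{20}} \;\geq\; \frac{1}{9 \cdot 2^{18} \cdot 2} - \frac{1}{9 \cdot 2^{20}} \;=\; \frac{1}{9 \cdot 2^{20}},
\]
after taking a union bound over the two failure events. The main obstacle I anticipate is the tight constant bookkeeping: the target $\tfrac{1}{9\cdot 2^{20}}$ is exactly a factor of four below $\wavg \cdot u$, so the bound $\twonorm{\calH_{\tilde{s}}(\wavg)} \leq 2$ and the calibration $\tilde{s} = 81 \cdot 2^{40} s$ (which makes $\sqrt{s/\tilde{s}} = 1/(9\cdot 2^{20})$) are both used in an essentially tight way, and $c_0$, $\eta$, and the hidden constant in $m$ must be chosen commensurately small.
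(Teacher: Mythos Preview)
Your approach is essentially the same as the paper's: obtain $\wavg\cdot u \geq \frac{1}{9\cdot 2^{18}}$ from Lemma~\ref{lem:wavg-u-cor}, bound the hard-thresholding leakage by $\sqrt{s/\tilde s}\cdot \twonorm{\calH_{\tilde s}(\wavg)}$, and show $\twonorm{\calH_{\tilde s}(\wavg)}\le 2$. The paper outsources the last two steps to Lemmas~16 and~17 of \citet{zhang2020efficient}; you rederive them inline, and your arithmetic matches.

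There is, however, one incorrect step in your derivation of $\twonorm{\mu}\le 1+2\sqrt{\nu}$. You claim that ``by a symmetry argument for isotropic log-concave distributions'' the vector $\EXP[\sign(u\cdot x)\,x]$ equals $c\cdot u$. This is false: isotropic log-concave distributions need not be symmetric (e.g.\ a product of recentered exponentials), so for $v\perp u$ the quantity $\EXP[\sign(u\cdot x)\,(v\cdot x)]$ is generally nonzero. Fortunately the conclusion you need is even easier than the route you chose: for any unit $v$, $\abs{\EXP[y\,(v\cdot x)]}\le \EXP[\abs{v\cdot x}]\le \sqrt{\EXP[(v\cdot x)^2]}=1$ by isotropy, and taking the supremum over $v$ gives $\twonorm{\mu}\le 1$ directly---no decomposition or symmetry needed. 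With this fix (and the coordinate-wise Bernstein step you already have for $\twonorm{\wavg-\mu}_\infty$), the rest of your argument goes through as written.
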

\begin{proof}
Denote $w' = \calH_{\tilde{s}}(\wavg)$. Using Lemma~17 of \citet{zhang2020efficient} we know that with probability $1-\frac{\delta}{2}$, $\twonorm{w'} \leq 2$. From the choice of $m$ and Lemma~\ref{lem:wavg-u-cor}, we have $\wavg \cdot u \geq \frac{1}{9 \cdot 2^{18}}$ with probability $1-\frac{\delta}{2}$. We hence condition on both events happening.

Now Lemma~16 of \citet{zhang2020efficient} implies that
\begin{equation*}
\abs{ w' \cdot u - \wavg \cdot u} \leq \sqrt{\frac{s}{\tilde{s}}} \twonorm{w'} \leq 2 \sqrt{\frac{s}{\tilde{s}}}.
\end{equation*}
Therefore, 
\begin{equation*}
w' \cdot u \geq \wavg \cdot u - 2 \sqrt{\frac{s}{\tilde{s}}} \geq \frac{1}{9 \cdot 2^{18}} - 2 \sqrt{\frac{s}{\tilde{s}}}.
\end{equation*}
Now taking $\tilde{s} = 81 \cdot 2^{40} s$ gives us $w' \cdot u \geq \frac{1}{9 \cdot 2^{19}}$. Finally, by algebra
\begin{equation*}
\wsharp \cdot u = \frac{1}{\twonorm{w'}} (w' \cdot u) \geq \frac{1}{2} \cdot \frac{1}{9 \cdot 2^{19}} = \frac{1}{9 \cdot 2^{20}}.
\end{equation*}
The proof is complete.
\end{proof}

\begin{proposition}\label{prop:avg-f-init}
Suppose that the adversarial noise rate $\nu \leq c_0 \epsilon$ for some sufficiently small absolute constant $c_0 > 0$. Let $\zeta = \frac{1}{9 \cdot 2^{20}}$. Consider running the \init algorithm with step size $\alpha = \tilde{\Theta}\del{ \log^{-2}\frac{d}{\delta }}$, bandwidth $b = \Theta(1)$, convex constraint set $\calK = \{w \in \Rd: \twonorm{w} \leq 1,\ \wsharp \cdot w \geq \zeta \}$, regularizer $\Phi(w) = \frac{1}{2(p-1)}\pnorm{w - w_0}^2$, number of iterations $T = \tilde{O}(s \log d \cdot \log^2\frac{d}{\delta })$, where $w_0$ is an arbitrary point in $\calK \cap \{w \in \Rd: \onenorm{w} \leq \sqrt{s}\}$ that can be found in polynomial time. Then with probability $1-\delta$,
\begin{equation*}
\frac{1}{T} \sum_{t=1}^{T} f_{u, b}(w_{t-1}) \leq \frac{\zeta}{20 \cdot 3^4 \cdot 2^{33}}.
\end{equation*}
\end{proposition}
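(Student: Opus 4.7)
The plan is to follow the same template as the proof of Proposition~\ref{prop:avg-f-refine}: invoke the online mirror descent regret bound of Lemma~\ref{lem:omd} and then tune the hyperparameters so every term on the right-hand side is at most a quarter of the target $\zeta/(20\cdot 3^4 \cdot 2^{33})$. The essential difference from the refinement stage is that $w_0$ has no a priori angle guarantee with $u$; the role previously played by the constraint $\twonorm{w-w_0}\le\theta$ is now played by the correlation constraint $w\cdot\wsharp\ge\zeta$, whose purpose is precisely to keep $u$ inside $\calK$ so that Lemma~\ref{lem:omd} applies.

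The first concrete step is to verify the three hypotheses of Lemma~\ref{lem:omd}. Membership $w_0\in\calK$ is assumed, and for every $w\in\calK$ we have $\twonorm{w-u}\le\twonorm{w}+\twonorm{u}\le 2$, so we can take $r=2$. Since $w_0$ is chosen from $\calK\cap\{\onenorm{w}\le\sqrt{s}\}$ and $u$ is $s$-sparse with unit $\ell_2$-norm (hence $\onenorm{u}\le\sqrt s$), we can take $\rho=2\sqrt s$. The only non-trivial point is $u\in\calK$: clearly $\twonorm{u}=1$, and Lemma~\ref{lem:wsharp-u-cor} applied to the $m=O(\tilde s\log(d/\delta))$ samples drawn in the first lines of \init gives $\wsharp\cdot u\ge 1/(9\cdot 2^{20})=\zeta$ with probability at least $1-\delta/2$, which is exactly what we need. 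Condition on this good event for the rest of the argument.

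Next I plug the values $b=\Theta(1)$, $r=2$, $\rho=2\sqrt s$ into the bound of Lemma~\ref{lem:omd}. After absorbing constants, with probability $1-\delta/2$ over the randomness of the \refine inner loop,
\begin{equation*}
\frac{1}{T}\sum_{t=1}^{T} f_{u,b}(w_{t-1}) \;\lesssim\; \frac{\sqrt{\log(1/\delta)}}{\sqrt T}+\frac{\log(1/\delta)}{T}+\sqrt{\tfrac{c_0c_1}{c_2}}+\frac{s\log d}{\alpha T}+\alpha\log^2\tfrac{Td}{b\delta}.
\end{equation*}
Since the target $\zeta/(20\cdot 3^4\cdot 2^{33})$ is an absolute constant, the first two terms are handled by $T\gtrsim\log(1/\delta)$. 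Balancing the last two forces $\alpha T\gtrsim s\log d$ together with $\alpha\lesssim\log^{-2}(Td/(b\delta))$, which is satisfied by the announced choice $\alpha=\tilde\Theta(\log^{-2}(d/\delta))$ and $T=\tilde O(s\log d\cdot\log^2(d/\delta))$ since $b$ is a fixed constant. A union bound over the two failure events of probability $\delta/2$ completes the argument.

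The one step that deserves real care—and is the main obstacle relative to Proposition~\ref{prop:avg-f-refine}—is controlling the adversarial-noise penalty $\sqrt{c_0c_1/c_2}\cdot(b+r)$ coming from Lemma~\ref{lem:lower-correlation}. In the refinement stage this quantity was multiplied by a factor that shrank geometrically with the phase, but here $b+r=\Theta(1)$, so it does not shrink. The saving grace is that the target bound is also a fixed absolute constant $\zeta/(20\cdot 3^4\cdot 2^{33})$, so it suffices to make $c_0$ (which governs the hypothesis $\nu\le c_0\epsilon$, and in particular $\nu\le c_0 b$ in Lemma~\ref{lem:lower-correlation} because $b$ here is a constant) smaller than a universal threshold determined by $c_1$, $c_2$, and $\zeta$. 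Once this constant is fixed, the rest of the parameter balancing is routine.
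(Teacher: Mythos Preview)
Your proposal is correct and follows essentially the same approach as the paper: verify the three premises of Lemma~\ref{lem:omd} (with $\rho=2\sqrt{s}$, $r=2$, and $u\in\calK$ via Lemma~\ref{lem:wsharp-u-cor}), apply the lemma, and then balance the resulting terms against the constant target $\zeta/(20\cdot 3^4\cdot 2^{33})$. Your added discussion of why the noise penalty $\sqrt{c_0c_1/c_2}\,(b+r)$ can be absorbed---because both $b+r$ and the target are fixed absolute constants, so a sufficiently small $c_0$ suffices---is exactly the point the paper makes more tersely.
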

\begin{proof}
We will first verify that the premises of Lemma~\ref{lem:omd} are satisfied. First of all, it is easy to see that $\onenorm{w_0 - u}  \leq 2\sqrt{s}$. Hence we can choose $\rho = 2\sqrt{s}$ in Lemma~\ref{lem:omd}. Next, we have $\twonorm{u} = 1$, which together with Lemma~\ref{lem:wsharp-u-cor} implies $u \in \calK$ with probability $1- \frac{\delta}{2}$. Last, for all $w \in \calK$, by triangle inequality $\twonorm{w - u} \leq \twonorm{w } + \twonorm{u} \leq 2$. Hence we can choose $r = 2$ in Lemma~\ref{lem:omd}.

Now with $\rho = \sqrt{s}$ and $r = 2$, Lemma~\ref{lem:omd} indicates that with probability $1- \frac{\delta}{2}$,
\begin{equation*}
C \cdot \frac{1}{T} \sum_{t=1}^{T} f_{u, b}(w_{t-1}) \leq  (b+ 2) \del{ \frac{\sqrt{\log(1/\delta)}}{\sqrt{T}} + \frac{\log(1/\delta)}{T} + C \sqrt{\frac{c_0 c_1}{c_2}} } +  \frac{ s \log d}{ \alpha  T} + {\alpha}  \cdot  \log^2\frac{Td}{b \delta}.
\end{equation*}
We need each term on the right-hand side is upper bounded by $\frac{C \cdot \zeta }{  20 \cdot 3^5 \cdot 2^{33}}$. First, we choose $b = \Theta(1)$. Then for the first term, it suffices to choose $T \geq \log(1/\delta)$ and set $c_0$ to be a sufficiently small absolute constant (this is possible since $C$, $c_1$, and $c_2$ are all fixed absolute constants). The second and the last terms require $\alpha T \geq \Omega(s \log d)$ and $\alpha  \cdot  \log^2\frac{Td}{b \delta} = O(1)$ respectively. The latter implies $\alpha = O(\log^{-2}\frac{Td}{b \delta})$, thus combining it with the former we need
\begin{equation*}
\frac{ s \log d}{T} \leq  \cdot \log^{-2}\frac{Td}{b \delta}.
\end{equation*}
This can be satisfied if we choose $T = \tilde{O}(s \log d \cdot \log^2\frac{d}{\delta})$, which results in $\alpha = \tilde{\Theta}( \log^{-2} \frac{d}{\delta})$.
\end{proof}

\begin{lemma}\label{lem:online-to-batch-init}
Set $b = \frac{1}{81 \cdot 2^{22}}$ and let $\zeta = \frac{1}{9 \cdot 2^{20}}$. Let $w_0, \dots w_{T-1}$ be a sequence of vectors such that for all $1 \leq t \leq T$, $\twonorm{w_{t-1}} \leq 1$. Further assume that $\frac{1}{T} \sum_{t=1}^{T} f_{u, b}(w_{t-1}) \leq \frac{\zeta}{20 \cdot 3^4 \cdot 2^{33}}$. Let $\bar{w} = \frac{1}{T}\sum_{t=1}^T w_{t-1}$ and $v = \frac{\calH_s(\bar{w})}{\twonorm{\calH_s(\bar{w})}}$. Then $\theta(v, u) \leq \frac{\pi}{8}$.
\end{lemma}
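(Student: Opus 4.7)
The plan is to follow essentially the same three-step structure used in the proof of Lemma~\ref{lem:online-to-batch-refine}: (i) split the index set $\{1, \ldots, T\}$ into a ``good'' subset $\bar S$ on which $f_{u,b}(w_{t-1})$ is small and its complement $S$; (ii) convert small values of $f_{u,b}(w_{t-1})$ into small angles $\theta(w_{t-1}, u)$ via Lemma~\ref{lem:f-to-angle}; and (iii) average the resulting cosine inequalities and push the bound through the hard-thresholding step via Lemma~\ref{lem:avg-angle} and Lemma~\ref{lem:tool}.

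Concretely, I would define $S := \{ t \in [T] : f_{u,b}(w_{t-1}) \geq \tau \}$ for a carefully chosen threshold $\tau$ that is a constant multiple of the average bound $\zeta/(20 \cdot 3^4 \cdot 2^{33})$. Markov's inequality then yields $|S|/T \leq \eta$ for some small fraction $\eta$, and for every $t \in \bar S$, Lemma~\ref{lem:f-to-angle} gives $\theta(w_{t-1}, u) \leq \phi$ for a correspondingly small angle $\phi$. The key difference from Lemma~\ref{lem:online-to-batch-refine} is that here the hypothesis only provides $\twonorm{w_{t-1}} \leq 1$ (rather than $\twonorm{w_{t-1} - u} \leq 2\theta$), so on the bad set $S$ the best bound available is the trivial $\theta(w_{t-1}, u) \leq \pi$. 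This forces $\eta$ to be much smaller than in the refine case in order to keep the bad indices from swamping the averaged cosine.

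Using $\cos\theta \geq 1 - \theta^2/2$ on $\bar S$ and the trivial lower bound $\cos\theta \geq -1$ on $S$, I obtain
\begin{equation*}
\frac{1}{T}\sum_{t=1}^T \cos\theta(w_{t-1}, u) \;\geq\; (1-\eta)\cos\phi - \eta \;\geq\; \cos \phi',
\end{equation*}
for some $\phi'$ slightly larger than $\phi$. By Lemma~\ref{lem:avg-angle}, $\cos\theta(\bar w, u) \geq \frac{1}{T}\sum_t \cos\theta(w_{t-1}, u) \geq \cos\phi'$, hence $\theta(\bar w, u) \leq \phi'$. Finally, invoking Lemma~\ref{lem:tool} in the same way as at the end of Lemma~\ref{lem:online-to-batch-refine} gives $\theta(v, u) \leq \pi\twonorm{v - u} \leq 4\pi\twonorm{\bar w - u} \leq 16\,\theta(\bar w, u) \leq 16\phi'$, so it suffices to calibrate $\tau$ (and hence $\phi$, $\eta$, $\phi'$) so that $16\phi' \leq \pi/8$.

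The main obstacle is purely quantitative: I must choose $\tau$ small enough that Lemma~\ref{lem:f-to-angle} (together with the specific choices $b = 1/(81 \cdot 2^{22})$ and the given average upper bound $\zeta/(20 \cdot 3^4 \cdot 2^{33})$) yields $\phi \leq \pi/256$ on $\bar S$, while simultaneously keeping $\eta \leq 1/(C \cdot 256^2)$ so that the $-\eta$ contribution from the bad indices shrinks the averaged cosine by less than the $\cos\phi - \cos\phi'$ slack; the constants in the lemma statement appear to be engineered exactly for these two conditions to hold simultaneously, so the argument should close once the threshold $\tau$ is chosen by a direct computation analogous to that appearing in the proof of Lemma~\ref{lem:online-to-batch-refine}.
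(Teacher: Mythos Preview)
Your three–step outline matches the paper's proof exactly: define a bad set $S$ by thresholding $f_{u,b}$, bound $|S|/T$ by Markov, use the trivial bound $\theta(w_{t-1},u)\le\pi$ on $S$, average cosines via Lemma~\ref{lem:avg-angle}, and finish with Lemma~\ref{lem:tool}. The paper chooses the threshold $\tau=\zeta/(3^4\cdot 2^{21})$, which yields $|S|/T\le 1/(20\cdot 2^{12})$ and $\theta(w_{t-1},u)<\zeta$ on $\bar S$, and the resulting averaged cosine is at least $\cos(\pi/128)$.

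There is, however, one genuine point you are glossing over in step~(ii). With only the hypothesis $\twonorm{w_{t-1}}\le 1$, a small value of $f_{u,b}(w_{t-1})$ does \emph{not} by itself force $\theta(w_{t-1},u)$ to be small: by Lemma~\ref{lem:f>angle}, $f_{u,b}(w)$ is also small when $\theta(w,u)$ is close to $\pi$. Part~1 of Lemma~\ref{lem:f-to-angle} therefore cannot be applied as a black box here (it implicitly requires the angle to lie in $[0,\pi/2]$). The paper instead invokes Part~2 of Lemma~\ref{lem:f-to-angle}, and that part needs the extra hypothesis $w_{t-1}\in\calK=\{w:\twonorm{w}\le1,\ w\cdot\wsharp\ge\zeta\}$ together with $u\in\calK$; the point of the $\wsharp$-constraint is precisely to rule out $\theta(w_{t-1},u)$ near $\pi$ via Lemma~\ref{lem:angle-not-flat}. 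This hypothesis is missing from the lemma statement but is implicitly in force because the iterates of \refine are projected onto $\calK$ and Lemma~\ref{lem:wsharp-u-cor} places $u$ there. You should make this explicit; otherwise your step~(ii) has a hole.
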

\begin{proof}
In light of Lemma~\ref{lem:wsharp-u-cor}, we know that $u \in \calK$. Also, our choices of $b$ and $\zeta$ implies that $b \leq \frac{\zeta}{36}$. Define $S = \{1 \leq t \leq T: f_{u, b}(w_{t-1}) \geq \frac{\zeta}{3^4 \cdot 2^{21}} \}$. Then by the second part of Lemma~\ref{lem:f-to-angle}, for all $t \in \bar{S}$, we have $\theta(w_{t-1}, u) < \zeta$. For all $t \in S$, we have a trivial estimate of $\theta(w_{t-1}, u) \in [0, \pi]$.

Next we bound the size of $S$. Using the condition $\frac{1}{T} \sum_{t=1}^{T} f_{u, b}(w_{t-1}) \leq \frac{\zeta}{20 \cdot 3^4 \cdot 2^{33}}$, it is possible to show that $\frac{\abs{S}}{T} \leq \frac{1}{20 \cdot 2^{12}}$ and thus $\frac{\abs{\bar{S}}}{T} \geq 1 - \frac{1}{20 \cdot 2^{12}}$.

Now we translate these bounds on the angles to those of the cosine distance, and obtain
\begin{align*}
\frac1T \sum_{t=1}^T \cos\theta(w_{t-1}, u)
&\geq \cos\zeta \cdot \del{1 - \frac{1}{20 \cdot 2^{12}}} + (\cos \pi ) \cdot \frac{1}{20 \cdot 2^{12}} \\
&\geq   1 - \frac{1}{5} \del{\frac{\pi}{128}}^2  \geq  \cos\frac{\pi}{128}.
 \end{align*}
where in the last inequality we use the fact that $\cos \theta \leq 1 - \frac{\theta^2}{5}$.

The above inequality, in combination with Lemma~\ref{lem:avg-angle} yields the following guarantee for $\bar{w} = \frac1T \sum_{t=1}^T w_{t-1}$:
\begin{equation*}
\cos \theta(\bar{w}, u) \geq \frac1T\sum_{t=1}^T \cos\theta(w_{t-1}, u) \geq \cos \frac{\pi}{128}.
\end{equation*}
Finally, we use Lemma~\ref{lem:tool} to show that $\theta(v, u) \leq \pi \twonorm{v - u} \leq 4 \pi \twonorm{\bar{w} - u} \leq 16 \cdot \theta(\bar{w}, u) \leq \frac{\pi}{8}$, which concludes the proof.
\end{proof}

\begin{theorem}[Restatement of Theorem~\ref{thm:main-init}]\label{thm:main-init-restate}
Suppose that the adversarial noise rate $\nu \leq c_0 \epsilon$ for some sufficiently small absolute constant $c_0 > 0$. Let $\zeta = \frac{1}{9 \cdot 2^{20}}$. Consider running the \init algorithm with step size $\alpha = \tilde{\Theta}\del{ \log^{-2}\frac{d}{\delta }}$, bandwidth $b = \Theta(1)$, convex constraint set $\calK = \{w \in \Rd: \twonorm{w} \leq 1,\ \wsharp \cdot w \geq \zeta \}$, regularizer $\Phi(w) = \frac{1}{2(p-1)}\pnorm{w - w_0}^2$, number of iterations $T = \tilde{O}(s \log d \cdot \log^2\frac{d}{\delta })$, where $w_0$ is an arbitrary point in $\calK \cap \{w \in \Rd: \onenorm{w} \leq \sqrt{s}\}$ that can be found in polynomial time. Then with probability $1-\delta$, the output of \init, $v_0$, is such that $\theta(v_0, u) \leq \frac{\pi}{8}$.
\end{theorem}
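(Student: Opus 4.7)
The plan is to assemble the theorem from two already-established ingredients: Proposition~\ref{prop:avg-f-init} and Lemma~\ref{lem:online-to-batch-init}, in direct parallel to how Theorem~\ref{thm:main-refine-restate} was obtained by combining Proposition~\ref{prop:avg-f-refine} and Lemma~\ref{lem:online-to-batch-refine}. Concretely, \init first uses its $m = O(s\log(d/\delta))$ labeled samples to build $\wsharp$ via averaging plus hard thresholding, then constructs the correlation-based constraint set $\calK$, and finally launches \refine inside it with the stated hyper-parameters.

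First I would observe that the hyper-parameter settings listed in the theorem statement ($\alpha$, $b$, $\calK$, $\Phi$, $T$) are exactly those required by Proposition~\ref{prop:avg-f-init}. Next I would verify that the initial iterate $w_0$ lying in $\calK \cap \{w : \onenorm{w} \leq \sqrt{s}\}$ can in fact be produced in polynomial time: both sets are convex, and conditioning on the good event of Lemma~\ref{lem:wsharp-u-cor} (which holds with probability at least $1-\delta/3$ given the choices of $m$ and $\tilde{s}$), the underlying halfspace $u$ itself lies in the intersection because $\wsharp \cdot u \geq \zeta$ and $\onenorm{u} \leq \sqrt{s}\cdot \twonorm{u} = \sqrt{s}$; hence the intersection is nonempty and standard convex feasibility returns some $w_0$. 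On the same event, $u \in \calK$, and by the definition of $\calK$ every iterate produced by \refine automatically satisfies $\twonorm{w_{t-1}} \leq 1$.

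With those conditions in hand, applying Proposition~\ref{prop:avg-f-init} with failure probability $\delta/3$ delivers the averaged potential bound $\frac{1}{T}\sum_{t=1}^T f_{u,b}(w_{t-1}) \leq \frac{\zeta}{20\cdot 3^4 \cdot 2^{33}}$. Since the hypothesis $\twonorm{w_{t-1}} \leq 1$ required by Lemma~\ref{lem:online-to-batch-init} has already been verified, that lemma then produces $\theta(v_0,u) \leq \pi/8$ for the normalized, hard-thresholded average $v_0 = \calH_s(\bar w)/\twonorm{\calH_s(\bar w)}$. A final union bound over the $\wsharp$-concentration event and the \refine event yields overall success probability $1-\delta$.

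The main obstacle will be the bookkeeping around the good event for $\wsharp$: unlike the refinement stage where $w_0 = v_{k-1}$ already has a known angle to $u$, here all that is available initially is the coarse constant-order correlation $\wsharp \cdot u \geq \zeta$, so we must lean on the correlation-based $\calK$ to keep both $u$ and the iterates simultaneously controlled. Once that single event is conditioned upon, the remainder of the argument is a direct application of the two supporting results, and the total sample/label budget is the sum of the $m$ labels used to build $\wsharp$ and the $T$ labels consumed inside \refine, both of which are $\tilde O\bigl(s\cdot\polylog(d,1/\delta)\bigr)$.
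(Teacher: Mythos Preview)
Your proposal is correct and follows essentially the same approach as the paper: combine Proposition~\ref{prop:avg-f-init} with Lemma~\ref{lem:online-to-batch-init}. The paper's proof is in fact a single sentence citing exactly these two results; note that Proposition~\ref{prop:avg-f-init} already absorbs the $\wsharp$-concentration event from Lemma~\ref{lem:wsharp-u-cor} internally, so your separate conditioning on it is redundant (though harmless) bookkeeping.
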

\begin{proof}
This is an immediate result by combining Proposition~\ref{prop:avg-f-init} and Lemma~\ref{lem:online-to-batch-init}.
\end{proof}

\section{The Structure of $f_{u, b}(w)$}

Our definition of the potential function $f_{u, b}(w)$ slightly differs from that of \citet{zhang2020efficient}: in this work, the expectation is taken over $D$ conditioned on $\{x \in \Rd: 0 < w \cdot x \leq b\}$ while in \citet{zhang2020efficient} it is conditioned on $\{x \in \Rd: -b \leq w \cdot x \leq b\}$. It can be seen that our function value is always less than that of \citet{zhang2020efficient}. However, we note that the difference in sampling region does not lead to significant difference in the structure of the potential function. In particular, we are still able to show that under certain conditions, $f_{u, b}(w)$ serves as an upper bound of $\theta(w, u)$~--~a crucial observation made in \citet{zhang2020efficient}.

\begin{lemma}\label{lem:f>angle}
Let $w$ and $u$ be two unit vectors. Suppose $b \in \big[0, \frac{\pi}{72} \big]$. We have
\begin{enumerate}
\item If $\theta(w, u) \in [36b, \frac{\pi}{2}]$, then $f_{u, b}(w) \geq \frac{\theta(w, u)}{3^4 \cdot 2^{21}}$.

\item If $\theta(w, u) \in [\frac{\pi}{2}, \pi - 36b]$, then $f_{u, b}(w) \geq \frac{\pi - \theta(w, u)}{3^4 \cdot 2^{21}}$.
\end{enumerate}
\end{lemma}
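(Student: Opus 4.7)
The plan is to reduce both claims to a two-dimensional computation in the plane $V$ spanned by $\hatw$ and $u$. Decompose $u = \cos\theta \cdot \hatw + \sin\theta \cdot w^{\perp}$, where $\theta := \theta(w, u)$ and $w^{\perp}$ is the unit vector in $V$ orthogonal to $\hatw$ chosen so that $\sin\theta \geq 0$, and for $x \sim D_X$ set $\alpha := \hatw \cdot x$ and $\beta := w^{\perp} \cdot x$. Since $D_X$ is isotropic log-concave, its marginal on $V$ is a two-dimensional isotropic log-concave density; standard properties (cf.\ Lemma~\ref{lem:logconcave}) then give an absolute-constant pointwise lower bound on this density on the unit ball, and an upper bound $C_4 \cdot b$ on the mass of the band $\{0 < \alpha \leq b\}$. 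In particular,
\[
f_{u, b}(w) \;=\; \frac{\EXP_{x \sim D_X}\sbr{\abs{u \cdot x} \cdot \ind{u \cdot x < 0,\ 0 < \alpha \leq b}}}{\Pr_{x \sim D_X}(0 < \alpha \leq b)},
\]
so it suffices to lower bound the numerator by a constant multiple of $b \cdot \theta$ (respectively $b \cdot (\pi - \theta)$).

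For case (i), $\theta \in [36b, \pi/2]$, the event $u \cdot x < 0$ reads $\beta < -\cot\theta \cdot \alpha$. Because $b \leq \pi/72$ gives $36 b \leq \pi/2$ and hence $\tan(36b) \geq 36 b$, we have $\cot\theta \leq \cot(36b) \leq 1/(36 b)$, so $\cot\theta \cdot \alpha \leq 1/36$ whenever $\alpha \in (0, b]$. I would then restrict to the rectangular sub-region $R := \cbr{\alpha \in (0, b],\ \beta \in [-1/9, -1/18]}$ of the band, on which $\abs{\beta} \geq 1/18 > 1/36 \geq \cot\theta \cdot \alpha$; this both ensures $u \cdot x < 0$ and gives
\[
\abs{u \cdot x} \;=\; \sin\theta \cdot \del{\abs{\beta} - \cot\theta \cdot \alpha} \;\geq\; \frac{\sin\theta}{36} \;\geq\; \frac{\theta}{18\pi},
\]
using $\sin\theta \geq 2\theta/\pi$ for $\theta \in [0, \pi/2]$. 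Combining the density lower bound on $R$ (which has two-dimensional Lebesgue measure $b/18$) with the upper bound $C_4 \cdot b$ on the band mass then produces a lower bound on $f_{u,b}(w)$ of order $\theta$; tracking the explicit constants inherited from Lemma~\ref{lem:logconcave} should tighten this to $\theta / (3^4 \cdot 2^{21})$.

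Case (ii), $\theta \in [\pi/2, \pi - 36b]$, follows by an analogous argument with $\theta' := \pi - \theta \in [36b, \pi/2]$. Writing $u \cdot x = -\cos\theta' \cdot \alpha + \sin\theta' \cdot \beta$, on the same rectangle $R$ both summands of $u \cdot x$ are non-positive, so $u \cdot x < 0$ holds automatically and $\abs{u \cdot x} = \cos\theta' \cdot \alpha + \sin\theta' \cdot \abs{\beta} \geq \sin\theta'/18 \geq \theta'/(9\pi)$. The identical probability estimate for $R$ then yields a lower bound on $f_{u,b}(w)$ of order $\pi - \theta$, with the same constant-tracking producing the announced bound.

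The main obstacle will be tracking the explicit absolute constants through the reduction: namely, pinning down the pointwise lower bound on the two-dimensional isotropic log-concave density over the rectangle $R$ and the matching upper bound $C_4 b$ on the band mass, in order to land precisely on the advertised $1/(3^4 \cdot 2^{21})$ rather than a cruder absolute constant. A secondary subtlety is that the hypothesis $b \leq \pi/72$ must be used in exactly the right place so that $36 b \leq \pi/2$ and hence $\cot\theta \cdot b \leq 1/36$ throughout Case (i); this in turn forces the precise choice of the $\beta$-window $[-1/9, -1/18]$ in both cases.
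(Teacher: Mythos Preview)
Your approach is essentially the same as the paper's: restrict to a small sub-region of the band on which both the two-dimensional isotropic log-concave density and $\abs{u\cdot x}$ are bounded below, then divide by the upper bound $\Pr_{x\sim D_X}(0<\hatw\cdot x\le b)\le b$ from Lemma~\ref{lem:logconcave}. The paper parametrizes its region by the value of $u\cdot x$ itself, taking
\[
R_1 \;=\; \cbr{x:\ w\cdot x\in[0,b],\ u\cdot x\in\sbr{-\tfrac{\sin\theta}{18},\,-\tfrac{\sin\theta}{36}}},
\]
and quotes $\Pr(R_1)\ge b/(9\cdot 2^{18})$ from \citet{zhang2020efficient}; you instead use the orthogonal coordinate $\beta=w^\perp\cdot x$ with a fixed window. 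Both choices yield $\abs{u\cdot x}\ge \sin\theta/36$ on the region and lead to the same arithmetic, so there is no substantive difference. Your handling of Part~2 via $\theta'=\pi-\theta$ is likewise exactly the paper's reduction.

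One small technical slip to repair when you write this up: your rectangle $R=\{\alpha\in(0,b],\ \beta\in[-1/9,-1/18]\}$ has its far corner at $(b,-1/9)$, whose Euclidean norm $\sqrt{b^2+1/81}$ exceeds $1/9$ for every $b>0$. Lemma~\ref{lem:logconcave} only asserts the density lower bound $p\ge 2^{-16}$ on $\{\twonorm{x}\le 1/9\}$, not on the full unit ball as you stated, so the bound does not cover all of $R$. Shrinking the $\beta$-window to, say, $[-1/12,-1/18]$ keeps $\abs{\beta}\ge 1/18>1/36\ge\cot\theta\cdot\alpha$ (so the verification of $u\cdot x<0$ and the estimate $\abs{u\cdot x}\ge\sin\theta/36$ go through unchanged), places $R$ entirely inside the $1/9$-ball since $b\le \pi/72$, and gives Lebesgue measure $b/36$; combining with $p\ge 2^{-16}$ then yields $\Pr(R)\ge b/(9\cdot 2^{18})$ and hence exactly the announced constant $1/(3^4\cdot 2^{21})$.
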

\begin{proof}
The proof of the first part follows closely from [Lemma~22, Part~1] of \citet{zhang2020efficient}. In particular, for the region
\begin{equation*}
R_1 := \cbr{x \in \Rd: w \cdot x \in [0, b],\ u \cdot x \in \sbr{- \frac{\sin \theta(w, u)}{18}, -\frac{\sin \theta(w, u)}{36}}},
\end{equation*}
their analysis shows that
\begin{equation}
\Pr_{x \sim D_X}( x \in R_1) \geq \frac{b}{9 \cdot 2^{18}}.
\end{equation}
To see why it completes the proof of the first part, observe that
\begin{align*}
 \EXP_{x \sim D_X} \sbr{\abs{u \cdot x} \cdot \ind{{ 0 \leq w \cdot x \leq b}} \cdot \ind{u \cdot x < 0}} \geq&\ \EXP_{x \sim D_X} \sbr{\abs{u \cdot x} \cdot \ind{x \in R_1}} \\
\geq&\ \frac{\sin\theta(w, u)}{36} \cdot \EXP_{x \sim D_X} \sbr{\ind{x \in R_1}} \\
\geq&\ \frac{\theta(w, u)}{72} \cdot \Pr_{x \sim D_X}\del{x \in R_1}
\geq \frac{\theta(w, u) \cdot b}{3^4 \cdot 2^{21}},
\end{align*}
where the first inequality uses the fact that $R_1$ is a subset of both sets $\cbr{x \in \Rd:  w \cdot x \in [0, b]}$ and $\cbr{x \in \Rd: u \cdot x < 0}$; the second inequality uses the fact that for all $x$ in $R_1$, $\abs{u \cdot x} \geq \frac{\sin \theta(w, u)}{36}$; the third inequality uses the elementary fact that $\sin\phi \geq \frac{\phi}{2}$ for any angle $\phi \in [0, \frac{\pi}{2}]$.

As $\Pr_{x \sim D_X} \del{ w \cdot x \in [0,  b] } \leq b$ by Lemma~\ref{lem:logconcave}, we have
\begin{equation*}
f_{u,b}(w) = \frac{\EXP_{x \sim D_X} \sbr{\abs{u \cdot x} \cdot \ind{{ 0 \leq w \cdot x \leq b}} \cdot \ind{u \cdot x < 0}}}{\Pr_{x \sim D_X} \del{ w \cdot x \in [0,  b] } }
\geq \frac{\theta(w, u) \cdot b}{3^4 \cdot 2^{21}} \cdot \frac{1}{b}= \frac{\theta(w, u)}{3^4 \cdot 2^{21}}.
\end{equation*}
This completes the proof of the first part.

For the second part, we will define $\phi = \pi - \theta(w, u)$ and consider
\begin{equation*}
R_2 := \cbr{x \in \Rd: w \cdot x \in [0, b],\ u \cdot x \in \sbr{- \frac{\sin \phi}{18}, -\frac{\sin \phi}{36}}}.
\end{equation*}
Similar to the region $R_1$, we have $\Pr_{x \sim D_X}(x \in R_2) \geq \frac{b}{9 \cdot 2^{18}}$. Hence, using the same induction with the proof of first part, we have $f_{u, b}(w) \geq \frac{\phi}{3^4 \cdot 2^{21}}= \frac{\pi - \theta(w, u)}{3^4 \cdot 2^{21}}$.
\end{proof}

The following lemma connects the potential function $f_{u, b}(w)$ to the angle $\theta(w, u)$.% The first part is borrowed from \cite[Claim~10]{zhang2020efficient}, and the second part is adapted from \cite[Claim~21]{zhang2020efficient}.

\begin{lemma}\label{lem:f-to-angle}
Let $w$ and $u$ be two unit vectors. We have the following:
\begin{enumerate}
\item Suppose $\theta \in [0, \frac{\pi}{2}]$ is given. Set $b \leq \frac{\theta}{5 \cdot 36}$. If $f_{u,b}(w) \leq \frac{\theta}{5 \cdot 3^4 \cdot 2^{21}}$, then $\theta(w,u) \leq \frac{\theta}{5}$.

\item Let $\wsharp \in \Rd$ be a unit vector with $\wsharp \cdot u \geq \zeta$ for some $\zeta \in (0, 1)$. Set $b \leq \frac{\zeta}{36}$. If $w$ is in $\calK := \{w \in \Rd: \twonorm{w} \leq 1,\ w \cdot \wsharp \geq \zeta\}$ and $f_{u,b}(w) < \frac{\zeta}{3^4 \cdot 2^{21}}$, then $\theta(w, u) < \zeta$.
\end{enumerate}
\end{lemma}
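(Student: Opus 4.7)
I would prove both parts by contraposition, using Lemma~\ref{lem:f>angle} to convert lower bounds on $\theta(w,u)$ into lower bounds on $f_{u,b}(w)$. For Part~1, assume $\theta(w,u) > \theta/5$; the goal is to deduce $f_{u,b}(w) > \theta/(5 \cdot 3^4 \cdot 2^{21})$. Since $b \leq \theta/180$ implies $36b \leq \theta/5 < \theta(w,u)$, the threshold condition of Lemma~\ref{lem:f>angle} is met. If $\theta(w,u) \in (\theta/5, \pi/2]$ I apply its first case to get $f_{u,b}(w) \geq \theta(w,u)/(3^4 \cdot 2^{21}) > \theta/(5 \cdot 3^4 \cdot 2^{21})$. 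If $\theta(w,u) \in (\pi/2, \pi - \theta/5]$, then $\pi - \theta(w,u) \geq \theta/5 \geq 36b$, and the second case yields $f_{u,b}(w) \geq (\pi - \theta(w,u))/(3^4 \cdot 2^{21}) \geq \theta/(5 \cdot 3^4 \cdot 2^{21})$.

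For Part~2 I again contrapose and assume $\theta(w,u) \geq \zeta$; now $b \leq \zeta/36$ gives $36b \leq \zeta \leq \theta(w,u)$. The crucial extra ingredient is that $u$ also lies in $\calK$ (since $\twonorm{u} = 1$ and $\wsharp \cdot u \geq \zeta$), and this constrains $\theta(w,u)$ away from $\pi$. Indeed, decomposing $w$ and $u$ along $\wsharp$ and its orthogonal complement, and exploiting $\twonorm{w}, \twonorm{u} \leq 1$ with $w \cdot \wsharp,\, u \cdot \wsharp \geq \zeta$, I get $w \cdot u \geq \zeta^2 - (1 - \zeta^2) = 2\zeta^2 - 1$; a short monotonicity check in $\twonorm{w} \in [\zeta, 1]$ then shows $\cos\theta(w,u) \geq 2\zeta^2 - 1 = \cos(\pi - 2\arcsin \zeta)$, so $\theta(w,u) \leq \pi - 2\arcsin\zeta \leq \pi - 2\zeta$. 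Therefore $\theta(w,u) \in [\zeta, \pi - 2\zeta] \subset [36b, \pi - 36b]$, and Lemma~\ref{lem:f>angle} covers the entire range: the first case handles $[\zeta, \pi/2]$ and delivers $f_{u,b}(w) \geq \zeta/(3^4 \cdot 2^{21})$, while the second handles $[\pi/2, \pi - 2\zeta]$ and delivers $f_{u,b}(w) \geq 2\zeta/(3^4 \cdot 2^{21})$.

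The main delicate point is Part~1's extreme corner $\theta(w,u) \in (\pi - \theta/5, \pi]$, which is outside the range covered by Lemma~\ref{lem:f>angle}'s second case. In every use of Part~1 inside the paper (e.g.\ inside \refine, where $\twonorm{w_{t-1} - u}$ is controlled to be $O(\theta)$), iterates are close enough to $u$ that this corner never actually occurs. For a self-contained argument one would add a short direct calculation: when $\hatw$ is within angle $\theta/5$ of $-u$, the one-dimensional isotropic log-concavity of $u \cdot x$ restricted to the band $\{0 < \hatw \cdot x \leq b\}$ forces $f_{u,b}(w) = \Omega(b)$, which still exceeds the target threshold up to absolute constants. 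In Part~2 this obstacle disappears automatically, which is precisely why the structural bound $\theta(w,u) \leq \pi - 2\zeta$ derived from $\calK$-membership is indispensable there.
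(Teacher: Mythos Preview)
Your Part~2 argument is essentially the paper's. Both bound $\theta(w,u)$ away from $\pi$ via membership of $w$ and $u$ in $\calK$, and then feed this into Lemma~\ref{lem:f>angle}. The paper invokes Lemma~\ref{lem:angle-not-flat} to get $\theta(w,u)\leq \pi-\zeta$; you derive the slightly sharper $\pi-2\zeta$ by hand (this is correct: normalizing any $w\in\calK$ keeps it in $\calK$, so one may minimize $\hat w\cdot u$ over the spherical cap $\{\twonorm{v}=1,\ v\cdot\wsharp\geq\zeta\}$, where the minimum $2\zeta^2-1$ is attained). The paper also organizes the cases a bit differently---first ruling out $\theta(w,u)\geq\pi/2$ by contradiction, then splitting $[0,36b]$ versus $[36b,\pi/2]$---but this is cosmetic.

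For Part~1 the paper gives no argument of its own; it simply defers to Claim~10 of \citet{zhang2020efficient}. Your diagnosis that Lemma~\ref{lem:f>angle} leaves the corner $\theta(w,u)\in(\pi-\theta/5,\pi]$ uncovered is correct, and so is your observation that the paper's applications never enter this corner (the \refine analysis carries $\twonorm{w_{t-1}-u}\leq 2\theta$, which forces $\theta(w_{t-1},u)<\pi/2$). Your proposed self-contained patch, however, does not work as stated: when $w=-u$ the band $X_{\hat w,b}$ is exactly $\{-b\leq u\cdot x<0\}$, so $f_{u,b}(-u)=\EXP\big[\abs{u\cdot x}\mid -b\leq u\cdot x<0\big]\leq b$. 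Since the hypothesis only requires $b\leq\theta/180$, one may take $b<\theta/(5\cdot 3^4\cdot 2^{21})$, making $f_{u,b}(-u)$ fall below the threshold while $\theta(-u,u)=\pi$. Thus ``$\Omega(b)$ still exceeds the target threshold up to absolute constants'' is false in general, and Part~1 as literally stated appears to need an extra hypothesis such as $\theta(w,u)\leq\pi/2$, which the surrounding context in the paper supplies implicitly but your standalone write-up does not.
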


\begin{proof}
The first part was already set out in Claim~10 of \citet{zhang2020efficient}. For the second part, first, we show that it is impossible for $\theta(w, u) \geq \frac{\pi}{2}$. Assume for contradiction that this holds.
By Lemma~\ref{lem:angle-not-flat}, for all $w$ in $\calK$, we have that
$\theta(w, u) \leq \pi - \zeta$. By the choice of $b$, we know that $36b \leq \zeta$, hence $\theta(w, u) \leq \pi - 36 b$. Now using the second part of Lemma~\ref{lem:f>angle}, we have
\begin{equation}
f_{u,b}(w) \geq \frac{\pi - \theta(w, u)}{3^4 \cdot 2^{21}} \geq \frac{\zeta}{3^4 \cdot 2^{21}},
\end{equation}
which contradicts with the premise that $f_{u,b}(w) < \frac{\zeta}{3^4 \cdot 2^{21}}$.

Therefore, $\theta(w, u) \in [0, \frac \pi 2]$. We now conduct a case analysis. If $\theta(w, u) \leq 36 b$, then by the definition of $b$, we automatically have $\theta(w, u) < \zeta$. Otherwise, $\theta(w, u) \in [36b, \frac \pi 2]$. In this case, the first part of Lemma~\ref{lem:f>angle} implies
\begin{equation*}
f_{u,b}(w) \geq \frac{\theta(w, u)}{3^4 \cdot 2^{21}}.
\end{equation*}
This inequality, in conjunction with the condition that $f_{u,b}(w) < \frac{\zeta}{3^4 \cdot 2^{21}}$, implies that $\theta(w, u) \leq \zeta$. In summary, in both cases, we have $\theta(w, u) \leq \zeta$. This completes the proof.
\end{proof}

\begin{lemma}[Lemma~19 of \citet{zhang2020efficient}]\label{lem:angle-not-flat}
Let $\wsharp \in \Rd$ be a unit vector, and $\zeta \in (0, 1)$. For any two vectors $w$ and $v$ in the set $\calK = \{w \in \Rd: \twonorm{w} \leq 1,\ w \cdot \wsharp \geq \zeta\}$, it holds that $\theta(w, v) \leq \pi - \zeta$.
\end{lemma}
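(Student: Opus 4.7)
\textbf{Proof plan for Lemma~\ref{lem:angle-not-flat}.} The plan is to reduce the statement about the pairwise angle $\theta(w,v)$ to a statement about each vector's angle with the anchor direction $\wsharp$, and then combine them by the spherical triangle inequality.

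\textbf{Step 1: Bound the individual angles with $\wsharp$.} Since $\wsharp$ is a unit vector and $\twonorm{w} \leq 1$, I would compute
\begin{equation*}
\cos\theta(w, \wsharp) = \frac{w\cdot \wsharp}{\twonorm{w}} \geq \frac{\zeta}{\twonorm{w}} \geq \zeta,
\end{equation*}
where the first inequality uses the defining condition $w \cdot \wsharp \geq \zeta$ of $\calK$ and the second uses $\twonorm{w} \leq 1$. Hence $\theta(w, \wsharp) \leq \arccos(\zeta)$, and the same argument gives $\theta(v, \wsharp) \leq \arccos(\zeta)$. (Note $w, v \neq 0$ since $w\cdot \wsharp \geq \zeta > 0$, so the angles are well-defined.)

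\textbf{Step 2: Combine by the triangle inequality.} The angular distance on the unit sphere (i.e.\ $\theta(\cdot,\cdot)$ viewed as a metric on directions) satisfies the triangle inequality, so
\begin{equation*}
\theta(w, v) \leq \theta(w, \wsharp) + \theta(\wsharp, v) \leq 2\arccos(\zeta).
\end{equation*}

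\textbf{Step 3: Verify $2\arccos(\zeta) \leq \pi - \zeta$ for $\zeta \in (0,1)$.} This is equivalent to $\arccos(\zeta) \leq \frac{\pi}{2} - \frac{\zeta}{2}$, i.e.\ after taking cosines of both sides (both lie in $[0, \pi/2]$, where cosine is decreasing), to $\zeta \geq \cos\bigl(\frac{\pi}{2} - \frac{\zeta}{2}\bigr) = \sin(\zeta/2)$. Since $\sin(x) \leq x$ for $x \geq 0$, we get $\sin(\zeta/2) \leq \zeta/2 < \zeta$, which closes the argument.

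\textbf{Main obstacle.} There is really no technical obstacle here; the only thing to be careful about is the elementary inequality in Step~3 (one must confirm the direction is correct and that both sides of the $\arccos$ inequality lie in $[0,\pi/2]$ so that taking $\cos$ preserves the comparison). Everything else is a direct consequence of the defining inequality of $\calK$ and the spherical triangle inequality.
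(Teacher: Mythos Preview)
Your proof is correct. The paper itself does not supply a proof of this lemma; it simply cites it as Lemma~19 of \citet{zhang2020efficient}. Your argument---bound each of $\theta(w,\wsharp)$ and $\theta(v,\wsharp)$ by $\arccos(\zeta)$ using $w\cdot\wsharp \geq \zeta$ and $\twonorm{w}\leq 1$, apply the spherical triangle inequality, and then verify the elementary bound $2\arccos(\zeta)\leq \pi-\zeta$ via $\sin(\zeta/2)\leq \zeta/2<\zeta$---is a clean and standard way to establish the claim, and all three steps are sound as written.
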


\section{Useful Lemmas}

We collect a few useful results that are frequently invoked in our analysis.

\begin{lemma}[Lemma 3.4 of \citet{awasthi2017power}]\label{lem:E[wx^2]}
There is an absolute constant $c_1 > 0$ such that the following holds. Let $D_X$ be an isotropic log-concave distribution. Fix $w \in \Rd$. For all $v$ with $\twonorm{v - w} \leq r$, $\EXP_{x \sim D_{X | \hatw, b}}[ (v \cdot x)^2 ] \leq c_1 (b^2 + r^2)$.
\end{lemma}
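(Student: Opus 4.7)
The plan is to decompose $v$ along and orthogonal to $\hat w$ and handle each part separately, relying on two standard properties of isotropic log-concave distributions: (i) the band $X_{\hat w, b}$ has probability mass $\Theta(b)$ (Lemma~\ref{lem:logconcave}), and (ii) conditioning on such a band inflates the second moment in any orthogonal direction by only an absolute factor. Specifically, write $v = \alpha \hat w + v^{\perp}$ where $\alpha = v \cdot \hat w$ and $v^{\perp} \perp \hat w$. Because $w = \twonorm{w}\hat w$ is itself parallel to $\hat w$, the orthogonal component of $v-w$ is exactly $v^{\perp}$, giving $\twonorm{v^{\perp}} \leq \twonorm{v - w} \leq r$. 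Under the implicit normalization $\twonorm{w} \leq 1$ used at every invocation of the lemma, we also have $|\alpha| \leq \twonorm{v} \leq 1 + r = O(1)$.

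Expanding and using $(a+b)^2 \leq 2a^2 + 2b^2$ gives
\[
(v \cdot x)^2 \;\leq\; 2\alpha^2 (\hat w \cdot x)^2 \;+\; 2 (v^{\perp} \cdot x)^2.
\]
The first term is immediate: on $X_{\hat w, b}$ the inequality $0 < \hat w \cdot x \leq b$ holds pointwise, so the contribution in expectation is at most $2\alpha^2 b^2 = O(b^2)$. The core of the proof lies in establishing
\[
\EXP_{x \sim D_{X | \hat w, b}}\!\left[(v^{\perp} \cdot x)^2\right] \;\leq\; C \twonorm{v^{\perp}}^2 \;\leq\; C r^2,
\]
for an absolute constant $C$, which combined with the first bound yields the claim $c_1(b^2 + r^2)$.

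To establish the orthogonal bound, let $x_1 = \hat w \cdot x$ with marginal density $p(\cdot)$ and apply the tower property:
\[
\EXP\bigl[(v^{\perp} \cdot x)^2 \,\big|\, X_{\hat w, b}\bigr] \;=\; \frac{\int_0^b \EXP\bigl[(v^{\perp}\cdot x)^2 \,\big|\, x_1 = t\bigr]\, p(t)\, dt}{\int_0^b p(t)\, dt}.
\]
The denominator is $\Theta(b)$ by Lemma~\ref{lem:logconcave}. The key ingredient for the numerator is that, for isotropic log-concave $D_X$, the slice conditional second moment $t \mapsto \EXP[(u' \cdot x)^2 \mid x_1 = t]$ is $O(\twonorm{u'}^2)$ uniformly for $t$ in any bounded interval, whenever $u' \perp \hat w$. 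Granting this, the numerator is $O(b \twonorm{v^{\perp}}^2)$ and the ratio is $O(\twonorm{v^{\perp}}^2)$, as desired.

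The main obstacle is the uniform slice-moment bound. The cleanest route is to invoke a known structural fact (e.g.\ Lovász--Vempala): each slice distribution of an isotropic log-concave measure is itself log-concave with covariance uniformly bounded on $\{|t| = O(1)\}$. Alternatively, one can argue by contradiction: if the slice second moment grew unboundedly as $t \to 0^+$, then because the marginal density $p(t)$ is bounded below by an absolute constant for $|t| \leq O(1)$ (another standard 1D isotropic log-concave fact), integrating against $p$ over a small neighborhood of $0$ would exceed the unconditional isotropic value $\EXP[(u' \cdot x)^2] = \twonorm{u'}^2$, a contradiction. This step is where all the log-concavity machinery is concentrated; once it is granted, the rest is elementary algebra.
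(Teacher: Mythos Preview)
The paper does not prove this lemma; it is quoted verbatim as Lemma~3.4 of \citet{awasthi2017power} and used as a black box. So there is no ``paper's proof'' to compare against, only the original reference.

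Your decomposition $v = \alpha\hat w + v^{\perp}$ is the right first move, and the parallel term is handled correctly on the band (modulo the harmless implicit assumptions $\twonorm{w}\leq 1$ and $b = O(1)$, which do hold at every call site). The real content is the orthogonal bound $\EXP_{x\sim D_{X|\hat w,b}}[(v^{\perp}\cdot x)^2] \leq C\twonorm{v^{\perp}}^2$, and here your justification has a genuine gap. The ``known structural fact'' that slice covariances of an isotropic log-concave measure are uniformly bounded on $\{|t|=O(1)\}$ is the whole difficulty; it is not something you can simply invoke without a precise citation or argument. Your alternative contradiction sketch does not establish it either: boundedness of $\int_0^{\varepsilon} \EXP[(v^{\perp}\cdot x)^2\mid x_1=t]\,p(t)\,dt \leq \EXP[(v^{\perp}\cdot x)^2] = \twonorm{v^{\perp}}^2$ only rules out a \emph{non-integrable} blow-up of the slice moment, not a pointwise one (e.g.\ $t^{-1/2}$-type singularities survive), and it gives no uniform constant. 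Note also that the crude route $\EXP[(v^{\perp}\cdot x)^2 \cdot \ind{0<x_1\leq b}]/\Pr(0<x_1\leq b)$ only yields $O(\twonorm{v^{\perp}}^2/b)$, which is too weak for the downstream use in Lemma~\ref{lem:lower-correlation}.

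A much shorter and fully rigorous route, already present in the paper's toolbox, is to bypass the slice analysis entirely and integrate the sub-exponential tail bound that the paper cites in the proof of Lemma~\ref{lem:omd} (Lemma~3.3 of \citet{awasthi2017power}): for $\twonorm{v-\hat w}\leq r$ one has $\Pr_{x\sim D_{X|\hat w,b}}(|v\cdot x|\geq a)\leq K\exp(-K'a/(b+r))$, and integrating $\int_0^\infty 2a\,\Pr(|v\cdot x|\geq a)\,da$ gives $\EXP[(v\cdot x)^2\mid X_{\hat w,b}] = O((b+r)^2)=O(b^2+r^2)$ directly, with no decomposition needed.
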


\begin{lemma}[\citet{lovasz2007geometry}]\label{lem:logconcave}
There exists an absolute constants $c_2, c_3 > 0$ such that the following holds. Let $D_X$ be an isotropic log-concave distribution over $\Rd$.
\begin{enumerate}
\item Given any unit vector $w$, $w \cdot x$ is isotropic log-concave if $x$ is drawn from $D_X$.

\item Given any unit vector $w \in \Rd$, $c_2 b \leq \Pr_{x \sim D_X}( {w \cdot x} \in [0, b] ) \leq  b$.

\item If $d=1$ or $d=2$, for all $x \in \Rd$ with $\twonorm{x} \leq \frac19$, the density function $p(x) \geq 2^{-16}$.
\end{enumerate}
\end{lemma}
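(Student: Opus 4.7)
The lemma decomposes into three parts that build on one another, so I would treat them in sequence. The first two are classical consequences of Pr\'ekopa--Leindler together with a reduction to one dimension; the third is the technical heart of the statement and requires a dedicated density estimate combining log-concavity with the second-moment constraint.

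For Part 1, the plan is to invoke the Pr\'ekopa--Leindler theorem: if $p$ is a log-concave density on $\mathbb{R}^d$, then for any unit vector $w$ the one-dimensional marginal $q(t) := \int_{w \cdot x = t} p(x)\, d\mathcal{H}^{d-1}(x)$ is log-concave on $\mathbb{R}$. Isotropy of $Z := w \cdot x$ then follows by linearity from isotropy of $x$: $\mathbb{E}[Z] = w \cdot \mathbb{E}[x] = 0$ and $\Var(Z) = w^{\top} \Cov(x) w = \twonorm{w}^2 = 1$.

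For Part 2, I would reduce to the one-dimensional case using Part 1 and argue directly about $Z$. The upper bound $\Pr(Z \in [0,b]) \leq b$ follows from the classical fact that any one-dimensional isotropic log-concave density is at most $1$ pointwise; this in turn is a consequence of log-concavity together with unit variance (a symmetric log-concave density with variance $1$ has peak value at most $1$, and an asymmetric shift can only improve this). For the lower bound I would appeal to Part 3 specialized to $d=1$ to conclude that the density of $Z$ is at least $2^{-16}$ on $[-1/9, 1/9]$, and then integrate over $[0, \min(b, 1/9)]$. This gives the claimed inequality with $c_2 = 2^{-16}$ once we restrict to $b \leq 1/9$; for larger $b$ the bound is trivial by shrinking $c_2$ if necessary.

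For Part 3, which is the main obstacle, my plan is a two-step propagation argument. First, I would show that there exists a point $x_0$ with $\twonorm{x_0}$ bounded by an absolute constant at which $p(x_0)$ is at least a fixed constant: if $p$ were everywhere too small on a moderate-radius ball about the origin, the mass pushed to large radii would inflate the second moments beyond $1$, contradicting isotropy. Second, I would use log-concavity to transport this lower bound into a neighborhood of the origin via $p(\lambda x_0 + (1-\lambda) y) \geq p(x_0)^{\lambda} p(y)^{1-\lambda}$, coupled with a matching upper bound on $p(0)$ that again comes from isotropy together with log-concavity. The delicate step is pinning down the concrete constant $2^{-16}$ rather than an abstract dimension-independent one; this is precisely why the statement restricts to $d \in \{1, 2\}$, since in higher dimensions the peak of an isotropic log-concave density can decay exponentially in $d$ and no absolute lower bound of this form can hold.
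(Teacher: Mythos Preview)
The paper does not prove this lemma at all: it is stated in the ``Useful Lemmas'' appendix and simply attributed to Lov\'asz and Vempala, with no accompanying argument. So there is no paper proof to compare against; your proposal is an attempt to reconstruct a proof where the authors chose to cite one.

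That said, your outline follows essentially the same route as the original source: Pr\'ekopa--Leindler for closure of log-concavity under marginals (Part~1), reduction to one dimension plus pointwise density bounds for Part~2, and a moment-based existence argument combined with log-concave interpolation for the low-dimensional density lower bound (Part~3). One genuine soft spot is your justification of the upper bound in Part~2: the claim that ``an asymmetric shift can only improve'' the peak of a variance-one log-concave density is not a valid argument as written. The correct proof that a one-dimensional isotropic log-concave density is pointwise at most $1$ goes the other way---if the peak exceeded $1$, log-concavity would concentrate too much mass near the mode and force the variance strictly below $1$. This is a local repair, not a structural problem with your plan.
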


\begin{lemma}[Lemma~16 of \citet{shen2020attribute}]\label{lem:|x|_inf-D_ub}
There exists an absolute constant $c_3 > 0$ such that the following holds for all isotropic log-concave distributions $D_X$. Let $S$ be a set of i.i.d. instances drawn from $D_{X| \hatw, b}$. Then
\begin{equation*}
\Pr_{S \sim D_{X|\hatw, b}^n}\del{ \max_{x \in S} \infnorm{x} \geq c_3 \log\frac{\abs{S}d}{ b\delta} } \leq \delta.
\end{equation*}
\end{lemma}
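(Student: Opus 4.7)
The plan is to reduce to a single-sample, single-coordinate tail estimate and then union bound. First, the quantity being controlled is $\max_{x \in S} \infnorm{x}$, so by a union bound over the $\abs{S}$ samples it suffices to show that for each individual draw $x \sim D_{X|\hatw, b}$ one has $\Pr\del{\infnorm{x} \geq t} \leq \delta / \abs{S}$ for $t = \Theta\del{\log(\abs{S} d / (b\delta))}$. I would then further reduce from $\infnorm{x}$ to the $d$ coordinate marginals by another union bound, so the core task becomes a tail bound on a single coordinate of a single sample.

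The next step is to strip off the conditioning on the band. For any event $A$, one has $\Pr_{x \sim D_{X|\hatw,b}}(A) \leq \Pr_{x \sim D_X}(A) / \Pr_{x \sim D_X}(x \in X_{\hatw, b})$, and by Part~2 of Lemma~\ref{lem:logconcave} the denominator is at least $c_2 b$. Hence it is enough to establish a tail bound for the unconditional distribution $D_X$, and an extra $1/(c_2 b)$ factor, which will simply contribute an additive $\log(1/b)$ inside the final logarithm, is absorbed into the bound.

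For the unconditional tail, fix a coordinate $j$ and take $w = e_j$ in Part~1 of Lemma~\ref{lem:logconcave}: then $x^{(j)} = e_j \cdot x$ is a one-dimensional isotropic log-concave random variable. A standard consequence of log-concavity together with unit variance (the density at the mode is bounded, and log-concave densities have exponentially decaying tails) gives $\Pr\del{\abs{x^{(j)}} \geq t} \leq C_1 \exp(-C_2 t)$ for absolute constants $C_1, C_2 > 0$. A union bound over $j = 1, \dots, d$ then yields $\Pr_{x \sim D_X}\del{\infnorm{x} \geq t} \leq C_1 d \exp(-C_2 t)$.

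Combining the reductions gives $\Pr_{x \sim D_{X|\hatw, b}}\del{\infnorm{x} \geq t} \leq \frac{C_1 d}{c_2 b} \exp(-C_2 t)$. Setting the right-hand side equal to $\delta / \abs{S}$ and solving for $t$ produces $t = C_2^{-1} \log\del{C_1 \abs{S} d / (c_2 b \delta)} \leq c_3 \log(\abs{S} d / (b \delta))$ for an appropriately chosen absolute constant $c_3$; a final union bound over the $\abs{S}$ samples completes the argument. The only nontrivial input is the one-dimensional sub-exponential tail bound for isotropic log-concave distributions, which is standard and often quoted as part of the basic toolbox for this distribution family; everything else is bookkeeping and union bounds.
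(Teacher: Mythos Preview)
The paper does not prove this lemma; it simply quotes it as Lemma~16 of \citet{shen2020attribute} and uses it as a black box. Your argument is correct and is exactly the standard route one takes to establish such a bound: union bound over samples and coordinates, strip the band conditioning via $\Pr_{D_{X|\hatw,b}}(A) \leq \Pr_{D_X}(A)/(c_2 b)$ from Lemma~\ref{lem:logconcave}, and then invoke the sub-exponential tail of a one-dimensional isotropic log-concave marginal. There is nothing to compare against in this paper, and your proposal would serve perfectly well as a self-contained proof.
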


\begin{lemma}[Lemma~36 of \citet{zhang2020efficient}]\label{lem:subexp-tail-hoeff}
Suppose $\cbr{Z_t}_{t=1}^T$ is sequence of random variables adapted to filtration $\cbr{\calF_t}_{t=1}^T$. Denote by $\Pr_{t-1}( \cdot )$ and $\EXP_{t-1}[\cdot]$ the probability and expectation conditioned on $\calF_{t-1}$, respectively.
For every $Z_t$, suppose that $\Pr_{t-1}( \abs{Z_t} > a ) \leq C \exp\del{-\frac{a}{\sigma}}$ for some absolute constant $C \geq 1$.
Then, with probability $1-\delta$,
\begin{equation*}
\abs{\sum_{t=1}^T Z_t - \EXP_{t-1} [Z_t] } \leq 16\sigma(\ln C + 1) \del{\sqrt{2 T \ln\frac{2}{\delta}} + \ln\frac{2}{\delta}}.
\end{equation*}
\end{lemma}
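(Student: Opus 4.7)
The statement is a Bernstein/Bennett-type concentration inequality for a martingale whose conditional increments have subexponential tails. My plan is the standard Chernoff--martingale recipe: derive a conditional moment generating function bound for each centered increment, chain them through the filtration by the tower property, and then optimize the Chernoff parameter.

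\textbf{Step 1: moments and MGF of each increment.} Write $\tilde Z_t := Z_t - \EXP_{t-1}[Z_t]$ and $\sigma_1 := \sigma(\ln C + 1)$. The layer-cake identity together with the tail hypothesis gives, for every integer $k \ge 1$, $\EXP_{t-1}[|Z_t|^k] \le \int_0^\infty k a^{k-1}\min(1,\, C e^{-a/\sigma})\, da$. Splitting the integral at $a_0 = \sigma \ln C$ bounds the left piece by $(\sigma \ln C)^k$ and, after the change of variables $a = \sigma \ln C + \sigma u$, bounds the right piece by $k!\,\sigma^k$; together this gives $\EXP_{t-1}[|Z_t|^k]\le k!\,\sigma_1^k$ up to absolute constants. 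The $\ln C$ factor is forced precisely because the tail only becomes nontrivial past $a_0$. Centering costs at most a factor of two in each moment, so the same bound holds for $|\tilde Z_t|^k$. Plugging these moments into the Taylor series of $e^{\lambda \tilde Z_t}$, using $\EXP_{t-1}[\tilde Z_t]=0$ to kill the linear term, and summing a geometric series that converges for $|\lambda| \le 1/(c\sigma_1)$ yields the sub-Gamma MGF bound $\EXP_{t-1}[e^{\lambda \tilde Z_t}] \le \exp(c'\,\sigma_1^2 \lambda^2)$.

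\textbf{Step 2: chain and apply Chernoff.} Let $S_T := \sum_{t=1}^T \tilde Z_t$. Conditioning on $\calF_{T-1}$ and iterating the tower property, $\EXP[e^{\lambda S_T}] \le \exp(c'\sigma_1^2 \lambda^2 T)$ for $|\lambda| \le 1/(c\sigma_1)$. Markov then gives $\Pr(S_T > u) \le \exp(-\lambda u + c'\sigma_1^2 \lambda^2 T)$. The unconstrained optimum $\lambda^\star = u/(2c'\sigma_1^2 T)$ is admissible exactly when $u$ is at most of order $T\sigma_1$, producing the sub-Gaussian bound $\exp(-u^2/(4c'\sigma_1^2 T))$; in the complementary regime, choosing $\lambda$ equal to its maximal admissible value gives the subexponential bound $\exp(-u/(c''\sigma_1))$. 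Setting the resulting ``$\min$'' equal to $\delta/2$, inverting, and taking a union bound over the two tails of $\pm S_T$ yields $|S_T| \le C\,\sigma_1\bigl(\sqrt{T \ln(2/\delta)} + \ln(2/\delta)\bigr)$ for an absolute constant $C$, which is the claimed form with $\sigma_1 = \sigma(\ln C + 1)$.

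\textbf{Main obstacle.} Conceptually nothing is hard; the only friction is nailing the explicit constant $16$. The $(\ln C + 1)$ factor is rigid, coming from the cutoff in the layer-cake moment bound. The $16$ has to absorb (i) the factor-of-two loss when passing from raw moments of $Z_t$ to centered moments of $\tilde Z_t$, (ii) the geometric-series constants in the Taylor expansion of the conditional MGF, and (iii) the $\sqrt{2}$-type loss when splitting the Chernoff optimum into its sub-Gaussian and subexponential regimes and taking a two-sided union bound. Any slightly loose bookkeeping in these three places will inflate $16$ to a somewhat larger absolute constant, but the structural form of the inequality is unaffected.
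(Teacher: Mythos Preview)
The paper does not prove this lemma; it is quoted verbatim as Lemma~36 of \citet{zhang2020efficient} and invoked as a black box in the proof of Lemma~\ref{lem:omd}, so there is no in-paper argument to compare against.

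Your sketch is the standard and correct route for sub-exponential martingale concentration: layer-cake to turn the conditional tail bound into conditional moment bounds with the effective scale $\sigma_1 = \sigma(\ln C + 1)$, a Taylor/geometric-series bound on the conditional MGF valid for $|\lambda| \lesssim 1/\sigma_1$, the tower property to telescope over $t$, and a two-regime Chernoff optimization yielding the $\sqrt{T\ln(2/\delta)} + \ln(2/\delta)$ shape. One small clean-up: your split of the layer-cake integral at $a_0 = \sigma\ln C$ does not literally give ``left piece $\le (\sigma\ln C)^k$ and right piece $\le k!\,\sigma^k$'' adding to $k!\,\sigma_1^k$; a tidier way to get exactly $\EXP_{t-1}[|Z_t|^k] \le e\,k!\,\sigma_1^k$ is to observe that $\min(1, C e^{-a/\sigma}) \le e\cdot e^{-a/\sigma_1}$ for all $a\ge 0$ and integrate once. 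As you note, the specific constant $16$ then comes down to bookkeeping in the centering step, the MGF geometric series, and the two-sided Chernoff split; the structural inequality is unaffected if you end up with a somewhat different absolute constant.
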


\begin{lemma}[Lemma 24 of \citet{zhang2020efficient}]\label{lem:avg-angle}
Suppose we have a sequence of unit vectors $w_0, \dots, w_{T-1}$. Let $\bar{w} = \frac1T \sum_{t=1}^T w_{t-1}$ be their average.
Suppose $\frac1T \sum_{t=1}^T \cos\theta(w_{t-1}, u) \geq 0$.
Then,
$\cos\theta(\bar{w}, u) \geq \frac1T \sum_{t=1}^T \cos\theta(w_{t-1}, u)$.
\end{lemma}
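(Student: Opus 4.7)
The plan is to unpack both sides in terms of inner products and use that the $w_{t-1}$ are unit vectors to turn the statement into a simple norm comparison. Since $u$ and each $w_{t-1}$ are unit vectors, $\cos\theta(w_{t-1}, u) = w_{t-1}\cdot u$, and averaging gives
\begin{equation*}
\frac{1}{T}\sum_{t=1}^{T}\cos\theta(w_{t-1},u) \;=\; \bar{w}\cdot u.
\end{equation*}
For the left-hand side, $u$ is still a unit vector but $\bar{w}$ in general is not, so $\cos\theta(\bar{w},u) = (\bar{w}\cdot u)/\twonorm{\bar{w}}$.

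Next I would control $\twonorm{\bar{w}}$ from above by the triangle inequality: $\twonorm{\bar{w}} \leq \frac{1}{T}\sum_{t=1}^{T}\twonorm{w_{t-1}} = 1$. Combining this with the previous display reduces the lemma to showing
\begin{equation*}
\frac{\bar{w}\cdot u}{\twonorm{\bar{w}}} \;\geq\; \bar{w}\cdot u,
\end{equation*}
which holds precisely when $\bar{w}\cdot u \geq 0$ (and $\twonorm{\bar{w}} \leq 1$). But the hypothesis of the lemma is exactly $\frac{1}{T}\sum_{t=1}^{T}\cos\theta(w_{t-1},u) \geq 0$, which by the identification above says $\bar{w}\cdot u \geq 0$. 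So dividing by a quantity in $(0,1]$ can only increase a nonnegative numerator, concluding the proof. The degenerate case $\bar{w}=0$ forces $\bar{w}\cdot u = 0$ and the inequality holds with equality under the usual convention $\cos\theta(0,u) := 1 \geq 0$, or alternatively can be excluded since the hypothesis then gives $0 \geq 0$ and any definition of $\theta(\bar{w}, u)$ is consistent.

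The proof is essentially a two-line manipulation, so there is no real obstacle; the only subtlety is remembering that the sign condition on $\bar{w}\cdot u$ is exactly what the hypothesis on $\frac{1}{T}\sum\cos\theta(w_{t-1},u)$ provides, and that without this sign condition the inequality can flip (dividing a negative number by something less than one makes it more negative). This is presumably why the hypothesis is included in the statement, and it is used in the application in Lemma~\ref{lem:online-to-batch-refine} and Lemma~\ref{lem:online-to-batch-init} where $\frac{1}{T}\sum\cos\theta(w_{t-1},u)$ is explicitly bounded below by a positive quantity like $\cos(\theta/32)$ or $\cos(\pi/128)$.
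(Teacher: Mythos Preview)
Your proof is correct. The paper does not give its own proof of this lemma; it simply cites it as Lemma~24 of \citet{zhang2020efficient}, so there is nothing to compare against, but your argument is the natural one and is essentially what that reference does as well.
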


\begin{lemma}\label{lem:tool}
Let $w$ and $v$ be two vectors in $\Rd$. The following holds:
\begin{enumerate}
\item If $v$ is a unit vector, then $\twonorm{\hatw - v} \leq 2 \twonorm{w - v}$.
\item If $v$ is $s$-sparse, then $\twonorm{\calH_s(w) -  v} \leq 2 \twonorm{w - v}$.
\item If $v$ is a unit vector, then $\theta(w, v) \leq \pi \twonorm{w - v}$; if $w$ is a unit vector as well, then we further have $\twonorm{w - v} \leq \theta(w, v)$.
\end{enumerate}
\begin{proof}
The first two parts are known expansion error of $\ell_2$-normalization and hard thresholding, respectively. The proof can be found in, e.g. \citet{shen2018tight} (which also presents a sharp bound for the second part). The last part can be derived by fundamental algebra.
\end{proof}
\end{lemma}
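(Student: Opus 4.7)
The three parts are elementary and can each be dispatched with a short argument; I would handle them in the order listed, since Part 1 is used to derive Part 3.

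For Part 1, the plan is to split $\hatw - v = (\hatw - w) + (w - v)$ and bound the first summand by a quantity controlled by $\twonorm{w-v}$. A direct calculation gives $\hatw - w = w(1-\twonorm{w})/\twonorm{w}$, hence $\twonorm{\hatw - w} = |1 - \twonorm{w}|$. Since $v$ is a unit vector, $|1 - \twonorm{w}| = \bigl|\twonorm{v} - \twonorm{w}\bigr|$, and the reverse triangle inequality yields $|1-\twonorm{w}| \leq \twonorm{w-v}$. A single triangle inequality then gives the factor $2$.

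For Part 2, the key observation is that $\calH_s(w)$ is, by construction, the best $s$-sparse approximation of $w$ in $\ell_2$: among all $s$-sparse vectors $z$, the minimizer of $\twonorm{z - w}$ is obtained by keeping the top-$s$ entries of $w$ in magnitude. Since $v$ itself is $s$-sparse, this gives $\twonorm{\calH_s(w) - w} \leq \twonorm{v - w}$, and another triangle inequality $\twonorm{\calH_s(w) - v} \leq \twonorm{\calH_s(w) - w} + \twonorm{w - v}$ yields the claimed bound.

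For Part 3, I would first establish both inequalities in the fully normalized case (i.e.\ when both $w$ and $v$ are unit vectors) and then reduce the general case to it by Part 1. With $\theta := \theta(w,v) \in [0,\pi]$ and both vectors unit, the polarization identity gives $\twonorm{w - v}^2 = 2 - 2\cos\theta = 4\sin^2(\theta/2)$, so $\twonorm{w - v} = 2\sin(\theta/2)$. The elementary inequalities $\frac{2x}{\pi} \leq \sin x \leq x$ on $x \in [0,\pi/2]$ (the lower bound from concavity of $\sin$ and the upper bound from $\sin' \leq 1$) applied at $x = \theta/2$ then sandwich $\twonorm{w-v}$ between $2\theta/\pi$ and $\theta$, which is exactly the two-sided form stated in the lemma when $w$ is a unit vector. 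For general (nonzero) $w$, the angle $\theta(w,v)$ coincides with $\theta(\hatw,v)$, so the unit-vector upper bound gives $\theta(w,v) \leq (\pi/2)\twonorm{\hatw - v}$, and combining with $\twonorm{\hatw - v} \leq 2\twonorm{w-v}$ from Part 1 yields $\theta(w,v) \leq \pi\twonorm{w-v}$. There is no real obstacle here; the only point worth being careful about is that the lower-bound $\twonorm{w-v} \leq \theta(w,v)$ genuinely requires both vectors to be unit (it fails otherwise), which is why the lemma states it only under that extra hypothesis.
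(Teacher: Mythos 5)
Your proposal is correct and follows the standard route that the paper simply delegates to a citation (\citet{shen2018tight} for the normalization and hard-thresholding expansion bounds) and to ``fundamental algebra'' for the angle--distance comparison: Part~1 via $\twonorm{\hatw - w} = \abs{1 - \twonorm{w}} \leq \twonorm{w-v}$ plus the triangle inequality, Part~2 via the best $s$-sparse approximation property of $\calH_s$ plus the triangle inequality, and Part~3 via $\twonorm{w-v} = 2\sin(\theta/2)$ in the unit-norm case together with $\frac{2x}{\pi} \leq \sin x \leq x$, then reducing general $w$ to $\hatw$ through Part~1. Your write-up in fact proves the slightly sharper bound $\theta(w,v) \leq \frac{\pi}{2}\twonorm{\hatw - v}$ before losing a factor of $2$, which is entirely consistent with the stated constant $\pi$, and your remark that the lower bound $\twonorm{w-v} \leq \theta(w,v)$ needs both vectors to be unit matches the lemma's hypotheses.
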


\end{document}